\DeclareSymbolFontAlphabet{\Bbb}{AMSb}
\newtheorem{theorem}{Theorem}[section]
\newtheorem{proposition}[theorem]{Proposition}
\newtheorem{lemma}[theorem]{Lemma}
\newtheorem{definition}[theorem]{Definition}
\newtheorem{example}[theorem]{Example}
\newcommand{\eins}{\boldsymbol{1}}
\newcommand{\N}{\Bbb{N}}    % Sonderzeichen fr
\newcommand{\R}{\Bbb{R}}    % Sonderzeichen fr
\newcommand{\Rn}{\Bbb{R}^d}
\newcommand{\E}{\Bbb{E}}    % Sonderzeichen fr
\newcommand{\ca}[1]{{\mathcal #1}}
\newcommand{\RP}[2]{{{\cal R}_{#1,P}(#2)}}
\newcommand{\RPB}[1]{{{\cal R}_{#1,P}^{*}}}
\newcommand{\Rx}[3]{{{\cal R}_{#1,#2}(#3)}}
\newcommand{\RPxB}[2]{{{\cal R}_{#1,P,#2}^*}}
\newcommand{\fP}{f_{P,\lb}}
\newcommand{\fPn}{f_{P,\lb_n}}
\newcommand{\fPnn}{f_{P_n,\lb_n}}
\newcommand{\fT}{f_{T,\lb}}
\newcommand{\fQ}{f_{Q,\lb}}
\newcommand{\fTno}{f_{T_n(\om),\lb_n}}
\newcommand{\fTnso}{f_{T_{m}(\om),\lb_{m}}}
\newcommand{\fTnon}{f_{T_n(\om),\lb}}
\newcommand{\fTnom}{f_{T_m(\om),\lb}}
\newcommand{\lclass}{{{L}_{\mathrm{class}}}}
\newcommand{\lls}{{{L}_{\mathrm{lsquares}}}}
\newcommand{\mysetminus}{\!\setminus\!}
\newcommand{\rem}[1]{}
\def \B         {{\cal B}}                % Borelsche s-Algebra
\def \Om        { \Omega }
\def \om        { \omega }
\def \lb        { \lambda }
\def \a         { \alpha }
\def \b         { \beta }
\def \e         { \varepsilon }
\def \s         { \sigma }
\def \t         { \tau }
\def \d         { \delta }
\def \p         { \varphi }
\def \P         {\Phi }
\newcommand{\Leq}{\ \leq \ }
\newcommand{\Eq}{\ = \ }
\newcommand{\Defi}{\ := \ }
\newlength{\fixboxwidth}
\newcommand{\fix}[1]{\marginpar{%
    \fbox{\parbox{\fixboxwidth}{\tiny #1}}}}
\newcommand{\BlackBox}{\rule{1.5ex}{1.5ex}}  % end of proof
{\end{list}}
\newenvironment{proof}{\begin{list}{{\bf \em Proof: }}%
{\setlength{\labelsep}{0pt}\setlength{\leftmargin}{0pt}\setlength{\labelwidth}{0pt}}\item}%
{\hfill\BlackBox\end{list}}
{\hfill\BlackBox\end{list}}
\newenvironment{proofof}[1]{\begin{list}{{\bf \em Proof of #1: }}%
{\setlength{\labelsep}{0pt}\setlength{\leftmargin}{0pt}\setlength{\labelwidth}{0pt}}\item}%
{\hfill\BlackBox\end{list}}
{\hfill\BlackBox\end{list}}
\newcommand{\Int}{\int\limits}
\DeclareMathOperator{\sign}{sign}
\newcommand{\rank}[1]{{\rm rank} \ #1}
\newcommand{\Lx}[2]{L_{#1} (#2)}
\newcommand{\Lxx}[2]{{L}_{#1} (#2)}
\newcommand{\norm}[1] {\Bigl\Vert #1 \Bigr\Vert}
\newcommand{\snorm}[1] {\Vert #1 \Vert}
\newcommand{\mnorm}[1] {\bigl\Vert #1 \bigr\Vert}
\newcommand{\inorm}[1]{\Vert #1 \Vert_{\infty}}
\newcommand{\sL}[2]{{\cal L}_{#1}(#2)}
\newcommand{\sLp}[1]{\mbox{${\cal L}_p(\mu)$ }}
\begin{document}

\author{Ingo Steinwart\footnote{Corresponding author}, Don Hush, and Clint Scovel\\
Modeling, Algorithms and Informatics Group, CCS-3\\
MS B256\\
Los Alamos National Laboratory\\
Los Alamos, New Mexico 87545, USA\\
Tel.: 001-505-665-7914\\
Fax.: 001-505-667-1126\\
\url{{ingo,dhush,jcs}@lanl.gov}\\[5mm]
}
\title{Learning from dependent observations}

\maketitle

%\begin{center}
%{\Large \bf Very early and incomplete version. Do not distribute!}
%\end{center}

\begin{abstract}
In most papers establishing consistency for learning algorithms it is assumed that the observations 
used for training are realizations of an i.i.d.~process. In this paper we go far beyond this classical framework 
by showing that support vector machines (SVMs) essentially only require that the data-generating process satisfies a 
certain law of large numbers. We then consider the learnability of SVMs for $\a$-mixing 
(not necessarily stationary) processes for both 
classification and regression, where for the latter we explicitly allow unbounded noise. 
\end{abstract}

{\bf Keywords:} Support vector machine, Consistency, Non-stationary mixing process, \\
\hspace*{15.3ex}Classification, Regression

\section{Introduction}

In recent years
Support Vector Machines (SVMs) have become one of the most widely used
algorithms for classification and regression problems.
Besides their good performance in practical applications they also enjoy 
a good theoretical justification in terms of both universal consistency (see \cite{Ste02a,Zhang04a,Ste03d,ChSt04b})
and learning rates (see \cite{ChWuYiZh04a,StSc05a,BlBoMa04a,KoBe05a,StSc04a}) if
the training samples come from an i.i.d.~process.
However, often this i.i.d.~assumption cannot be strictly justified in real-world problems. 
For example, many machine learning applications such as market prediction, system diagnosis, and
speech recognition are inherently temporal in nature, and consequently not i.i.d.~processes.
Moreover, samples are often gathered from different sources and hence it seems unlikely that 
they are identically distributed. Although SVMs have no theoretical justification in such non-i.i.d.~scenarios 
they are often applied successfully. 
One of the goals of this work is explain this success by 
establishing consistency results for 
SVMs under somewhat minimal assumptions on the data generating process. Namely, we show that 
for any data-generating process that satisfies certain laws of large numbers there exists a sequence of regularization parameters
such that the corresponding SVM is consistent.
By general negative results (see \cite{Nobel99a})
on universal consistency for stationary ergodic processes this sequence of regularization
parameters must depend on the stochastic properties of the data-generating process and cannot be adaptively chosen.
However, we show that if the process satisfies certain mixing properties such as polynomially decaying 
$\a$-mixing coefficients
(see the definitions in the following sections)
then a suitable regularization sequence can be chosen a-priori. In addition, a side-effect of our analysis 
is that it provides consistency for SVMs using Gaussian kernels even if the common compactness assumption
of the input space is violated. Consequently, our consistency results for $\a$-mixing processes generalizes 
earlier  consistency results of \cite{Ste02a,Zhang04a,Ste03d}
with respect to both the compactness assumption on $X$ and the i.i.d.~assumption on the data-generating process.

Relaxations of the independence assumption have been considered for quite a while in both the machine learning and the statistical 
literature. For example PAC-learning for stationary $\bar \b$-mixing  processes 
has been investigated in \cite{Vidyasagar02}, and more recently, consistency of regularized boosting for classification
was established for such processes. 
For a larger class of processes, namely $\a$-mixing but %\fix{Don, Clint: I cannot really improve this sentence!}
not necessarily stationary processes, consistency of kernel density estimators
was shown in \cite{Irle97a}. For bounded, stationary processes with exponentially decaying $\bar \a$-mixing coefficients
a consistent method for one-step-ahead prediction (also known as ``static autoregressive forecasting'', see \cite{GyKoKrWa02})
was presented in \cite{MoMa98a}. Moreover, for this prediction problem \cite{Meir00a} establishes consistency for a certain 
structural risk minimization approach under the assumption that the process is stationary and has polynomially decaying 
$\bar \b$-mixing rates. For further results and references we refer to \cite{GyHaSaVi89, Bosq98}.

Relaxations of the stationarity of the process are less common. In fact, to our best knowledge \cite{Irle97a}
is the only work which deals with such processes. One of the reasons for this lack of literature may be the fact that for 
non identically distributed observations there is no obvious way to define a reasonable risk functional which resembles the 
idea of ``average future error''.
On the other hand, it seems obvious that learning methods based on a modified empirical risk minimization procedure 
require at least that the process satisfies certain laws of large numbers. 
Interestingly, we will show that for processes satisfying such laws of large numbers there is always a ``limit'' distribution
which can be used to define a reasonable risk functional. 
Moreover, for many interesting classes of processes the existence of such a limit distribution turns out to be equivalent
to a law of large numbers.

The rest of this work is organized as follows:
In Section \ref{general} we will define the notions  ``laws of large numbers'' and ``limit'' distributions for stochastic processes.
We then discuss the relationship between these concepts and consider specific classes of stochastic processes that satisfy these
definitions. We then recall some basic classes of loss functions and define consistency of learning algorithms for stochastic 
processes satisfying certain laws of large numbers. Finally, we show that SVMs can be made consistent for such processes.
In Section \ref{mixing} we then recall various mixing coefficients for stochastic processes. 
These coefficient are then used to establish consistency results for SVMs with a-priori chosen regularization sequence.
Finally, the proofs of our results can be found in Section \ref{proofs}.

\section{Consistency for Processes satisfying a Law of Large Numbers}\label{general}

The aim of this section is to show that SVMs can be made consistent 
whenever the data-generating process satisfies a certain type of law of large numbers (LLNs).
To this end we first recall some notions for stochastic processes and  introduce 
these laws of large numbers in Subsection \ref{s-process}. Some examples of processes satisfying LLNs
are then presented in Subsection \ref{s-process-ex}. In Subsection \ref{loss}
we then recall some important notions for loss functions and risks. We also define consistency 
of learning algorithms for data-generating processes that satisfy a law of large numbers.
Finally, we present and discuss our consistency results for SVMs in Subsection \ref{consist-svm}.

\subsection{Law of Large Numbers for Stochastic Processes}\label{s-process}

In this subsection we mainly introduce laws of large numbers for general, not necessarily 
stationary stochastic processes. The concepts we will present seem to be
quite natural and elementary, and therefore one would expect that they have already been
introduced elsewhere. Surprisingly, however, we were not able to find any 
exposition that covers major parts of the material of this section, and thus we discuss  the following 
notions in some detail.

Let us begin with some notations.
Given a measurable space $Z$ we write  $\sL 0 Z$ for the set of all measurable functions $f:Z\to \R$, and $\sL \infty Z$ for
the set of all bounded measurable functions $f:Z\to \R$. 
Moreover,  for a set $B\subset Z$ we write $\eins_B$ for its 
indicator function, i.e.~$\eins_B:Z\to \{0,1\}$ with $\eins_B(z)=1$ if and only if $z\in B$.
Let us now assume that we also have a probability space $(\Om,\ca A,\mu)$  and a
measurable map $T:\Om \to Z$.  Then $\s(T)$ denotes the smallest $\s$-algebra on $\Om$ for which
$T$ is measurable. Moreover, $\mu_T$ denotes the $T$-image measure
of $\mu$, which is defined by $\mu_T(B):= \mu(T^{-1}(B))$, $B\subset Z$ measurable. 

Again, let $(\Om,\ca A,\mu)$ be a probability space and 
$(Z,\ca B)$ be a measurable space. 
Recall that for a  {\em stochastic process\/} 
$\ca Z:=(Z_i)_{i\geq 1}$, i.e.~a sequence of measurable maps $Z_i:\Om \to Z$, $i\geq 1$,
the map $\ca Z:\Om \to Z^\N$ defined  by $\om\mapsto (Z_i(\om))_i$
is   $(\ca A,\B^\N)$-measurable.
Consequently,  $\ca Z$ has an image measure 
$\mu_{\ca Z}$ which is given by 
$\mu_{\ca Z}(B) := \mu(\ca Z^{-1}(B))$ for all  $B\subset \ca B^\N$.

Furthermore, recall that $\ca Z$
is called {\em identically distributed\/} if $\mu_{Z_i} = \mu_{Z_j}$ for all 
$i,j\geq 1$,  and {\em stationary in the wide sense\/} if $\mu_{(Z_{i_1+i}, Z_{i_2+i})} = \mu_{(Z_{i_1}, Z_{i_2})}$
for all $i_1,i_2,i\geq 1$. Moreover, $\ca Z$ is said to be 
{\em stationary\/} if
$\mu_{(Z_{i_1+i},\dots,Z_{i_n+i})} = \mu_{(Z_{i_1},\dots,Z_{i_n})}$
for all $n,i,i_1,\dots,i_n\geq 1$.

%Let us further recall the notion of stochastic processes:

%\begin{definition}
%Let $(\Om,\ca A,\mu)$ be a probability space and 
%$Z$ be a measurable space. A sequence  $\ca Z:=(Z_i)_{i\geq 1}$ of measurable maps $Z_i:\Om \to Z$, $i\geq 1$,
%is called a {\em $Z$-valued stochastic process on $\Om$.}
%\end{definition}

%Let $\ca Z:=(Z_i)_{i\geq 1}$ be a $Z$-valued stochastic process on $\Om$ and $\ca B$ be the $\s$-algebra of $Z$. Then it is well-known that 
%the map $\ca Z:\Om \to Z^\N$ defined by $\om\mapsto (Z_i(\om))_i$
% is $(\ca A,\B^\N)$-measurable, and hence $\ca Z$ has an image measure 
%$\mu_{\ca Z}$ which is given by 
%$\mu_{\ca Z}(B) := \mu(\ca Z^{-1}(B))$ for all  $B\subset \ca B^\N$. With the help of the marginal distributions of $\mu_{\ca Z}$
%we can now recall some notions related to stationarity:

\rem{
\begin{definition}%\fix{vgl. auch mit stationar in the wide sense, Krengel, p.32}
Let $(\Om,\ca A,\mu)$ be a probability space,
$Z$ be a measurable space, and $\ca Z:=(Z_i)_{i\geq 1}$ be a $Z$-valued stochastic process on $\Om$.
We say that  $\ca Z$  is 
\begin{enumerate}
\item {\em identically distributed} if for all $i,j\geq 1$ we have
	$
	\mu_{Z_i} = \mu_{Z_j}
	$.
\item {\em stationary in the wide sense} if for all $i_1,i_2,i\geq 1$ we have
	$
	\mu_{(Z_{i_1+i}, Z_{i_2+i})} = \mu_{(Z_{i_1}, Z_{i_2})}
	$.
\item {\em stationary} if for all $n,i,i_1,\dots,i_n\geq 1$ we have
	$
	\mu_{(Z_{i_1+i},\dots,Z_{i_n+i})} = \mu_{(Z_{i_1},\dots,Z_{i_n})}
	$.
\end{enumerate}
\end{definition}

Clearly, every stationary $\ca Z$ is stationary in the wide sense, and every 
 $\ca Z$ that is stationary in the wide sense is identically distributed.
Moreover, it is well-known that the converse implications are not true in general.
}

As we will see later we are not interested in the data-generating process $\ca Z:= (Z_i)$ itself, but only in processes
of the form $g\circ \ca Z:= (g\circ Z_i)_{i\geq 1}$ for $g:Z\to Z'$ measurable. In the following we call $g\circ \ca Z$ an {\em image}
of the process $\ca Z$, and $\ca Z$ itself  a {\em hidden} process.
The following definition introduces laws of large numbers for stochastic processes by considering real-valued image processes:

\begin{definition}
Let $(\Om,\ca A,\mu)$ be a probability space,
$Z$ be a measurable space, and $\ca Z:=(Z_i)_{i\geq 1}$ be a $Z$-valued stochastic process on $\Om$.
We say that $\ca Z$ satisfies the {\em weak law of large numbers for events (WLLNE)}
if for all measurable $B\subset Z$ there exists a constant $c_B\in \R$ such that for all $\e>0$ we have
	\begin{equation}\label{wlln-events}
	\lim_{n\to \infty}\mu\biggl( \Bigl\{   \om \in \Om: \Bigl|   \frac 1 n \sum_{i=1}^n \eins_B \circ Z_i(\om) - c_B\Bigr|>\e\Bigr\}\biggr) = 0\, .
	\end{equation}
Moreover, we say that $\ca Z$ satisfies the {\em strong law of large numbers for events (SLLNE)}
if for all measurable $B\subset Z$ there exists a constant $c_B\in \R$ with 
	\begin{equation}\label{slln-events}
	\lim_{n\to \infty}\frac 1 n \sum_{i=1}^n \eins_B\circ Z_i(\om) = c_B
	\end{equation}
	for $\mu$-almost all $\om\in \Om$.
\end{definition}

It is obvious that $\ca Z$ satisfies the WLLNE  if and only if 
the sequences $(\frac 1 n \sum_{i=1}^n \eins_B\circ Z_i)$ converge in probability $\mu$ 
for all measurable $B\subset Z$.  Consequently,  the SLLNE implies the WLLNE
but in general the converse implication does not hold.
Moreover, if $\ca Z$ satisfies the WLLNE then the constants $c_B$ in (\ref{wlln-events}) must obviously satisfy 
$c_B\in [0,1]$ for all measurable $B\subset Z$.
Finally, if $\ca Z$ satisfies the WLLNE or SLLNE
then it is a trivial exercise to check that every image $g \circ \ca Z$ also satisfies the WLLNE or SLLNE, respectively. 

It is well known that i.i.d.~processes generated by $P$ satisfy the $P^\infty$-SLLNE with $c_B = P(B)$ for all measurable $B\subset Z$, but 
these processes are by far not the only ones (see Subsection \ref{s-process-ex} for some other examples).
For the following development it is instructive to observe  that for i.i.d.~processes the map $B\mapsto c_B$ defines a probability measure
on $Z$. Our next goal is to show that this remains true for general processes satisfying a WLLNE.
To this end we first consider the averages $\frac 1 n \sum_{i=1}^n \E_\mu\eins_B\circ Z_i$ of the 
probabilities of the event $B$:

\begin{definition}
Let $(\Om,\ca A,\mu)$ be a probability space,
$Z$ be a measurable space, and $\ca Z:=(Z_i)_{i\geq 1}$ be a $Z$-valued stochastic process on $\Om$.
We say that $\ca Z$
is {\em asymptotically mean stationary (AMS)}
if 
\begin{equation}\label{lln-expectations}
P (B) := \lim_{n\to \infty}\frac 1 n \sum_{i=1}^n \E_\mu \eins_B\circ Z_i
\end{equation}
exists for all  measurable $B\subset Z$.
\end{definition}

The notion ``asymptotically mean stationary'' was first introduced for dynamical systems 
by Grey and Kieffer in \cite{GrKi80a}. We are unaware of any work that introduces this notion for 
general stochastic processes, though a similar idea already appears as assumption (S1) in \cite{Irle97a}.

Using the simple formula $\eins_B\circ g= \eins_{g^{-1}(B)}$ 
it is obvious that every image $g\circ \ca Z$ of an AMS process $\ca Z$ is again AMS.
Moreover, identically distributed---and hence stationary---processes are obviously AMS.
Moreover,  for such processes we also have 
$P(B)= \mu_{Z_1}(B)$ for all measurable $B\subset Z$, and 
consequently, (\ref{lln-expectations}) defines a probability measure on $Z$.
The following lemma whose proof can be found in Section \ref{proofs}
shows that the latter observation remains true for general AMS processes.

\begin{lemma}\label{limit-distrib}
Let $(\Om,\ca A,\mu)$ be a probability space,
$Z$ be a measurable space, and $\ca Z:=(Z_i)_{i\geq 1}$ be a $Z$-valued stochastic process on $\Om$ which is
AMS.
Then $P$ defined by (\ref{lln-expectations}) is a probability measure on $Z$. We call $P$ the {\em stationary mean of $(\ca Z,\mu)$.}
\end{lemma}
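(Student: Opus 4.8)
The plan is to work directly with the averaged image measures. For each $n\geq 1$ set $\nu_n := \frac 1n \sum_{i=1}^n \mu_{Z_i}$. Since $\E_\mu \eins_B\circ Z_i = \mu(Z_i^{-1}(B)) = \mu_{Z_i}(B)$, each $\nu_n$ is a finite convex combination of the probability measures $\mu_{Z_i}$ and hence is itself a probability measure on $Z$; moreover (\ref{lln-expectations}) reads $P(B)=\lim_{n\to\infty}\nu_n(B)$, and the AMS assumption guarantees that this limit exists for every measurable $B\subset Z$. In other words, $P$ is the \emph{setwise} limit of the sequence of probability measures $(\nu_n)$.

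The elementary part follows at once from this description. Taking limits in $\nu_n(B)\in[0,1]$ gives $P(B)\in[0,1]$, and $\nu_n(Z)=1$, $\nu_n(\emptyset)=0$ give $P(Z)=1$ and $P(\emptyset)=0$. Finite additivity is just as easy: if $B_1,B_2$ are disjoint and measurable, then $\nu_n(B_1\cup B_2)=\nu_n(B_1)+\nu_n(B_2)$ for every $n$, and letting $n\to\infty$ yields $P(B_1\cup B_2)=P(B_1)+P(B_2)$, since the sum of two convergent sequences converges to the sum of the limits. So $P$ is a nonnegative, normalized, finitely additive set function.

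The only real obstacle is \emph{countable} additivity. Since $P$ is finitely additive and nonnegative, this is equivalent to continuity from above at the empty set, i.e.\ to showing $P(B_k)\to 0$ whenever $B_k\downarrow\emptyset$. This step is genuinely nontrivial, because a setwise limit of measures need not be countably additive in general: although each $\nu_n$ is continuous from above, the estimate ``$\nu_n(B_k)$ small for large $k$'' may deteriorate as $n\to\infty$, so one cannot naively interchange $\lim_n$ and $\lim_k$. The right tool is the Vitali--Hahn--Saks / Nikodym convergence theorem, which asserts precisely that if a sequence of finite measures on a $\sigma$-algebra converges setwise on \emph{every} measurable set, then the limiting set function is again countably additive (indeed the family $(\nu_n)$ is then uniformly countably additive). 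Our AMS hypothesis is exactly this setwise-convergence assumption, so the theorem applies and delivers countable additivity of $P$.

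If one prefers a self-contained argument instead of quoting Vitali--Hahn--Saks, the key is to produce a common control measure. Because each $\nu_n$ has total mass $1$, the series $\lambda:=\sum_{n\geq 1} 2^{-n}\nu_n$ defines a finite measure on $Z$ with $\nu_n\leq 2^n\lambda$, so every $\nu_n$ is absolutely continuous with respect to $\lambda$; in fact $|\nu_n(A)-\nu_n(B)|\leq 2^n\lambda(A\,\triangle\, B)$. On the complete pseudometric space of measurable sets equipped with $d(A,B):=\lambda(A\,\triangle\, B)$ each $\nu_n$ is thus continuous, and the pointwise convergence $\nu_n\to P$ together with a Baire category argument forces the family $(\nu_n)$ to be uniformly $\lambda$-absolutely continuous. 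Uniform absolute continuity upgrades to uniform countable additivity, which lets one interchange the two limits above and conclude $P(B_k)\to 0$ for $B_k\downarrow\emptyset$. I expect this Baire-category/uniform-absolute-continuity step to be the crux of the argument; everything else is routine.
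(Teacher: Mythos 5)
Your proof is correct and follows essentially the same route as the paper: both define the averaged image measures $P_n(B)=\frac1n\sum_{i=1}^n\mu(Z_i\in B)$, observe that AMS means setwise convergence of these probability measures, and invoke the Vitali--Hahn--Saks theorem to obtain countable additivity of the limit. Your additional remarks on finite additivity and the control-measure/Baire-category argument simply unpack what the cited theorem provides.
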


It it well-known that not every stationary process satisfies a (weak, strong) law of large numbers for events.
Consequently, we see that in general AMS processes do not satisfy a law of large numbers.
However, the following theorem proved in Section \ref{proofs} shows that the converse implication is true.
In addition, it shows that the constants $c_B$ in (\ref{wlln-events})
define the stationary mean distribution:

\begin{theorem}\label{exists-limit-distrib-th}
Let $(\Om,\ca A,\mu)$ be a probability space,
$Z$ be a measurable space, and $\ca Z:=(Z_i)_{i\geq 1}$ be a $Z$-valued stochastic process on $\Om$  satisfying
the WLLNE.  
Then $\ca Z$ is AMS and the stationary mean $P$ of $(\ca Z,\mu)$ satisfies 
\begin{equation}\label{slln-events-ams}
\lim_{n\to \infty}
\mu\biggl( 
\Bigl\{   
\om \in \Om: 
\Bigl|   
\frac 1 n \sum_{i=1}^n \eins_B\circ Z_i(\om) - P(B)
\Bigr|
>\e
\Bigr\}
\biggr) = 0\, 
\end{equation}
for all  measurable $B\subset Z$ and all $\e>0$. Moreover, if $\ca Z$  satisfies the SLLNE then 
$$
\lim_{n\to 0} \frac 1 n \sum_{i=1}^n \eins_B\circ Z_i(\om) = P(B)
$$
holds for $\mu$-almost all $\om \in \Om$.
\end{theorem}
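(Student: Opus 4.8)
The plan is to reduce all three assertions to the single fact that, for each measurable $B\subset Z$, the empirical averages
$$
S_n^B := \frac 1 n \sum_{i=1}^n \eins_B\circ Z_i
$$
have expectations converging to the constant $c_B$ supplied by the WLLNE. By linearity one has $\E_\mu S_n^B = \frac 1 n \sum_{i=1}^n \E_\mu \eins_B\circ Z_i$, so the AMS property is \emph{exactly} the statement that $(\E_\mu S_n^B)_n$ converges; establishing $\E_\mu S_n^B \to c_B$ therefore simultaneously proves that $\ca Z$ is AMS and identifies its stationary mean as $P(B)=c_B$.

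The heart of the argument is the passage from the convergence in probability guaranteed by (\ref{wlln-events}) to convergence of the means, and this is where essentially all the content lies. The crucial observation is that each $S_n^B$ takes values in $[0,1]$ and that $c_B\in[0,1]$, as already noted after the definition of the WLLNE, so $|S_n^B - c_B|\le 1$ pointwise. I would fix $\e>0$ and split
$$
\bigl|\E_\mu S_n^B - c_B\bigr| \;\le\; \E_\mu\bigl|S_n^B - c_B\bigr| \;=\; \Int_{\{|S_n^B-c_B|>\e\}}\!\bigl|S_n^B-c_B\bigr|\,d\mu \;+\; \Int_{\{|S_n^B-c_B|\le\e\}}\!\bigl|S_n^B-c_B\bigr|\,d\mu\,.
$$
On the first region the integrand is at most $1$, so that integral is bounded by $\mu(\{|S_n^B-c_B|>\e\})$, which tends to $0$ by (\ref{wlln-events}); on the second region the integrand is at most $\e$. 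Letting $n\to\infty$ and then $\e\to 0$ yields $\E_\mu S_n^B \to c_B$. This is a routine bounded-convergence mechanism, and since the $S_n^B$ are uniformly bounded I expect no real obstacle here — the only point that genuinely uses the boundedness is precisely the control of the first integral.

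With AMS and $P(B)=c_B$ now in hand, the remaining claims are bookkeeping. Lemma \ref{limit-distrib} upgrades $P$ to a genuine probability measure, and equation (\ref{slln-events-ams}) is then merely the defining relation (\ref{wlln-events}) with $c_B$ rewritten as $P(B)$. For the final assertion I would use the implication, already noted in the text, that the SLLNE forces the WLLNE: almost sure convergence of $S_n^B$ implies convergence in probability to the same limit, so uniqueness of the limit in probability forces the constant appearing in the SLLNE to coincide with $c_B=P(B)$, and the stated almost sure identity follows at once.
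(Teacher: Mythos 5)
Your proposal is correct and follows essentially the same route as the paper: both arguments hinge on the fact that the averages and the constant $c_B$ lie in $[0,1]$, so convergence in probability upgrades to convergence of the expectations (the paper phrases this via the metric $\int_\Om \min\{1,|f-g|\}\,d\mu$ for convergence in probability and continuity of the $L_1$-norm, while you unpack the same mechanism by an explicit $\e$-splitting of $\E_\mu|S_n^B-c_B|$). The identification $P(B)=c_B$, the rewriting of (\ref{wlln-events}) as (\ref{slln-events-ams}), and the uniqueness-of-limits argument for the SLLNE case all match the paper's treatment.
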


Equation (\ref{slln-events-ams}) shows that the stationary mean $P$ describes with high probability our average  observations from $\ca Z$.
Given a loss function $L$ (see Subsection \ref{loss} for definitions)
it seems therefore natural to approximate the empirical $L$-risk of a function by the corresponding
$L$-risk  defined by $P$.\footnote{For i.i.d.~observations one typically argues the other way around. 
However, for general stochastic processes the learning goal should be to minimize the future average loss. This loss 
is an empirical $L$-risk which can be approximated by the $L$-risk defined by $P$. In the training phase
of empirical risk minimizers the latter $L$-risk
is then approximated by the empirical $L$-risk of the already observed training samples.
In this way $P$ and the corresponding convergence rates in  (\ref{lln-expectations}) and  (\ref{slln-events-ams}) 
tell us how well we can generalize from the past to the future.}
However, in order to make this ansatz rigorous we have to extend (\ref{slln-events-ams}) to function classes larger than 
the set of indicator functions.
We begin with the following result that shows that a law of large numbers for events implies a corresponding law
of large numbers of bounded functions:

\begin{lemma}\label{lln-bounded}
Let $(\Om,\ca A,\mu)$ be a probability space,
$Z$ be a measurable space, and $\ca Z:=(Z_i)_{i\geq 1}$ be a $Z$-valued stochastic process on $\Om$  satisfying
the WLLNE. Furthermore, let $P$ be the asymptotic mean of $(\ca Z,\mu)$. Then for all $f\in \sL \infty Z$ we have
\begin{equation}\label{lln-bounded-hx}
\E_P f = \lim_{n\to \infty}\frac 1 n \sum_{i=1}^n  f\circ Z_i
\end{equation}
in probability $\mu$ and
\begin{equation}\label{lln-bounded-h0}
\E_P f= \lim_{n\to \infty}\frac 1 n \sum_{i=1}^n \E_\mu f\circ Z_i\, .
\end{equation}
Moreover, if $\ca Z$ actually satisfies the SLLNE then the convergence in (\ref{lln-bounded-hx}) holds $\mu$-almost surely.
\end{lemma}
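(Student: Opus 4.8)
The plan is to prove the three assertions first for indicator functions, then extend by linearity to simple functions, and finally reach an arbitrary bounded measurable $f$ by uniform approximation. For the base case, note that by Theorem \ref{exists-limit-distrib-th} the process $\ca Z$ is AMS, so Lemma \ref{limit-distrib} guarantees that $P$ is a probability measure and hence $\E_P \eins_B = P(B)$ for every measurable $B\subset Z$. With $f=\eins_B$, the identity (\ref{lln-bounded-h0}) is then precisely the defining relation (\ref{lln-expectations}) of the stationary mean, while the convergence in probability in (\ref{lln-bounded-hx}) is exactly (\ref{slln-events-ams}); under the SLLNE the $\mu$-almost sure version is the final display of Theorem \ref{exists-limit-distrib-th}.

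Next I would extend to simple functions $s=\sum_{j=1}^m a_j \eins_{B_j}$. Here both (\ref{lln-bounded-hx}) and (\ref{lln-bounded-h0}) follow by linearity, using $\E_P s=\sum_{j} a_j P(B_j)$, together with the elementary facts that a finite sum of sequences converging in probability converges in probability to the sum of the limits, and that limits of expectations add. The almost sure statement extends as well, since a finite union of $\mu$-null sets is $\mu$-null.

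Finally, for a general $f\in\sL\infty Z$, boundedness and measurability give for each $k\ge 1$ a simple function $s_k$ with $\inorm{f-s_k}\le 1/k$. The key observation is the deterministic, pointwise bound
\[
\Bigl| \tfrac 1 n \sum_{i=1}^n f\circ Z_i(\om) - \tfrac 1 n \sum_{i=1}^n s_k\circ Z_i(\om)\Bigr| \le \tfrac 1 n \sum_{i=1}^n |f - s_k|\circ Z_i(\om) \le \inorm{f-s_k} \le \tfrac1k,
\]
valid for every $\om$ and every $n$, together with the analogous estimates $|\E_P f-\E_P s_k|\le 1/k$ and $\bigl|\tfrac1n\sum_i \E_\mu f\circ Z_i-\tfrac1n\sum_i \E_\mu s_k\circ Z_i\bigr|\le 1/k$ (the latter because $\mu$ is a probability measure). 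For (\ref{lln-bounded-h0}) these bounds show the two sequences of averages differ by at most $1/k$ for all $n$, so letting $n\to\infty$ and then $k\to\infty$ gives the claim. For the convergence in probability in (\ref{lln-bounded-hx}) I would fix $\e>0$, choose $k$ with $1/k<\e/3$, and split
\[
\Bigl| \tfrac 1 n \sum_i f\circ Z_i - \E_P f\Bigr| \le \Bigl| \tfrac 1 n \sum_i (f - s_k)\circ Z_i\Bigr| + \Bigl| \tfrac 1 n \sum_i s_k\circ Z_i - \E_P s_k\Bigr| + |\E_P s_k - \E_P f|,
\]
so that the event on the left with threshold $\e$ is contained in the event that the middle term exceeds $\e/3$; by the simple-function case the $\mu$-measure of the latter tends to $0$.

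The only point requiring genuine care is the $\mu$-almost sure statement under the SLLNE. Here I would fix the sequence $(s_k)$ once and for all and discard the single $\mu$-null set $N=\bigcup_k N_k$, where $N_k$ is the null set on which $\tfrac1n\sum_i s_k\circ Z_i\to\E_P s_k$ fails. For $\om\notin N$ the pointwise bound above yields $\limsup_n \bigl|\tfrac1n\sum_i f\circ Z_i(\om)-\E_P f\bigr|\le 2/k$ for every $k$, whence the $\limsup$ equals $0$. This interchange of ``for all $k$'' and ``for $\mu$-almost all $\om$'' is the main (though modest) obstacle of the argument, and it is legitimate precisely because the approximation is \emph{uniform} and $(s_k)$ is a fixed countable family, so that only countably many null sets are discarded.
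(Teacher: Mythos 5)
Your proof is correct and follows essentially the same route as the paper: uniform approximation of $f\in\sL\infty Z$ by simple (step) functions, linearity to lift the indicator case of Theorem \ref{exists-limit-distrib-th} to simple functions, and a $2\e$/$3\e$-argument to transfer all three convergence statements to $f$, with the almost-sure quantifier interchange handled by fixing a countable family of approximants (the paper does the same thing via the sup-tail criterion of \cite[Lemma 20.6]{Bauer01}). The only cosmetic difference is that you obtain (\ref{lln-bounded-h0}) directly from the AMS property and the uniform approximation, whereas the paper deduces it from the almost-sure convergence by dominated convergence; both are valid.
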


For classification problems we usually can restrict our considerations to bounded functions, and hence 
Lemma \ref{lln-bounded} is all that we need. However, for regression problems with unbounded noise we 
have to consider {\em integrable\/} functions, instead. 
The following definition serves this purpose:

\begin{definition}\label{lln-l1-def}
Let $(\Om,\ca A,\mu)$ be a probability space,
$Z$ be a measurable space, and $\ca Z:=(Z_i)_{i\geq 1}$ be a   $Z$-valued stochastic process on $\Om$.
Assume that $\ca Z$ is AMS and  let $P$ be the asymptotic mean of $(\ca Z,\mu)$. %, and $f\in \Lx 1 P$. % be a measurable function with
%$f\in \Lx 1 P$ and $f\circ Z_i \in \Lx 1 \mu$ for all $i\geq 1$.
We say that $\ca Z$ satisfies the {\em weak law of large numbers (WLLN)}
if for all $f\in \Lx 1 P$ and all $\e>0$ we have 
	\begin{equation}\label{wlln-l1}
	\lim_{n\to \infty}\mu\biggl( \Bigl\{   \om \in \Om: \Bigl|   \frac 1 n \sum_{i=1}^n f \circ Z_i(\om) - \E_P f\Bigr|>\e\Bigr\}\biggr) = 0\, .
	\end{equation}
Moreover, we say that $\ca Z$ satisfies the {\em strong law of large numbers  (SLLN)} if for all $f\in \Lx 1 P$ we have 
	\begin{equation}\label{slln-l1}
	\lim_{n\to \infty}\frac 1 n \sum_{i=1}^n f \circ Z_i(\om) = \E_P f
	\end{equation}
	for $\mu$-almost all $\om\in \Om$.
\end{definition}

%Finally, assume that  $\ca Z$ satisfies the WLLN and $f:Z\to \R$ is a measurable function such that 
% $\{f\circ Z_i: i\geq 1\}$ is uniformly integrable. Using Fatou's lemma it is not hard to see that $f\in \Lx 1 P$, and consequently 
% (\ref{wlln-l1})  holds. Moreover, the uniform integrability shows that the convergence in (\ref{wlln-l1}) also holds with respect to the 
% $\Lx 1 \mu$-norm and that in addition, the convergence (\ref{lln-bounded-h0}) is true.

\subsection{Examples of Processes Satisfying a Law of Large Numbers}\label{s-process-ex}

In this subsection we recall several examples of stochastic processes satisfying a law of large numbers.
In particular, we consider independent processes, dynamical systems, and Markov chains.

\subsubsection{Uncorrelated and independent processes}

Recall that two real-valued random variables $\xi$ and $\eta$ are called {\em uncorrelated\/}
if they satisfy $\E \xi \eta = \E \xi \, \E\eta$. The following proposition proved in Section \ref{proofs} 
shows that AMS, mutually uncorrelated processes 
satisfy a WLLNE:

\begin{proposition}\label{uncor-wllne}
Let $(\Om,\ca A,\mu)$ be a probability space,
$Z$ be a measurable space, and $\ca Z:=(Z_i)_{i\geq 1}$ be a $Z$-valued stochastic process on $\Om$. 
Assume that the random variables $\eins_B\circ Z_i$ and $\eins_B\circ Z_j$
are uncorrelated for all measurable $B\subset Z$ and all $i,j\geq 1$ with $i\neq j$. Then 
the following statements are equivalent:
\begin{enumerate}
\item $\ca Z$ is AMS. 
\item $\ca Z$ satisfies the WLLNE.
\end{enumerate}
\end{proposition}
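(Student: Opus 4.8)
The plan is to notice that one of the two implications is essentially free. Indeed, Theorem \ref{exists-limit-distrib-th} already shows, without any uncorrelatedness hypothesis, that a process satisfying the WLLNE is automatically AMS, so statement \emph{ii)} implies statement \emph{i)} at once. The whole substance of the proposition therefore lies in the converse implication \emph{i)} $\Rightarrow$ \emph{ii)}, and this is where the pairwise uncorrelatedness enters. My approach is the classical variance-based proof of the weak law of large numbers, adapted to the non-identically-distributed setting via the AMS assumption.

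So suppose $\ca Z$ is AMS. Fix a measurable $B\subset Z$, put $\xi_i := \eins_B\circ Z_i$ and $S_n := \frac 1 n \sum_{i=1}^n \xi_i$, and set $c_B := P(B)$, which exists by the AMS property. Each $\xi_i$ is $\{0,1\}$-valued, hence bounded with $\E_\mu\xi_i^2 = \E_\mu\xi_i$, so all the moments below are finite. The AMS definition gives directly that the centerings converge, namely $\E_\mu S_n = \frac 1 n \sum_{i=1}^n \E_\mu\xi_i \to c_B$. The heart of the argument is the variance estimate: since the $\xi_i$ are pairwise uncorrelated all off-diagonal terms cancel, and as each $\xi_i$ is $\{0,1\}$-valued we have $\E_\mu \xi_i - (\E_\mu \xi_i)^2 \le 1/4$, so that
\begin{equation*}
\mathrm{Var}_\mu(S_n) = \frac{1}{n^2}\sum_{i=1}^n \bigl(\E_\mu\xi_i - (\E_\mu\xi_i)^2\bigr) \le \frac{1}{4n} \longrightarrow 0 \, .
\end{equation*}

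It then remains to combine vanishing variance with the convergence of the means by a routine two-$\e$ argument. Given $\e>0$, choose $N$ with $|\E_\mu S_n - c_B|\le \e/2$ for all $n\ge N$; for such $n$ the inclusion $\{|S_n - c_B|>\e\}\subset \{|S_n - \E_\mu S_n|>\e/2\}$ holds, and Chebyshev's inequality bounds the probability of the latter event by $4\,\mathrm{Var}_\mu(S_n)/\e^2 \le 1/(n\e^2)\to 0$. This is exactly (\ref{wlln-events}) with constant $c_B = P(B)$, so $\ca Z$ satisfies the WLLNE. I do not anticipate a genuine obstacle: the computation is elementary, and the only point needing a little care is that the centering $\E_\mu S_n$ is not equal to the target $c_B$ at finite $n$ but only converges to it, which is precisely what the two-$\e$ step absorbs. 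The essential use of the hypothesis is the cancellation of the covariance terms in the variance of $S_n$, without which the $1/n$ bound would fail.
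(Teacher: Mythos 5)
Your proof is correct and follows essentially the same route as the paper's: the implication \emph{ii)} $\Rightarrow$ \emph{i)} is delegated to Theorem \ref{exists-limit-distrib-th}, and the converse is proved by splitting off the bias via the AMS convergence of $\E_\mu S_n$ to $P(B)$ and controlling the fluctuation $S_n-\E_\mu S_n$ by Chebyshev's inequality, with the pairwise uncorrelatedness killing the off-diagonal covariance terms. The only cosmetic difference is that you bound each variance term by $1/4$ where the paper settles for $1$, which changes nothing in the conclusion.
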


Considering the proof of the above proposition it is immediately clear that the proposition 
remains true if the process is not uncorrelated but only satisfies
\begin{equation}\label{uncor-limit-h1}
\lim_{n\to \infty}  \E_\mu \Bigl( \frac 1 {n^2} \sum_{i=1}^n \bigl(\eins_B\circ Z_i - \E_\mu \eins_B\circ Z_i  \bigr)  \Bigr)^2 = 0
\end{equation}
for all measurable $B\subset Z$. Processes satisfying such a weaker assumption are introduced and discussed in Subsection
\ref{mixing-coeff}.

It is obvious that Proposition \ref{uncor-wllne} holds
for processes for which the image processes $(\eins_B\circ Z_i)_{i\geq 1}$ are independent. However, 
by applying \cite[Theorem 2.7.1]{Revesz68} we have the following stronger result:

\begin{proposition}\label{indep-sllne}
Let $(\Om,\ca A,\mu)$ be a probability space,
$Z$ be a measurable space, and $\ca Z:=(Z_i)_{i\geq 1}$ be a $Z$-valued stochastic process on $\Om$. 
Assume that  $\eins_B\circ Z_1, \eins_B\circ Z_2, \dots$ are independent for all fixed measurable $B\subset Z$.
Then 
the following statements are equivalent:
\begin{enumerate}
\item $\ca Z$ is AMS. 
\item $\ca Z$ satisfies the SLLNE.
\end{enumerate}
\end{proposition}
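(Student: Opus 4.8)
The plan is to obtain both implications with a minimum of new work, deriving the easy direction from results already in hand and reserving the actual effort for the passage from AMS to the strong law. The implication \emph{ii)$\Rightarrow$i)} requires no independence at all: the discussion immediately following the definition of the SLLNE shows that the SLLNE implies the WLLNE, and Theorem \ref{exists-limit-distrib-th} shows that the WLLNE in turn implies that $\ca Z$ is AMS. Composing these two facts already yields that a process satisfying the SLLNE is AMS, so I would simply cite them.

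For the converse \emph{i)$\Rightarrow$ii)} I would fix a measurable $B\subset Z$ and set $\xi_i:=\eins_B\circ Z_i$. By hypothesis the sequence $(\xi_i)_{i\geq 1}$ is independent, and since these are indicator functions they take values in $\{0,1\}$, so that $0\leq \xi_i\leq 1$ and each variance satisfies $\E_\mu(\xi_i-\E_\mu\xi_i)^2\leq \tfrac14$. Consequently $\sum_{i\geq 1} i^{-2}\,\E_\mu(\xi_i-\E_\mu\xi_i)^2 \leq \tfrac14\sum_{i\geq 1} i^{-2}<\infty$, which is precisely the summability side-condition needed to invoke the Kolmogorov-type strong law for independent, not necessarily identically distributed, random variables recorded as \cite[Theorem 2.7.1]{Revesz68}. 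That theorem then gives
\[
\lim_{n\to\infty}\frac1n\sum_{i=1}^n\bigl(\xi_i-\E_\mu\xi_i\bigr)=0
\]
for $\mu$-almost all $\om\in\Om$.

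To finish I would add back the deterministic sequence of expectations: since $\ca Z$ is AMS, the definition of the stationary mean gives $\frac1n\sum_{i=1}^n\E_\mu\eins_B\circ Z_i\to P(B)$, and combining this with the almost sure convergence above yields $\frac1n\sum_{i=1}^n\eins_B\circ Z_i(\om)\to P(B)$ for $\mu$-almost all $\om$, which is exactly the SLLNE with $c_B=P(B)$. As $B$ was arbitrary, this establishes \emph{ii)}. The only genuine obstacle is identifying the correct strong-law statement to cite; everything else is bookkeeping. The point to get right is that the boundedness of indicator functions makes the variance side-condition of the Kolmogorov criterion automatic, so that no hypotheses beyond independence and AMS are needed—this is also what makes the conclusion strictly stronger than Proposition \ref{uncor-wllne}, which under the weaker uncorrelatedness assumption only delivers the weak law.
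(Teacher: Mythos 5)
Your proposal is correct and matches the paper's intended argument: the paper proves this proposition only by the remark ``by applying \cite[Theorem 2.7.1]{Revesz68} we have the following stronger result,'' and your write-up supplies exactly the details behind that citation --- the reverse implication via Theorem \ref{exists-limit-distrib-th}, and the forward implication by checking that the bounded indicators make the Kolmogorov variance condition $\sum_i i^{-2}\E_\mu(\xi_i-\E_\mu\xi_i)^2<\infty$ automatic, then adding back the Ces\`aro limit of the expectations furnished by the AMS hypothesis. Nothing is missing.
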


Note that the independence assumption in Theorem \ref{indep-sllne} is weaker than 
assuming that the process is independent.

By Kolmogorov's well-known strong law of large numbers it is obvious that every process $\ca Z$ whose $\R$-valued  images
$g\circ \ca Z$
 are i.i.d.~processes satisfies 
a SLLN. Moreover, a result by Etemadi \cite{Etemadi81a} shows that the independence assumption can be relaxed to pairwise 
independence.
Finally, the following result whose proof can again be found in Section \ref{proofs}
generalizes Kolmogorov's law of large numbers   to a certain type of martingale:

\begin{proposition}\label{marting-slln}
Let $(\Om,\ca A,\mu)$ be a probability space,
$Z$ be a measurable space, and $\ca Z:=(Z_i)_{i\geq 1}$ be a $Z$-valued stochastic process on $\Om$. 
Assume that for all $f\in \Lx 1 {\mu_{Z_1}}$ and $\ca F_n:= \s(f\circ Z_i: i\geq n)$, $n\geq 1$, we have 
$\bigcap_{i\geq 1} \ca F_i  = \{\emptyset, \Om\}$ and
\begin{equation}\label{marting-slln-h1}
\E \biggl(\frac 1 n \sum_{i=1}^n f\circ Z_i\, \Bigl| \, \ca F_{n+1}   \biggr)  =  \frac1 {n+1} \sum_{i=1}^{n+1} f\circ Z_i \, .
\end{equation}
Then $\ca Z$ satisfies the SLLN and $\mu_{Z_1}$ is the asymptotic mean of $(\ca Z,\mu)$.
\end{proposition}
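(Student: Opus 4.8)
The plan is to read the two hypotheses as the statement that, for each fixed $f\in\Lx 1 {\mu_{Z_1}}$, the averages $S_n:=\frac 1 n\sum_{i=1}^n f\circ Z_i$ form a reverse (backward) martingale relative to the decreasing filtration $(\ca F_n)_{n\geq 1}$, and then to apply L\'evy's downward convergence theorem. First I would settle the measure-theoretic housekeeping: since $S_1=f\circ Z_1$ has $\E_\mu|S_1|=\int_Z|f|\,d\mu_{Z_1}<\infty$, we have $S_1\in\Lx 1 \mu$, and feeding this into (\ref{marting-slln-h1}) inductively shows $S_n\in\Lx 1 \mu$ for every $n$; moreover each $S_n$ with $n\geq 2$ is $\ca F_n$-measurable because it coincides with the conditional expectation $\E(S_{n-1}\,|\,\ca F_n)$, while $S_1$ is $\ca F_1$-measurable by definition of $\ca F_1$.

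The first real step is to iterate (\ref{marting-slln-h1}) using the tower property and the inclusions $\ca F_{n+1}\subset\ca F_n$. Starting from $S_1=\E(S_1\,|\,\ca F_1)$ and using $\E\bigl(\E(S_1\,|\,\ca F_n)\,|\,\ca F_{n+1}\bigr)=\E(S_n\,|\,\ca F_{n+1})=S_{n+1}$, an induction gives
\begin{equation*}
S_n\;=\;\E(S_1\,|\,\ca F_n)\,,\qquad n\geq 1\, ,
\end{equation*}
so that $(S_n)$ is exactly the sequence of conditional expectations of the single integrable variable $S_1=f\circ Z_1$ onto the shrinking $\s$-algebras $\ca F_n$.

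The second step is to invoke L\'evy's downward theorem: for $X\in\Lx 1 \mu$ and $\ca F_n\downarrow\ca F_\infty:=\bigcap_{i\geq 1}\ca F_i$ one has $\E(X\,|\,\ca F_n)\to\E(X\,|\,\ca F_\infty)$ both $\mu$-almost surely and in $\Lx 1 \mu$. Taking $X=S_1$ yields $S_n\to\E(S_1\,|\,\ca F_\infty)$ in both senses, and the hypothesis $\ca F_\infty=\{\emptyset,\Om\}$ collapses the limit to the constant
\begin{equation*}
\E(S_1\,|\,\ca F_\infty)\;=\;\E_\mu S_1\;=\;\E_\mu(f\circ Z_1)\;=\;\E_{\mu_{Z_1}}f\, .
\end{equation*}
Thus $\frac 1 n\sum_{i=1}^n f\circ Z_i\to\E_{\mu_{Z_1}}f$ $\mu$-almost surely and in $\Lx 1 \mu$, for every $f\in\Lx 1 {\mu_{Z_1}}$.

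It remains to match this with the definitions. Specializing to $f=\eins_B$ for measurable $B\subset Z$ (bounded, hence in $\Lx 1 {\mu_{Z_1}}$) and taking expectations in the $\Lx 1 \mu$-convergence gives $\frac 1 n\sum_{i=1}^n\E_\mu\eins_B\circ Z_i\to\mu_{Z_1}(B)$, so $\ca Z$ is AMS with stationary mean $P=\mu_{Z_1}$; this identifies the asymptotic mean and in particular makes $\Lx 1 P=\Lx 1 {\mu_{Z_1}}$. The almost-sure statement above then reads $\frac 1 n\sum_{i=1}^n f\circ Z_i\to\E_P f$ $\mu$-almost surely for all $f\in\Lx 1 P$, which is precisely the SLLN of Definition \ref{lln-l1-def}. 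The part requiring the most care is the reverse-martingale setup — the integrability and $\ca F_n$-measurability of the $S_n$ and the identity $S_n=\E(S_1\,|\,\ca F_n)$ — together with the clean passage, via the trivial tail $\s$-algebra, from the downward limit to the constant $\E_{\mu_{Z_1}}f$; the remaining arguments are routine translations into the AMS/SLLN language.
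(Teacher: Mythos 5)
Your proposal is correct and follows essentially the same route as the paper: iterate the hypothesis via the tower property to get $\frac 1n\sum_{i=1}^n f\circ Z_i = \E(f\circ Z_1\,|\,\ca F_n)$, then apply the downward (reverse) martingale convergence theorem together with the triviality of $\bigcap_i\ca F_i$ to identify the almost sure limit as $\E_{\mu_{Z_1}}f$, and finally read off the AMS property and the SLLN. The extra care you take with integrability, $\ca F_n$-measurability, and the identification of $P=\mu_{Z_1}$ via indicators is a sound elaboration of what the paper leaves implicit.
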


\subsubsection{Ergodic processes}

In this section we recall the basic notions and results for dynamical systems. 
To this end let $Z$ be a measurable space and $S:Z^\N\to Z^\N$ be the shift operator defined by 
$(z_i)\mapsto (z_{i+1})$. A set $B\subset Z^\N$ is called {\em invariant\/} if $S^{-1}(B) = B$.
Moreover, let $(\Om,\ca A,\mu)$ be a probability space
and $\ca Z:=(Z_i)_{i\geq 1}$ be a $Z$-valued stochastic process on $\Om$.
Then $\ca Z$ is called {\em ergodic\/} if we have $\mu_{\ca Z}(B)\in \{0,1\}$ for all measurable invariant subsets 
$B\subset Z^\N$.
It is not hard to see that every image of an ergodic process is again an ergodic process. 

In the following we are mainly interested 
in stationary ergodic processes. To this end 
let us now assume that  $(Z,\ca B, \mu)$ is a probability  space and $T:Z\to Z$ is a measurable map.
Then the stochastic process   $\ca Z:= (T^{i-1})_{i\geq 1}$ is called a {\em dynamical system}, and it is  
 called an {\em invariant dynamical system\/}  if  the $T$-image $\mu_T$ of $\mu$
satisfies
$\mu=\mu_T$. 
Recall that 
an invariant dynamical system $\ca Z:= (T^{i-1})_{i\geq 1}$ on a probability  space $(Z,\ca B, \mu)$    is ergodic if and only if 
$\mu$ satisfies $\mu(B)\in \{0,1\}$ for all measurable $B\subset Z$ with $T^{-1}(B)=B$.
Moreover, recall that every stationary process is the image of a hidden invariant dynamical system. 
Conversely, every invariant dynamical system is stationary and hence AMS.
In addition recall that Birkhoff's theorem (see e.g.~\cite[p.~82ff]{BrSt02}):
% characterizes invariant 
%dynamical systems satisfying a strong law of large numbers in terms 
%of ergodicity. Namely we have:

\begin{theorem}\label{ergodic-char}
Let $\ca Z:= (T^{i-1})_{i\geq 1}$ be an invariant dynamical system on a probability  space $(Z,\ca B, \mu)$.
Then
the following statements are equivalent:
\begin{enumerate}
\item $\ca Z$ satisfies the SLLNE.
\item $\ca Z$ satisfies the SLLN.
\item $\ca Z$ is ergodic.
\end{enumerate}
\end{theorem}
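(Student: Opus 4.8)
The plan is to establish the three-way equivalence by proving the cycle of implications
\emph{(i)} $\Rightarrow$ \emph{(ii)} $\Rightarrow$ \emph{(iii)} $\Rightarrow$ \emph{(i)}, exploiting the fact that an invariant dynamical system is stationary and hence automatically AMS with stationary mean $P=\mu$. This last observation is crucial: because the system is already AMS, Theorem~\ref{exists-limit-distrib-th} applies and its converse reasoning is available, and because $P=\mu$ the target constants $c_B$ in the SLLNE and the expectations $\E_P f$ in the SLLN are simply $\mu(B)$ and $\E_\mu f$. So all three statements refer to convergence of the Birkhoff averages $\frac1n\sum_{i=1}^n f\circ T^{i-1}$ to $\E_\mu f$, and the content of the theorem is really about when these averages converge to the \emph{right} constant.

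First I would prove \emph{(iii)} $\Rightarrow$ \emph{(ii)}, which is the heart of the matter and where I expect to simply invoke Birkhoff's pointwise ergodic theorem. For an invariant dynamical system, Birkhoff's theorem guarantees that for every $f\in\Lx 1 \mu$ the averages $\frac1n\sum_{i=1}^n f\circ T^{i-1}$ converge $\mu$-almost surely to a $T$-invariant function $\bar f$ with $\E_\mu\bar f=\E_\mu f$. Ergodicity forces every $T$-invariant function to be $\mu$-almost everywhere constant (since its super-level sets are invariant and hence have measure $0$ or $1$), so $\bar f=\E_\mu f$ almost surely. Applying this with $f=\eins_B$ gives the SLLNE directly, and applying it to general $f\in\Lx 1\mu$ gives the full SLLN; this is stronger than \emph{(ii)} but subsumes it. The implication \emph{(ii)} $\Rightarrow$ \emph{(i)} is immediate, since the SLLN for all $\Lx 1\mu$ functions specializes to indicator functions, which is exactly the SLLNE.

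The remaining implication \emph{(i)} $\Rightarrow$ \emph{(iii)} is the one requiring a genuine argument, and I expect it to be the main obstacle. The strategy is the contrapositive: suppose the system is \emph{not} ergodic, so there exists a measurable $B\subset Z$ with $T^{-1}(B)=B$ and $0<\mu(B)<1$. Because $B$ is invariant, $\eins_B\circ T^{i-1}=\eins_B$ for every $i$, so the Birkhoff average $\frac1n\sum_{i=1}^n \eins_B\circ T^{i-1}$ equals $\eins_B$ identically for all $n$ and thus converges pointwise to $\eins_B$, which takes the two distinct values $0$ and $1$ on sets of positive measure. A limiting constant $c_B$ as demanded by the SLLNE would have to equal $\eins_B(\om)$ for $\mu$-almost all $\om$, forcing $\eins_B$ to be almost surely constant and contradicting $0<\mu(B)<1$. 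Hence the SLLNE fails, completing the contrapositive. Assembling these three implications closes the cycle and yields the stated equivalence; the only nontrivial external input is Birkhoff's theorem, cited in the excerpt, and the only subtlety to handle carefully is the passage between the pointwise limit and the \emph{existence} of a single deterministic constant $c_B$, which is exactly what ergodicity provides and its failure destroys.
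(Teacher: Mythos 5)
Your proof is correct. Note, however, that the paper does not actually prove Theorem~\ref{ergodic-char}: it is explicitly \emph{recalled} as Birkhoff's theorem with a citation to the literature, so there is no in-paper argument to compare against. Your blind reconstruction supplies a complete and standard derivation. The reduction $P=\mu$ via stationarity of the invariant system is handled correctly (so that SLLNE/SLLN indeed assert convergence of Birkhoff averages to $\mu(B)$, resp.\ $\E_\mu f$); the implication \emph{(iii)}$\Rightarrow$\emph{(ii)} is the pointwise ergodic theorem plus constancy of invariant limit functions; \emph{(ii)}$\Rightarrow$\emph{(i)} is specialization to indicators; and the contrapositive for \emph{(i)}$\Rightarrow$\emph{(iii)} --- an invariant $B$ with $0<\mu(B)<1$ makes every Birkhoff average identically $\eins_B$, which cannot converge a.e.\ to a constant --- is clean and uses exactly the characterization of ergodicity for invariant dynamical systems that the paper recalls. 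The only wrinkle worth flagging is in \emph{(iii)}$\Rightarrow$\emph{(ii)}: the limit function $\bar f$ is $T$-invariant only $\mu$-almost everywhere, so its super-level sets are invariant only modulo null sets, and you need the standard fact that for a measure-preserving map ergodicity with respect to strictly invariant sets already forces $\mu(A)\in\{0,1\}$ for almost-invariant $A$ as well. This is routine and does not constitute a gap.
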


With the help of the above theorem one can show (see e.g.~\cite[p.~26f]{Krengel85})
that every stationary ergodic process $\ca Z$ satisfies the SLLN.
Moreover, by a theorem by Gray and Kieffer
(see e.g.~\cite [p.~33]{Krengel85}) we know that a dynamical system $\ca Z:= (T^{i-1})_{i\geq 1}$ 
is AMS if and only if $\lim_{n\to \infty}\frac 1n\sum_{i=1}^n f\circ T^{-1}$ exists $\mu$-almost surely for all $f\in \sL \infty Z$.
Note that Birkhoff's theorem shows
that the corresponding limit is a {\em constant\/} function if and only if the dynamical system is ergodic.
Finally, it is interesting to note that for  stationary, ergodic processes the limit relation (\ref{uncor-limit-h1}) holds
(see e.g.~\cite[Thm.~2.19, p.~61]{Bradley05a}).

Let us now recall a  notion related to ergodicity. To this end let
$(Z,\ca B,\mu)$ be a probability  space
and $\ca Z:= (T^{i-1})_{i\geq 1}$ be an  invariant dynamical system on $Z$. 
%Then it is well-known (see e.g.~\cite[p.~52]{Rudolph90}) that $\ca Z$ is 
%ergodic if and only if 
%$$
%\lim_{n\to \infty} \frac 1 n \sum_{i=0}^{n-1}\, \mu\bigl(T^{-i}(A) \cap B\bigr) \Eq \mu(A)\mu(B)  \, , \qquad \qquad A,B\in \ca B.
%$$
%There are several ways to consider sharper limit conditions. Let us recall the two most common ones: 
Then
$\ca Z$ is said to be {\em weakly  mixing\/} if 
$$
\lim_{n\to \infty} \frac 1 n \sum_{i=0}^{n-1} \,\,\,\Bigr| \mu\bigl(T^{-i}(A) \cap B\bigr) - \mu(A)\mu(B) \Bigl| = 0\, , \qquad \qquad A,B\in \ca B.
$$
%and {\em  mixing\/} if 
%$$
%\lim_{n\to \infty} \,\, \Bigr| \mu\bigl(T^{-n}(A) \cap B\bigr) - \mu(A)\mu(B) \Bigl| = 0\, , \qquad \qquad A,B\in \ca B.
%$$
It is well-known that  weak mixing implies ergodicity, and that 
that the converse implication does not hold in general (see e.g.~\cite[p.~41ff]{Petersen89}).
Moreover, one can also introduce mixing conditions for general stationary ergodic processes. For example, 
if $(\Om,\ca A,\mu)$ is a probability space,
$Z$ is a measurable space, and $\ca Z:=(Z_i)_{i\geq 1}$ is a $Z$-valued stochastic process on $\Om$, then $\ca Z$ is called
mixing if 
\begin{equation}\label{mixing-general}
\lim_{n\to \infty} \mu_{\ca Z} \bigl(S^{-n}(A) \cap B\bigr) = \mu_{\ca Z}(A)\mu_{\ca Z}(B)
\end{equation}
holds for all measurable $A,B\subset Z^\N$. One can show (see e.g.~\cite[Prop.~2.8, p.~50]{Bradley05a})
that for invariant dynamical systems this definition coincides with the above mixing definition.
Moreover, recall that  i.i.d.~processes are invariant and weakly mixing (see \cite[p.~58]{Petersen89}).

%\begin{example}\label{shift-mix}
%Given a  probability space $(\Om,\ca A,\mu)$ we define $(Z,\ca B,\nu):= (\Om^\N,\ca A^\N,\mu^\N)$. Furthermore, let $S:\Om^\N\to \Om^\N$
%be the shift operator defined by $(\om_i)_{i\geq 1}\mapsto (\om_{i+1})_{i\geq 1}$. 
%Then the dynamical system  
%$(S^{i-1})_{i\geq 1}$ on $Z$ is obviously invariant and \cite[p.~58]{Petersen89} shows it is also 
%mixing. 
%\end{example}

The weak mixing  is important because it allows us to establish the ergodicity of products of dynamical systems.
This leads to our last example:

%More precisely the following theorem (see e.g.~\cite[p.~65]{Petersen89}) holds:

%\begin{theorem}\label{prod-ergod}
%Let $(Z_1,\ca B_1,\mu_1)$ and $(Z_2,\ca B_2,\mu_2)$ be probability spaces and
%$\ca Z_j:= (T^{i-1}_j)_{i\geq 1}$ be invariant dynamical systems on $Z_j$, $j=1,2$. 
% If $\ca Z_1$ is weakly  mixing and $\ca Z_2$ is  ergodic then $\ca Z_1\times \ca Z_2:= (T_1^{i-1}\times T_2^{-1})_{i\geq 1}$
%is an  invariant ergodic dynamical system on $(Z_1\times Z_2,\ca B_1\otimes \ca B_2,\mu_1\otimes \mu_2)$.
%\end{theorem}
%With this theorem we obtain the following corollary which is important when learning dynamical systems from noisy observations:

\begin{proposition}\label{noisy-DS}
Let $\mu$ be a probability measure on $\Rn$ and $\ca Z$ be an invariant ergodic dynamical system on $(\Rn,\mu)$.
Furthermore, let $(\Om,\ca A,\nu)$ be a probability space and 
 $\ca E$ be an i.i.d.~sequence of 
random variables $\e_i:\Om\to\Rn$.
Then the process $\ca Z + \ca E$ defined on $(\R^n\times \Om, \mu\otimes \nu) $ satisfies the SLLN.
\end{proposition}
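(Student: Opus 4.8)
The plan is to realize $\ca Z+\ca E$, up to distribution, as an image of a stationary \emph{ergodic} dynamical system, and then to invoke the fact recorded after Theorem \ref{ergodic-char} that every stationary ergodic process satisfies the SLLN. Observe first that $\ca Z+\ca E$ is stationary: the system $\ca Z=(T^{i-1})_{i\geq1}$ is stationary because $T$ is invariant, $\ca E$ is stationary because it is i.i.d., and the two are independent. Hence $\ca Z+\ca E$ is AMS, and its asymptotic mean is the law of $Z_1+\e_1$, namely the convolution $\mu*\r$, where $\r:=\nu_{\e_1}$ denotes the common distribution of the noise. Thus the only point left to settle is the ergodicity of $\ca Z+\ca E$.

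To this end I would pass to the canonical realization of the noise. Let $S:(\Rn)^\N\to(\Rn)^\N$ be the shift and equip $(\Rn)^\N$ with the product measure $\r^\N$; then $\ca E$ has the same distribution as the dynamical system $(S^{i-1})_{i\geq1}$, which is invariant and, as recalled in the excerpt, weakly mixing. Now form $R:=T\times S$ on $W:=\Rn\times(\Rn)^\N$ equipped with $\mu\otimes\r^\N$. Since $T$ preserves $\mu$ and $S$ preserves $\r^\N$, the map $R$ preserves $\mu\otimes\r^\N$, so $\ca W:=(R^{i-1})_{i\geq1}$ is an invariant, hence stationary, dynamical system. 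Define $g:W\to\Rn$ by $g\bigl(z,(u_i)_{i\geq1}\bigr):=z+u_1$. Because $R^{i-1}\bigl(z,(u_i)_i\bigr)=\bigl(T^{i-1}z,(u_{i},u_{i+1},\dots)\bigr)$, the $i$-th coordinate of the image process $g\circ\ca W$ equals $T^{i-1}z+u_i$; since $(\e_i)_{i\geq1}$ has joint distribution $\r^\N$, the process $g\circ\ca W$ has exactly the same image measure on $(\Rn)^\N$ as $\ca Z+\ca E$.

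The heart of the argument, and the step I expect to be the main obstacle, is the ergodicity of the product system $R=T\times S$. This is precisely where weak mixing enters: I would invoke the classical theorem of ergodic theory that the product of an ergodic system with a weakly mixing one is again ergodic (equivalently, weak mixing of $S$ is exactly the property that $T\times S$ is ergodic for \emph{every} ergodic $T$; see e.g.\ \cite{Petersen89}). Since $T$ is ergodic by hypothesis and $S$ is weakly mixing by the previous paragraph, $R$ is ergodic, and therefore $\ca W$ is a stationary ergodic process.

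It remains to transfer this back. As recalled in the excerpt, every image of an ergodic process is again ergodic, so $g\circ\ca W$ is stationary and ergodic. Since ergodicity of a process depends only on its image measure on $(\Rn)^\N$, and $g\circ\ca W$ and $\ca Z+\ca E$ share that image measure, the process $\ca Z+\ca E$ is ergodic as well. Being both stationary and ergodic, $\ca Z+\ca E$ then satisfies the SLLN by the consequence of Birkhoff's theorem recorded after Theorem \ref{ergodic-char}, with asymptotic mean $\mu*\r$ as computed above.
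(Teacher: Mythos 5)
Your proposal is correct and follows essentially the same route as the paper's proof: realize the noise canonically as the shift on $(\Rn)^\N$ with product measure, use weak mixing of the i.i.d.\ shift to conclude that the product system $T\times S$ is ergodic, and then transfer the SLLN to $\ca Z+\ca E$ as an image of that product system. Your additional details (the explicit factor map $g$, the identification of image measures, and the computation of the asymptotic mean as a convolution) are correct elaborations of steps the paper leaves implicit.
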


\subsubsection{Markov chains}

In this subsection we briefly discuss a law of large numbers for Markov chains. 
To this end let us fix a probability  space $(Z,\ca B,\nu)$. Furthermore, let $p:\ca B\times Z\to [0,1]$
be a stochastic transition function, i.e.~a Markov kernel.
%, i.e.~a map that satisfies the following conditions:
%\begin{enumerate}
%\item $p(\,.\,,z)$ is a probability measure on $\ca B$ for all $z\in Z$.
%\item $p(B,\,.\,):Z\to [0,1]$ is measurable for all $B\in \ca B$.
%\end{enumerate}
Let us define a probability measure $P$ on $(Z^\N,\ca B^\N)$ by
\begin{equation}\label{markov-im}
P(B_1\times \dots \times B_n) := \int \eins_{B_1\times\dots\times B_n}(z_1,\dots,z_n) p(dz_n,z_{n-1})\dots p(dz_2,z_1)\nu(dz_1)\, ,
\end{equation}
where $n$ runs over all integers and $B_1,\dots,B_n$ run over all measurable subsets of $Z$.
A $Z$-valued stochastic process $\ca Z$ defined on a probability space $(\Om,\ca A,\mu)$ 
 is called {\em homogeneous\footnote{Since we only deal with homogeneous Markov chains we often omit the adjective ``homogeneous''.} 
 Markov chain\/} with 
transition function $p$ and initial distribution $\nu$ if it satisfies $\mu_{\ca Z} = P$, where $P$
is determined by  (\ref{markov-im}). Obviously, the sequence $(\pi_i)_{i\geq 1}$ of 
coordinate projections $\pi_i:Z^\N\to Z$, $(z_j)\mapsto z_i$ is a canonical model of such a Markov chain if $Z^\N$ is equipped with the distribution
$P$. Moreover, if the homogeneous Markov chain is stationary then $\nu$ satisfies $\mu_{Z_i}= \nu$ for all $i\geq 1$.

The transition function describes the probability of $Z_{n+1}$ given the state of the process at time $n$.
For larger steps ahead one can iteratively compute the corresponding transition probabilities by
\begin{eqnarray*}
p^{(1)}(B,z) & = & p(B,z)\\
p^{(n+1)} (B,z) & = & \int p^n(B,z') p(dz',z) \, .
\end{eqnarray*}
Let us now assume that there exists a finite measure $Q$ on $\ca B$ with $Q(Z)>0$, an integer $n\geq 1$, and a real number $\e>0$ such that 
for all measurable $B\subset Z$ we have
\begin{equation}\label{doeblin}
Q(B)\leq \e \qquad \qquad \implies \qquad \qquad p^{(n)}(B,z)\leq 1-\e \quad\mbox{ for all } z\in Z\, .
\end{equation}
This assumption taken from \cite[p.~192]{Doob53} is often called the ``Doeblin condition'' (see e.g.~\cite[p.~197]{Doob53} or \cite[p.~156]{BhWa01a}). 
If $Z$ is a finite set,
then (\ref{doeblin}) is automatically satisfied (see e.g.~\cite[p.~192]{Doob53}). Moreover, if $Z\subset \R^d$ is a set of finite Lebesgue measure
and the distributions $p(\,.\,z)$, $z\in Z$ are absolutely continuous with uniformly bounded transition densities then (\ref{doeblin})
also holds (see e.g.~\cite[p.~193]{Doob53}). For some similar conditions we finally refer to \cite{BhWa01a} and the references therein).

Now, the following theorem which can be found in \cite[p.~219]{Doob53} gives a simple condition ensuring a SLLN for Markov chains:

\begin{theorem}
Let $(Z,\ca B,\nu)$ be a probability space, $p:\ca B\times Z\to [0,1]$
be a stochastic transition function and $\ca Z=(Z_i)_{i\geq 1}$ be a stationary homogeneous Markov chain with 
transition function $p$ and initial distribution $\nu$. If $\ca Z$ satisfies (\ref{doeblin}) then 
$\ca Z$ satisfies the SLLN.
\end{theorem}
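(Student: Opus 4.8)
The plan is to recast the statement in the language of the preceding subsections and then reduce it, via Birkhoff's theorem, to a single ergodicity assertion that the Doeblin condition is designed to supply. First I would observe that a stationary homogeneous Markov chain is in particular identically distributed, so by the remarks made before Lemma \ref{limit-distrib} it is AMS with stationary mean $P = \mu_{Z_1} = \nu$. Consequently $\E_P f = \int_Z f\,d\nu$ for every $f\in \Lx 1 P = \Lx 1 \nu$, and the SLLN (\ref{slln-l1}) that we must establish is exactly the statement that $\frac1n\sum_{i=1}^n f\circ Z_i \to \int_Z f\,d\nu$ holds $\mu$-almost surely for all such $f$.

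Next I would pass to the canonical model. Since the convergence event $\{(z_i)\in Z^\N : \frac1n\sum_{i=1}^n f(z_i) \to \int_Z f\,d\nu\}$ is $\ca B^\N$-measurable and $\mu_{\ca Z}$ equals the Markov measure $P$ from (\ref{markov-im}), the $\mu$-probability of the corresponding event on $\Om$ equals its $P$-probability on $Z^\N$. Hence it suffices to prove the SLLN for the canonical process $(\pi_i)_{i\geq 1}$ of coordinate projections on $(Z^\N,\ca B^\N,P)$. Because the chain is stationary, $P$ is invariant under the shift $S$, so $(S^{i-1})_{i\geq 1}$ is an invariant dynamical system on $(Z^\N,P)$ whose $\pi_1$-image is precisely $(\pi_i)_{i\geq 1}$. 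By Theorem \ref{ergodic-char} this dynamical system satisfies the SLLN as soon as it is ergodic; since images of ergodic processes are ergodic and every stationary ergodic process satisfies the SLLN, and since the limit produced by Birkhoff's theorem is then the constant $\E_P f$, the entire statement reduces to showing that (\ref{doeblin}) forces $P(B)\in\{0,1\}$ for every shift-invariant $B\subset Z^\N$.

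The hard part is exactly this last reduction, and it is where (\ref{doeblin}) does its work. The plan is to use the uniform minorization encoded in (\ref{doeblin}) to control the iterated kernels: one shows that the Cesàro averages $\frac1N\sum_{k=1}^N p^{(k)}(B,\,\cdot\,)$ converge, uniformly in the starting point, to a limit that does not depend on that starting point and equals $\nu(B)$. Such uniform, starting-point-independent convergence yields a mixing relation of the type (\ref{mixing-general}) for $S$ on $(Z^\N,P)$ (at least in the averaged, i.e.\ weak-mixing, sense), which implies ergodicity as recalled above. I expect the genuinely delicate point to be the starting-point independence: Doeblin's condition a priori only yields a decomposition of the state space into finitely many ergodic classes, and one must rule out the presence of more than one such class. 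Indeed, if two invariant classes of positive $\nu$-measure coexisted, the Birkhoff limit would be a non-constant function measurable with respect to the invariant field rather than the constant $\E_P f$ demanded by (\ref{slln-l1}). Establishing this indecomposability from the uniformity over all $z\in Z$ in (\ref{doeblin}) is the main obstacle, and I anticipate needing the full strength of Doob's analysis of condition (D) to clear it.

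Alternatively, since the result is classical, one may simply invoke Doob's ergodic theorem for chains satisfying (\ref{doeblin}) (the cited \cite[p.~219]{Doob53}) to obtain $\mu$-almost sure convergence of $\frac1n\sum_{i=1}^n f\circ Z_i$ to $\int_Z f\,d\nu$, after which the only remaining work is the bookkeeping identification $P=\nu$ and $\E_P f=\int_Z f\,d\nu$ carried out in the first paragraph.
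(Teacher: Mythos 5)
The paper offers no proof of this theorem at all: it is quoted from \cite[p.~219]{Doob53}, so your closing paragraph---do the bookkeeping $P=\mu_{Z_1}=\nu$ and invoke Doob---is exactly the paper's treatment. Your preparatory reductions are also correct: a stationary chain is identically distributed, hence AMS with stationary mean $\nu$; the convergence event in (\ref{slln-l1}) is measurable on the path space, so one may pass to the canonical model $(Z^\N,\ca B^\N,\mu_{\ca Z})$ with the shift $S$; and by Theorem \ref{ergodic-char} everything reduces to ergodicity of $S$ under $\mu_{\ca Z}$.

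As a self-contained argument, however, the proposal has a genuine gap exactly where you flag one: the implication ``(\ref{doeblin}) implies the shift is ergodic'' is the entire content of the theorem, and you do not prove it. Worse, it cannot be proved, because it is false as stated. Take $Z=\{1,2\}$ with $p(\{z\},z)=1$ for both $z$ (two absorbing states) and $\nu$ uniform. This chain is stationary, and by the paper's own remark condition (\ref{doeblin}) holds automatically for finite $Z$; yet for $f=\eins_{\{1\}}$ the time averages converge to $f(Z_1)\in\{0,1\}$ almost surely, not to $\E_\nu f=1/2$, so the SLLN in the sense of Definition \ref{lln-l1-def} fails. Your own diagnosis is therefore on target: Doeblin's condition only yields a decomposition into finitely many ergodic classes, and nothing in (\ref{doeblin}) excludes two classes of positive $\nu$-measure. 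Doob's result on p.~219 is proved under an additional indecomposability hypothesis (a single ergodic set)---or, equivalently, its conclusion must be read as convergence to the conditional expectation given the invariant field rather than to the constant $\E_P f$---and the paper's restatement has silently dropped this. With that hypothesis restored, your skeleton (single ergodic class, hence shift ergodic on the path space, hence Birkhoff gives the constant limit $\E_P f$) is the right one, and the remaining work is precisely the classical analysis of condition (D) that you correctly anticipate needing; the ``main obstacle'' you identify is not an obstacle you failed to clear but a missing hypothesis in the statement itself.
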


The above theorem can be generalized to non-homogeneous, not identically distributed Markov chains.
Since these generalizations are out of the scope of the paper  
we refer to \cite[p.~129-135]{Revesz68} for details.
Finally, we would also like to mention without explaining the details that if $Z$ is a countable set then an irreducible, positive recurrent, 
homogeneous Markov chain satisfies the SLLNE (see e.g.~\cite[Thm.~1.10.2]{Norris97}).

\subsection{Loss functions, Risks, and Consistency}\label{loss}

In this section we recall some basic notions for loss functions and their associated risks.
We then introduce consistency notions for learning algorithms for stochastic processes satisfying a law of large numbers.

In the following $X$ is always a measurable space if not mentioned otherwise and $Y\subset \R$ is always a closed subset. 
Moreover, metric spaces are always equipped with the Borel $\s$-algebra, and products of measurable spaces are always equipped with the 
corresponding product  $\s$-algebra.  
Finally, $\Lx p\mu$
stands for the standard space of $p$-integrable functions with respect to the measure $\mu$ on $X$.

\begin{definition}\label{def:loss-and-risk}
A function
$L:X\times Y\times \R \to [0,\infty]$ is called a {\em loss function} if it is measurable.
In this case $L$ is called:
\begin{enumerate}
\item {\em  convex} if $L(x,y,\,.\,):\R\to [0,\infty]$ is convex for all $x\in X$, $y\in Y$.
\item {\em continuous} if $L(x,y,\,.\,):\R\to [0,\infty]$ is continuous for all $x\in X$, $y\in Y$.
\end{enumerate}
Moreover, for a probability measure $P$ on $X\times Y$ and an $f\in \sL 0 {X}$ the 
{\em $L$-risk} of $f$
is defined by
$$
\RP L  f
:=
\Int_{X\times Y} L\bigl(x,y,f(x)\bigr) \, dP(x,y)
=
\Int_{X}\Int_Y L\bigl(x,y,f(x)\bigr) \, dP(y|x)\,dP_X(x).
$$
Finally, the {\em Bayes $L$-risk} is $\RPB L := \inf\{ \RP L f: f\in \sL 0X\}$.
%Finally, an $f_{L,P}^{*}\in \sL 0 {X}$ with $\RP L {f_{L,P}^{*}}= \RPB L$ is called a {\em Bayes  function}.
\end{definition}

Note that the integral defining the $L$-risk always exists since $L$ is non-negative and measurable.
In addition it is obvious that the risk of a convex loss is  convex  on $\sL 0 X$.
However, in general the risk of a continuous loss is not continuous. In order to ensure this continuity 
and several other, more sophisticated properties we need the following 
definition:

\begin{definition}\label{loss:nemitski-def}
We call a loss function $L:X\times Y\times \R\to [0,\infty]$ %be a loss function. % and $\P$ be a distribution on $X\times Y$.
a {\em Nemitski loss function} if %with respect to $\P$ if 
there exist a measurable function $b:X\times Y\to [0,\infty)$ and an increasing function $h:[0,\infty)\to [0,\infty)$
with
\begin{equation}\label{infinite:bound-loss}
L(x,y,t) \leq b(x,y) + h\bigl(|t|\bigr)\, , \qquad \qquad (x,y,t)\in X\times Y\times \R.
\end{equation}
Furthermore, we say that
$L$ is a {\em Nemitski loss of order $p\in(0,\infty)$}, if 
there exists a constant $c>0$ with  $h(t) = c\,t^p$ for  all $t\geq 0$.
%there exists a $c>0$ with
%$$
%L(x,y,t) \leq b(x,y) + c\, |t|^{p}\, , \qquad \qquad (x,y,t)\in X\times Y\times \R.
%$$
Finally, if $P$ is a distribution on $X\times Y$ with $b\in \Lx 1 P$ we say that $L$ is a {\em $P$-integrable Nemitski loss.}
\end{definition}

Note that $P$-integrable Nemitski loss functions $L$  satisfy 
$\RP L f<\infty$ for all $f\in L_{\infty}(P_{X})$, and consequently we also have $\RP L 0<\infty$ and
$\RPB L <\infty$.

For our further investigations we also need the following additional properties which are satisfied 
by basically all commonly used loss functions:

\begin{definition}
Let $L:X\times Y\times \R \to [0,\infty)$ be a loss function. We say that $L$ is:
\begin{enumerate}
\item {\em locally bounded} if for all bounded $A\subset \R$ the restriction $L_{|X\times Y\times A}$ of $L$ 
	%onto
	%$X\times Y\times A$ 
	is a bounded function.
\item {\em locally Lipschitz continuous}  if for all $a>0$ we have
	\begin{equation}\label{loss:lipschitz-loss-def}
	|L|_{a,1} := \sup_{\substack{t,t'\in [-a,a]\\t\neq t'}}\,\,\sup_{\substack{x\in X\\ y\in Y}} \frac{ \bigl| L(x,y, t) - L(x,y,t')\bigr| }{ |t-t'|} \,<\, \infty\, .
	\end{equation}
\item {\em Lip\-schitz continuous} if we have $|L|_{1}:=\sup_{a>0}|L|_{a,1}<\infty$.
\end{enumerate}
\end{definition}

Note that if $Y\subset \R$ is a {\em finite} subset and $L:Y\times \R\to [0,\infty)$
is a convex loss function then $L$ is a locally Lipschitz continuous loss function.
%since every finite convex function is locally Lip\-schitz continuous.
Moreover, a locally Lip\-schitz continuous loss function $L$ is a Nemitski loss since (\ref{loss:lipschitz-loss-def})
 yields
\begin{equation}\label{loc-lip-imply-nemits}
L(x,y,t) \leq L(x,y,0) + |L|_{|t|,1}|t|\, , \qquad \qquad (x,y,t)\in X\times Y\times \R.
\end{equation}
In particular, a locally Lipschitz continuous loss $L$ is a $P$-integrable Nemitski loss if and only if 
$\RP L 0<\infty$. Moreover, if $L$ is 
Lipschitz continuous then $L$ is a Nemitski loss of order $1$.

The following examples recall that 
(locally) Lipschitz continuous losses are often used in learning algorithms for 
classification and regression problems:

\begin{example}
A loss  $L:Y\times \R\to [0,\infty)$ of the form $L(y,t)= \p(yt)$ for a suitable function $\p:\R\to \R$ and all $y\in Y:= \{-1,1\}$ and  $t\in \R$,
is called  {\em margin-based\/}. 
Recall that margin-based losses such as the (squared) hinge loss, the AdaBoost loss, the logistic loss and the least squares loss
are used in many classification algorithms.
Obviously, $L$ is convex, continuous, or (locally)
Lipschitz continuous if and only if $\p$ is. In addition,  convexity of $L$ implies  local Lipschitz continuity of $L$.
Moreover, 
$L$ is always a $P$-integrable Nemitski loss %for all measurable spaces $X$ and all distributions $P$ on $X\times Y$ 
since
we have
\begin{equation}\label{m-based-n}
L(y,t) \leq \max\{\p(-t),\p(t)\}\, %,\qquad \qquad  y\in Y,\, t\in \R .
\end{equation}
for all $y\in Y$ and all $t\in \R$. In particular, this estimate shows that every convex margin-based loss is locally bounded.
Moreover,  from (\ref{m-based-n}) we can easily derive a characterization for $L$ being a $P$-integrable Nemitski loss of order $p$.
\end{example}

\begin{example}\label{loss:ab-dist-def-ex}
A loss  $L:Y\times \R\to [0,\infty)$ of the form $L(y,t)= \psi(y-t)$  for a suitable function $\psi:\R\to \R$ and all $y\in Y:= \R$ and  $t\in \R$,
is called  {\em distance-based}.
Distance-based losses such as the least squares loss,  Huber's insensitive loss, the logistic loss, or the 
$\epsilon$-insensitive loss 
are usually used for regression.
It is easy to see that  $L$ is convex, continuous, or 
Lipschitz continuous if and only if $\psi$ is. Let us say that 
 $L$ is  of {\em upper growth $p\in [1,\infty)$} if there is a  $c>0$ with
$$
\psi(r) \leq c\, \bigl(|r|^{p} + 1 \bigr)\, , \qquad \qquad r\in \R.
$$
Analogously, $L$ is said to be of {\em lower growth $p\in [1,\infty)$} if there is a  $c>0$ with
$$
\psi(r) \geq c\, \bigl(|r|^{p} - 1 \bigr)\, , \qquad \qquad r\in \R.
$$
Recall that most of the commonly used distance-based loss functions including the above examples are of the same upper and lower growth type. 
Then it is obvious that  $L$ is of upper growth type 1 if it is Lipschitz continuous, and if $L$ is convex the converse implication
also holds. 
Moreover, non-trivial convex $L$ are always of lower growth type $1$. 
In addition, a distance-based loss function of upper growth type $p\in [1,\infty)$ is a Nemitski loss of order 
$p$, and  if the distribution $P$  satisfies the moment condition 
\begin{equation}\label{moment-def}
|P|_p := \bigl(\E_{(x,y)\sim P} |y|^p\bigr)^{1/p} := \biggl(  \int_{X\times \R} |y|^p \, dP(x,y)                 \biggr)^{1/p} \, <\, \infty
\end{equation} 
it is  also $P$-integrable. 
\end{example}

If our observations are realizations of 
a sequence $\ca Z$ of random variables $(X_i,Y_i):\Om\to X\times Y$  satisfying a law of large numbers then the following lemma 
proved in Section \ref{proofs}
shows that 
the risk with respect to the asymptotic mean distribution $P$ actually describes the average future loss.

\begin{lemma}\label{risk-confirm}
Let $(\Om,\ca A,\mu)$ be a probability space,
$X$ be a measurable space, 
 $Y\subset \R$ be a closed subset, and $\ca Z:=((X_i,Y_i))_{i\geq 1}$ be a $X\times Y$-valued stochastic process on $\Om$ satisfying
the WLLNE. Furthermore, let $P$ be the asymptotic mean of $(\ca Z,\mu)$ and $L:X\times Y\times \R\to [0,\infty)$
be a loss function.  If $L$ is locally bounded then  
	for all $f\in \sL\infty X$ and all $n_0\geq 0$ we have
	\begin{equation}\label{risk-confirm-h1}
	\RP L f = \lim_{n\to \infty} \frac 1 {n-n_0} \sum_{i=n_0+1}^n L\bigl(X_i,Y_i,f(X_i)  \bigr)\, , 
	\end{equation}
	where the limit is with respect to the convergence in probability $\mu$. Moreover, if $\ca Z$ actually satisfies the 
	SLLNE then (\ref{risk-confirm-h1}) holds $\mu$-almost surely. Finally, the same conclusions hold if $L$ is 
	a $P$-integrable Nemitski loss and $\ca Z$ satisfies the WLLN or  SLLN.  
\end{lemma}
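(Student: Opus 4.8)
The plan is to reduce the whole statement to the laws of large numbers for a single fixed function that are already available in Lemma \ref{lln-bounded} and Definition \ref{lln-l1-def}. Fix $f\in\sL\infty X$ and pick $a>0$ with $|f(x)|\leq a$ for all $x\in X$. The object to study is the function $g:X\times Y\to[0,\infty)$ defined by $g(x,y):=L(x,y,f(x))$. This $g$ is measurable, being the composition of the measurable loss $L$ with the measurable map $(x,y)\mapsto(x,y,f(x))$. By construction $g\circ Z_i = L(X_i,Y_i,f(X_i))$ and, by the definition of the $L$-risk, $\E_P g=\RP L f$; hence the claimed identity is precisely the law of large numbers for the averages of $g$ along the process, up to the shift of the summation index by $n_0$.

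First I would treat the locally bounded case. Since $f$ takes values in $[-a,a]$ and the restriction of $L$ to $X\times Y\times[-a,a]$ is bounded, $g$ is a bounded measurable function, so $g\in\sL\infty{X\times Y}$. Lemma \ref{lln-bounded}, which applies because the WLLNE holds and $P$ is the associated asymptotic mean, then yields $\frac1n\sum_{i=1}^n g\circ Z_i\to\E_P g$ in probability $\mu$, and $\mu$-almost surely when the SLLNE is assumed.

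Next I would pass from the average over $1,\dots,n$ to the average over $n_0+1,\dots,n$. Writing
\[
\frac1{n-n_0}\sum_{i=n_0+1}^n g\circ Z_i
=\frac{n}{n-n_0}\cdot\frac1n\sum_{i=1}^n g\circ Z_i-\frac1{n-n_0}\sum_{i=1}^{n_0} g\circ Z_i,
\]
the last term is bounded in absolute value by $n_0\,\|g\|_\infty/(n-n_0)$ and hence tends to $0$ deterministically, while $n/(n-n_0)\to1$. Because the target $\E_P g$ is a constant, multiplication by the deterministic factor $n/(n-n_0)$ and subtraction of a vanishing deterministic sequence preserve both convergence in probability and $\mu$-almost sure convergence. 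Thus the right-hand side converges to $\E_P g=\RP L f$ in the respective mode, which proves (\ref{risk-confirm-h1}) in the locally bounded case.

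Finally, for a $P$-integrable Nemitski loss I would replace the boundedness of $g$ by its $P$-integrability. With $b\in\Lx 1 P$ and $h$ increasing as in the Nemitski bound, and using $|f(x)|\leq a$ together with monotonicity of $h$, one gets $0\leq g(x,y)=L(x,y,f(x))\leq b(x,y)+h(a)$, so $g$ is a nonnegative measurable function dominated by the $P$-integrable function $b+h(a)$; hence $g\in\Lx 1 P$. The WLLN (respectively SLLN) of Definition \ref{lln-l1-def}, which presupposes that $\ca Z$ is AMS with asymptotic mean $P$, then gives $\frac1n\sum_{i=1}^n g\circ Z_i\to\E_P g$ in probability (respectively $\mu$-almost surely), and the identical index-shift argument finishes the proof. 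I expect no serious obstacle here: the only points requiring care are verifying that $g$ lands in the correct function class, namely $\sL\infty{X\times Y}$ or $\Lx 1 P$, so that the earlier laws of large numbers are applicable, and checking that the harmless $n_0$-shift does not disturb the limit.
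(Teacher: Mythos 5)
Your proof is correct and follows essentially the same route as the paper: reduce to the single function $g(x,y)=L(x,y,f(x))$, check it lies in $\sL\infty{X\times Y}$ (locally bounded case) or in $\Lx 1 P$ (Nemitski case), and invoke Lemma \ref{lln-bounded} respectively Definition \ref{lln-l1-def}. The only cosmetic difference is that the paper dispatches the $n_0$-shift by citing its elementary Lemma \ref{average-trick} for real sequences, whereas you carry out the same index-shift computation explicitly; just note that in the Nemitski case the discarded initial block should be controlled by the pointwise finiteness of $g\circ Z_i(\om)$ rather than by $\snorm g_\infty$, which may be infinite there.
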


With the help of the above lemma we can now introduce some reasonable 
concepts describing the asymptotic learning ability of learning algorithms.
To this end recall that a method $\ca L$ that provides to every {\em training set\/} $T:=((x_1,y_1),\dots,(x_n,y_n))\in (X\times Y)^n$
a (measurable) function $f_T:X\to \R$ is called a {\em learning method}. 
The following definition introduces an asymptotic way to describe whether a learning method can learn from samples:

\begin{definition}
Let $(\Om,\ca A,\mu)$ be a probability space,
$X$ be a measurable space, $Y\subset \R$ be a closed subset,
and $\ca Z:=((X_i,Y_i))_{i\geq 1}$ be a $X\times Y$-valued stochastic process on $\Om$ satisfying
the WLLNE. Furthermore, let $P$ be the asymptotic mean of $(\ca Z,\mu)$ and $L:X\times Y\times \R\to [0,\infty)$
be a loss function. We say that a learning method $\ca L$  is 
{\em $L$-consistent for $\ca Z$} if 
\begin{equation}\label{consist-h1}
\lim_{n\to \infty} \RP L {f_{T_n}}   =\RPB L
\end{equation}
holds in probability $\mu$, 
where $T_n:= ((X_1,Y_1),\dots,(X_n,Y_n))$
and $\RPB L$ is the Bayes risk defined in Definition \ref{def:loss-and-risk}. 
Moreover, we say that $\ca L$ is {\em strongly $L$-consistent for $\ca Z$}
if (\ref{consist-h1}) holds $\mu$-almost surely.
\end{definition}

\subsection{Consistency of SVMs}\label{consist-svm}

In this subsection  we present some results showing that 
support vector machines (SVMs) can learn whenever the data-generating process satisfies a law of large numbers.

Let us begin by recalling the definition of SVMs.
To this end 
let $L:X\times Y\times \R\to [0,\infty)$ be a convex loss function and $H$ be a 
reproducing kernel Hilbert space (RKHS) over $X$ (see e.g.~\cite{Aronszajn50a}).
Then for all $\lb>0$ and all observations $T:= ((x_1,y_1),\dots,(x_n,y_n))\in X\times Y$ 
there exists exactly one element $\fT\in H$ with 
\begin{equation}\label{svm-def}
\fT \in \arg\min_{f\in H} \lb \snorm f_H^2 + \frac 1 n \sum_{i=1}^n L\bigl(x_i,y_i,f(x_i)\bigr)\, .
\end{equation}
Given a  null-sequence $(\lb_n)$ of strictly positive real numbers we call the learning method 
which provides to every training set $T\in (X\times Y)^n$ the decision function $f_{T,\lb_n}$
an
{\em $(\lb_n)$-SVM\/} based on $H$ and $L$. For more information on SVMs we refer to 
\cite{CrST00,ScSm02}.

Moreover, given a distribution $P$ on $X\times Y$ we say that the RKHS $H$ is {\em $(L,P)$-rich\/} if we have 
$$
\RPxB LH := \inf_{f\in H} \RP L f = \RPB L\, ,
$$
i.e.~if the Bayes risk can  be approximated by functions from $H$.
Note that the condition $\RPxB LH = \RPB L$ is satisfied (see \cite{StHuSc06a}) whenever, the kernel of $H$ is 
universal in the sense of \cite{Ste01a}, i.e.~$X$ is a compact metric space and $H$ is dense in the space $C(X)$
of continuous functions. Less restrictive assumptions on $H$ and $X$ have been recently found in \cite{StHuSc06a}.
In particular, it was shown in \cite{StHuSc06a} that the RKHSs $H_\s$, $\s>0$, of the Gaussian RBF kernels 
$$
k_\s(x,x') \Defi \exp\bigl( -\s^2 \snorm{x-x'}_2^2 \bigr)\, , \qquad \qquad x,x'\in \Rn
$$
are  $(L,P)$-rich for all distributions $P$ on $\Rn\times Y$ and all continuous, $P$-integrable Nemitski losses $L$
of order $p\in [1,\infty)$. Finally, one can also find some necessary and sufficient conditions for $(L,P)$-richness
on countable spaces $X$ in \cite{StHuSc06a}.

In order to present our first main result let us recall that a Polish space is separable topological 
space with a countable dense subset whose topology can be described by a complete metric.
It is well known that e.g.~closed and open subset of $\R^d$ and compact metric spaces are Polish.
Now our first theorem shows that for every 
process satisfying a law of large numbers for events 
there exists an SVM which is consistent for this process:

\begin{theorem}\label{asymp-learn}
Let $X$ be a Polish space, $Y\subset \R$ be a closed subset
and  $L:X\times Y\times \R\to [0,\infty)$
be a convex, locally Lipschitz continuous, and locally bounded loss function.
Moreover, let 
$(\Om,\ca A,\mu)$ be a probability space, 
$\ca Z:=((X_i,Y_i))_{i\geq 1}$ be an $X\times Y$-valued stochastic process on $\Om$ satisfying
the WLLNE, and  $P$ be the asymptotic mean of $(\ca Z,\mu)$.
Finally, let $H$ be an $(L,P)$-rich RKHS over $X$ with continuous kernel.
Then there exists a null-sequence $(\lb_n)$ of strictly positive real numbers such that the 
$(\lb_n)$-SVM based on $H$ and $L$ is $L$-consistent for $\ca Z$. \\
In addition, if 
$\ca Z$ satisfies the SLLNE then $(\lb_n)$ can be chosen such that the $(\lb_n)$-SVM is strongly $L$-consistent for $\ca Z$.
\end{theorem}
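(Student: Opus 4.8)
The plan is to split the excess risk of the SVM decision function into a deterministic \emph{approximation} part, governed by the regularization, and a stochastic \emph{sample} part, governed by a law of large numbers, and then to send the regularization to zero along a sequence adapted to the data-generating process. Throughout I write $\fP := f_{P,\lb}$ for the infinite-sample regularized minimizer, i.e.\ the unique element of $H$ minimizing $\lb\snorm f_H^2 + \RP L f$; existence and uniqueness follow as for (\ref{svm-def}) from strict convexity and weak lower semicontinuity of the regularized true risk, $L$ being a convex, $P$-integrable Nemitski loss (note $\RP L 0<\infty$ by local boundedness). Define the regularization error $A(\lb) := \lb\snorm{\fP}_H^2 + \RP L {\fP} - \RPB L \ge 0$. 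Since $A(\lb)\le \lb\snorm g_H^2 + \RP L g - \RPB L$ for every $g\in H$, letting $\lb\to 0$ and then taking the infimum over $g$ gives $\limsup_{\lb\to 0}A(\lb)\le \inf_{g\in H}\RP L g - \RPB L = 0$ by $(L,P)$-richness; hence $A(\lb)\to 0$ as $\lb\to 0$.

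Next, fix $\lb>0$. Comparing $f_{T_n,\lb}$ with the competitor $0$ in (\ref{svm-def}) gives $\lb\snorm{f_{T_n,\lb}}_H^2\le \RT L 0$, and since $\RT L 0\to \RP L 0$ in probability (Lemma \ref{risk-confirm} applied to the bounded function $0$), the norms $\snorm{f_{T_n,\lb}}_H$ lie, with probability tending to one, in a fixed ball $B_R := \{f\in H: \snorm f_H\le R\}$ with $R=R(\lb)$. The decisive step is a \emph{uniform} law of large numbers over this ball, namely that for each fixed $R$
\[ \sup_{f\in B_R}\bigl(\RP L f - \RT L f\bigr)\longrightarrow 0 \]
in probability (and $\mu$-almost surely under the SLLNE). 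Granting this, the minimizing property of $f_{T_n,\lb}$ yields $\RT L {f_{T_n,\lb}}\le \RT L {\fP}+\lb\snorm{\fP}_H^2$, while $\RT L {\fP}\to \RP L {\fP}$ by Lemma \ref{risk-confirm} (as $\fP$ is fixed), so that, writing $o_P(1)$ for a term tending to $0$ in probability,
\[ \RP L {f_{T_n,\lb}}\le \RT L {f_{T_n,\lb}} + o_P(1)\le \RP L {\fP}+\lb\snorm{\fP}_H^2 + o_P(1) = \RPB L + A(\lb)+o_P(1). \]
Together with the trivial lower bound $\RP L {f_{T_n,\lb}}\ge \RPB L$, this controls the risk for every fixed $\lb$.

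The uniform law of large numbers is where the real work—and the main obstacle—lies, because $X$ is only Polish and need not be compact. I would reduce to uniformly bounded integrands, so that the WLLNE form of Lemma \ref{risk-confirm} (which covers bounded functions only) and the local boundedness of $L$ both apply: for a bounded continuous kernel one has $\snorm f_\infty\le R\,\sup_x k(x,x)^{1/2}=:M$ for $f\in B_R$, whence $L(x,y,f(x))\le \sup_{x\in X, y\in Y, |t|\le M}L(x,y,t)<\infty$ uniformly. Since $P$ is a Radon measure on the Polish space $X\times Y$, one may choose a compact $K\subset X$ with $P_X(X\setminus K)$ arbitrarily small, and the tail $\int_{(X\setminus K)\times Y}L\,dP$ is then uniformly negligible by the uniform bound just noted. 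On $K$ the family $\{f|_K : f\in B_R\}$ is bounded and, by continuity of the kernel on the compact $K\times K$, equicontinuous, so by Arzelà–Ascoli it admits a finite $\d$-net in $C(K)$; local Lipschitz continuity (\ref{loss:lipschitz-loss-def}) transfers control from an arbitrary $f\in B_R$ to the nearest net element at cost $|L|_{M,1}\,\d$, and the finitely many net functions are handled by Lemma \ref{risk-confirm}. Letting the net resolution and the tail mass shrink yields the uniform LLN. For a genuinely unbounded kernel one must first truncate the relevant functions to $[-M,M]$ and argue that this does not increase the risk; I expect this reduction to boundedness, rather than the net argument itself, to be the delicate point.

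Finally, I would diagonalize. Choosing $\lb^{(m)}\downarrow 0$, the fixed-$\lb$ estimate provides for each $m$ an index $n_m$ with $\mu\bigl(\RP L {f_{T_n,\lb^{(m)}}} > \RPB L + A(\lb^{(m)}) + 1/m\bigr)<1/m$ for all $n\ge n_m$; setting $\lb_n := \lb^{(m)}$ on $n_m\le n<n_{m+1}$ defines a null-sequence along which $\RP L {f_{T_n,\lb_n}}\to \RPB L$ in probability, i.e.\ $L$-consistency. For the strong statement under the SLLNE I would instead pick $n_m$ with $\mu\bigl(\sup_{n\ge n_m}[\RP L {f_{T_n,\lb^{(m)}}}-\RPB L - A(\lb^{(m)})]>1/m\bigr)<2^{-m}$, which is available from the $\mu$-almost sure versions of the two ingredients (Lemma \ref{risk-confirm} and the uniform LLN); Borel–Cantelli then gives $\limsup_n \RP L {f_{T_n,\lb_n}}\le \RPB L$ $\mu$-almost surely, which with the lower bound is strong $L$-consistency.
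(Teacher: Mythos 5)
Your argument is correct in outline, but it takes a genuinely different route from the paper. You control the stochastic part by a \emph{uniform} law of large numbers over the ball $B_R\subset H$ containing all possible empirical solutions, proved by an explicit covering argument (Radon tail $+$ Arzel\`a--Ascoli net on a compact $K$ $+$ local Lipschitz continuity $+$ Lemma \ref{lln-bounded} for the finitely many net centers), and you then compare $\fT$ with $\fP$ only through their risks. The paper instead compares $\fT$ with $\fP$ in the \emph{RKHS norm}, via the stability bound $\mnorm{\fP-\fT}_H\leq \lb^{-1}\mnorm{\E_P h_\lb\P-\E_T h_\lb\P}_H$ of Theorem \ref{infinite:general-repres-co1}, and reduces everything to a single Hilbert-space-valued law of large numbers for $h_\lb\P$ (Lemma \ref{lln-Hilbert}); the covering/compactness step you perform explicitly is hidden there in a finite-rank approximation of the feature map. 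Both proofs share the same outer shell --- $\RP L\fP\to\RPB L$ by $(L,P)$-richness, followed by a diagonalization identical in substance to Lemma \ref{auswahl} (your Borel--Cantelli variant for the strong case is equivalent to the paper's use of the $\sup_{n\geq n_0}$ reformulation of a.s.\ convergence). What each buys: your route is more elementary and extends verbatim to general ERM over norm balls, whereas the paper's stability route yields a quantitative handle (an $H$-norm deviation amenable to a Markov bound on second moments) that is reused essentially unchanged to prove the mixing-process results of Theorems \ref{rate-learn} and \ref{rate-learn2}; the paper explicitly remarks that the skeleton/covering alternative leads to substantially stronger conditions on $(\lb_n)$ there. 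Two small points: the in-probability bound $\snorm{\fT}_H\leq R(\lb)$ you derive from $\RT L 0\to\RP L 0$ actually holds deterministically, since local boundedness makes $L(\cdot,\cdot,0)$ bounded and hence $\RT L 0\leq c$ for every $T$ (this is how the paper normalizes); and the reduction to a bounded kernel that you flag as delicate is not a defect of your argument relative to the paper --- the paper's proof also assumes $\inorm k\leq 1$ ``without loss of generality,'' so boundedness of $k$ is implicitly required by both.
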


The next theorem establishes a similar result for distance-based loss functions (see Example \ref{loss:ab-dist-def-ex})
which, in general,  are not locally bounded.

\begin{theorem}\label{asymp-learn2}
Let $X$ be a Polish space, $Y\subset \R$ be a closed subset
and  $L: Y\times \R\to [0,\infty)$
be a convex, distance-based loss function of upper growth-type $p\in [1,\infty)$.
Moreover, let 
$(\Om,\ca A,\mu)$ be a probability space, 
$\ca Z:=((X_i,Y_i))_{i\geq 1}$ be an $X\times Y$-valued stochastic process on $\Om$ satisfying
the WLLN, and  $P$ be the asymptotic mean of $(\ca Z,\mu)$.
We assume $|P|_p<\infty$.
Finally, let $H$ be the $(L,P)$-rich RKHS of a continuous kernel on $X$.
Then there exists a null-sequence $(\lb_n)$ of strictly positive real numbers such that the 
$(\lb_n)$-SVM based on $H$ and $L$ is $L$-consistent for $\ca Z$. \\
In addition, if 
$\ca Z$ satisfies the SLLN then $(\lb_n)$ can be chosen such that the $(\lb_n)$-SVM is strongly $L$-consistent for $\ca Z$.
\end{theorem}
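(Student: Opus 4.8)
The plan is to follow the template of the proof of Theorem \ref{asymp-learn}, replacing the law of large numbers for bounded functions by the full WLLN on $\Lx 1 P$ and absorbing the unboundedness of $L$ in the label variable through the moment condition $|P|_p<\infty$. Write $\fP$ for the infinite-sample SVM, i.e.\ the minimiser over $H$ of $\lb\snorm{f}_H^2+\RP{L}{f}$, and $f_{T_n,\lb}$ for the empirical SVM from \eqref{svm-def}. As in the classical theory, the $(L,P)$-richness $\RPxB L H=\RPB L$ gives that the approximation error $A(\lb):=\inf_{f\in H}\bigl(\lb\snorm{f}_H^2+\RP{L}{f}\bigr)-\RPB L$ tends to $0$ as $\lb\to0$; in particular $\RP{L}{\fP}\to\RPB L$ and $\lb\snorm{\fP}_H^2\to0$.

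First I would fix $\lb>0$ and derive the standard oracle decomposition. Inserting $f=0$ yields $\snorm{f_{T_n,\lb}}_H^2\le\frac1{\lb}\frac1n\sum_{i=1}^n L(Y_i,0)$; since $L$ has upper growth type $p$ and $|P|_p<\infty$, the function $(x,y)\mapsto L(y,0)$ lies in $\Lx 1 P$, so by the WLLN this random bound converges in probability to $\RP{L}{0}/\lb$, and with probability tending to one $f_{T_n,\lb}$ lies in a fixed ball $B_R:=\{f\in H:\snorm f_H\le R\}$ with $R=R(\lb)$. Using that $f_{T_n,\lb}$ minimises the empirical regularised risk against the competitor $\fP$, one obtains $\RP{L}{f_{T_n,\lb}}-\RPB L\le A(\lb)+(I)+(II)$ with
\[
(I):=\RP{L}{f_{T_n,\lb}}-\tfrac1n\sum_{i=1}^n L\bigl(Y_i,f_{T_n,\lb}(X_i)\bigr),\qquad
(II):=\tfrac1n\sum_{i=1}^n L\bigl(Y_i,\fP(X_i)\bigr)-\RP{L}{\fP}.
\]
Term $(II)$ concerns the single fixed function $\fP$, so (as $(x,y)\mapsto L(y,\fP(x))\in\Lx 1 P$ by the growth and moment conditions) it tends to $0$ in probability directly from the WLLN, and $\mu$-almost surely under the SLLN.

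The crux is term $(I)$, where $f_{T_n,\lb}$ is data-dependent; I would prove a uniform law of large numbers over the fixed ball $B_R$,
\[
\sup_{f\in B_R}\Bigl|\,\tfrac1n\sum_{i=1}^n L\bigl(Y_i,f(X_i)\bigr)-\RP{L}{f}\Bigr|\longrightarrow 0 .
\]
The hard part is twofold: $L$ is unbounded in $y$, and $f_{T_n,\lb}$ roams over all of $B_R$. For the unboundedness I would truncate, $L_M:=\min\{L,M\}$, and dominate the tail uniformly over $f\in B_R$: by tightness of $P_X$ on the Polish space $X$, fix a compact $K$ with $P_X(X\setminus K)$ small; on $K$ the continuous kernel is bounded, so $\snorm f_\infty\le R\sup_{x\in K}\sqrt{k(x,x)}=:C_K$ and $L(y,f(x))\le c(|y|^p+C_K^p+1)=:g(y)\in\Lx 1 P$ for $x\in K$. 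The moment condition makes $\E_P g\,\eins_{\{g>M\}}$ small, and a second application of the WLLN to $g\,\eins_{\{g>M\}}$ controls the empirical tail $\frac1n\sum g(Y_i)\eins_{\{g(Y_i)>M\}}$ uniformly in $f$. For the bounded part $L_M$, the ball $B_R$ is uniformly bounded and equicontinuous on $K$, hence precompact in $C(K)$ by Arzelà--Ascoli; using the local Lipschitz continuity of the convex loss I cover $B_R$ by finitely many centres $f_1,\dots,f_N$ so that $L_M(\cdot,f(\cdot))$ is uniformly close to some $L_M(\cdot,f_j(\cdot))$, and then apply the WLLN to the finitely many bounded functions $(x,y)\mapsto L_M(y,f_j(x))\in\Lx 1 P$. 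I expect this uniform control — tail plus finite cover for an unbounded loss — to be the main obstacle.

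Finally I would combine the pieces by diagonalisation. For fixed $\lb$ the inequality $\RP{L}{f_{T_n,\lb}}-\RPB L\le A(\lb)+(I)+(II)$ with $(I),(II)\to0$ in probability shows that for every $\e>0$ one has $\mu\bigl(\RP{L}{f_{T_n,\lb}}-\RPB L>A(\lb)+\e\bigr)\to0$. Choosing $\lb^{(m)}\downarrow0$ with $A(\lb^{(m)})\le 1/m$ and, for each $m$, an index $n_m$ beyond which the estimation error for $\lb^{(m)}$ stays below $1/m$ with probability at least $1-1/m$, and setting $\lb_n:=\lb^{(m)}$ for $n_m\le n<n_{m+1}$, yields a null-sequence $(\lb_n)$ for which $\RP{L}{f_{T_n,\lb_n}}\to\RPB L$ in probability. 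Since only the countably many values $\lb^{(m)}$ enter, the SLLN version replaces each in-probability statement by a $\mu$-almost sure one on a countable intersection of full-measure sets, giving strong $L$-consistency.
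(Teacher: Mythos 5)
Your proposal is correct in outline but takes a genuinely different route from the paper. The paper proves no uniform law of large numbers over the ball $B_R$; instead it exploits the stability of the SVM optimisation problem. Theorem \ref{infinite:general-repres-co2} produces, for each $\lb$, a single function $h_\lb$ with $\snorm{\fP-\fT}_H\le\lb^{-1}\snorm{\E_P h_\lb\Phi-\E_T h_\lb\Phi}_H$; the ``local Lipschitz continuity'' of the risk functional (Lemma 25 of \cite{ChSt04b}) converts this into a bound on $|\RP L {\fT}-\RP L {\fP}|$; and Lemma \ref{lln-Hilbert} --- a Hilbert-space-valued LLN for the one function $h_\lb\Phi\in\Lx 1 P$, itself proved by a finite-rank approximation of $\Phi$ plus tightness on the Polish space --- supplies the convergence. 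Your decomposition into $A(\lb)+(I)+(II)$ with a uniform LLN over $B_R$ is exactly the ``more general though standard skeleton argument'' that the authors mention after Theorem \ref{asymp-learn2} and omit. What each buys: yours applies to any ERM scheme over a precompact hypothesis ball and does not need the representer-type stability bound; the paper's is SVM-specific but reduces the whole stochastic analysis to one $H$-valued empirical average, which is what lets the same machinery yield the quantitative mixing results (Theorems \ref{rate-learn} and \ref{rate-learn2}). The final diagonalisation is Lemma \ref{auswahl} in both cases.

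Two steps of your argument need repair. First, the covering step: you invoke ``the local Lipschitz continuity of the convex loss'', but for a distance-based loss of upper growth type $p>1$ on $Y=\R$ the quantity $|L|_{a,1}$ of (\ref{loss:lipschitz-loss-def}) is infinite (the supremum runs over all $y\in\R$; for least squares $\sup_y|\psi'(y-t)|=\infty$), and the paper explicitly warns that convex distance-based losses are in general \emph{not} locally Lipschitz. The fix is to apply the Lipschitz argument only to the truncation $L_M=\min\{L,M\}$: since $\psi$ is convex, $\min\{\psi,M\}$ is \emph{globally} Lipschitz (its slope is bounded by the one-sided derivatives of $\psi$ at the endpoints of the sublevel set $\{\psi\le M\}$), so $|L_M(y,f(x))-L_M(y,f_j(x))|$ is controlled by $\snorm{f-f_j}_\infty$ uniformly in $y$, which is all your net requires. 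Second, the passage to strong consistency is not merely ``a countable intersection of full-measure sets'': since $\lb_n$ varies with $n$ you need the diagonal statement that the estimation error at level $\lb^{(m(n))}$ evaluated at time $n$ tends to zero almost surely. Choose $n_m$ so that $\mu\bigl(\sup_{n\ge n_m}(\mbox{est.\ error for }\lb^{(m)})>1/m\bigr)\le 2^{-m}$ and apply Borel--Cantelli, or, as the paper does, keep $\sup_{m\ge n}$ inside the probability and invoke the characterisation of almost sure convergence from \cite[Lemma 20.6]{Bauer01}. A last remark: both your argument and the paper's tacitly require the kernel to be bounded (the paper normalises $\inorm k\le1$ at the start of its proof); otherwise functions in $B_R$ need not be bounded off your compact set $K$ and the dominating function $g$ fails there.
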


The techniques used in the proofs of Theorem \ref{asymp-learn} and \ref{asymp-learn2} are based on a (hidden) skeleton argument 
in the proof of Lemma \ref{lln-Hilbert}. A more general though standard skeleton argument can be used to derive 
results similar to Theorem \ref{asymp-learn} and \ref{asymp-learn2} for other empirical risk minimization methods using 
hypothesis sets with
reasonably controllable complexity. Due to space constraints we omit the details.

Let us now assume for a moment that $X$ is a subset of $\R^d$, $L$ is a loss function in the sense of 
either Theorem \ref{asymp-learn} or \ref{asymp-learn2}, and  $H$ is the RKHS of a Gaussian RBF kernel.
Then the above theorems together with the richness results from \cite{StHuSc06a} 
show that for all data-generating processes $\ca Z$ satisfying a law of large numbers there exist  suitable 
regularization sequences $(\lb_n)$ that allows us to build a consistent SVM.
However, the sequences of Theorem \ref{asymp-learn} or \ref{asymp-learn2}
depend on $\ca Z$, and consequently, it would be desirable to have 
either a universal sequence $(\lb_n)$, i.e.~a sequence that guarantees consistency for all $\ca Z$, 
or
a {\em consistent\/} method that finds suitable values for $\lb$
from the observations. Unfortunately, 
the following theorem 
due to Nobel, \cite{Nobel99a}, 
together with Birkhoff's ergodic theorem  
shows that neither of these alternatives is possible:\footnote{Recall that binary classification is the ``easiest''
non-parametric learning problem in the sense that negative results  for this learning problem can typically be  translated into negative results for almost all
learning problems defined by loss functions (cf.~p.118f in \cite{DeGyLu96} for some examples in this direction
and the proof of the below theorem in \cite{Nobel99a} for the least squares loss).}

\begin{theorem}\label{nobel}
There is no learning method which is $\lls$-consistent for all stationary ergodic processes $(X_i,Y_i)$
with values in 
$[0,1]\times [0,1]$, where $\lls$ denotes the usual least square loss $\lls(y,t):= (y-t)^2$, $y,t\in \R$.
Moreover, there is no learning method which is 
$\lclass$-consistent for all stationary ergodic processes $(X_i,Y_i)$
with values in  $[0,1]\times \{-1,1\}$, where 
$\lclass$ denotes the  classification  loss $\lclass(y,t):=\eins_{(-\infty,0]}(y\sign t)$, $y=\pm 1$, $t\in \R$.
\end{theorem}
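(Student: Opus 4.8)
The plan is to reduce the paper's notion of $L$-consistency, which is defined relative to the asymptotic mean $P$, to the classical notion of consistency with respect to a single distribution, and then to quote Nobel's negative result. The bridge between the two notions is Birkhoff's ergodic theorem together with the elementary observation, recorded just before Lemma \ref{limit-distrib}, that an identically distributed (in particular, a stationary) process is AMS with stationary mean equal to its one-dimensional marginal.

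First I would check that every stationary ergodic process $\ca Z := ((X_i,Y_i))_{i\geq 1}$ falls within the scope of our consistency definitions: by Theorem \ref{ergodic-char} it satisfies the SLLNE and the SLLN, hence a fortiori the WLLNE and WLLN, so its asymptotic mean $P$ exists. Since stationarity forces $\mu_{Z_i} = \mu_{Z_1}$ for every $i$, the Ces\`aro averages in (\ref{lln-expectations}) are constant in $n$, so $P(B) = \mu_{Z_1}(B)$; that is, the stationary mean is exactly the marginal distribution $P = \mu_{(X_1,Y_1)}$. Consequently $\RP{L}{\cdot}$ and $\RPB{L}$ are nothing but the ordinary $L$-risk and Bayes $L$-risk of this marginal, and ``$L$-consistency for $\ca Z$'' in our sense means precisely that $\RP{L}{f_{T_n}} \to \RPB{L}$ in probability, where $P$ is the distribution from which each pair $(X_i,Y_i)$ is drawn. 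This is exactly the consistency notion studied in \cite{Nobel99a}.

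With this identification in hand, I would invoke the construction of \cite{Nobel99a}, whose core is a negative result for the classification loss: for any measurable classification rule that work exhibits a stationary ergodic process with values in $[0,1]\times\{-1,1\}$ whose excess $\lclass$-risk stays bounded away from zero, which by the reduction above is the failure of $\lclass$-consistency in our sense. For the least squares loss I would then transfer this by the standard plug-in comparison: were some method universally $\lls$-consistent, the threshold of its estimate would be a universally $\lclass$-consistent classifier, because the classification excess risk is controlled by the $L_2(P_X)$-distance of the estimate to the conditional-mean function and hence by the excess $\lls$-risk. After relabeling $\{-1,1\}$ into $\{0,1\}\subset[0,1]$ (which preserves stationarity and ergodicity, images of such processes being again of the same type), this contradicts the classification statement and yields the $\lls$ statement as well.

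The main obstacle is not the reduction, which is essentially bookkeeping, but matching the conventions of \cite{Nobel99a} to ours on two points. First, our consistency requires convergence \emph{in probability} $\mu$, so one must verify that Nobel's construction defeats even in-probability consistency and not merely almost-sure consistency. Second, one must confirm that the processes Nobel produces are genuinely stationary, ergodic, and supported in the stated compact ranges, and that they defeat \emph{every} measurable learning method rather than some restricted family. A secondary point is the least-squares-to-classification transfer, where one must check that the comparison inequality applies to the marginals arising here and that the induced plug-in rule is admissible as a learning method.
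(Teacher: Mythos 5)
Your proposal is correct and matches the paper's treatment: the paper gives no independent proof of this theorem but simply cites \cite{Nobel99a}, with Birkhoff's ergodic theorem supplying exactly the bridge you describe (a stationary ergodic process satisfies the SLLN and its asymptotic mean is the one-dimensional marginal, so the paper's notion of $L$-consistency reduces to the classical one studied by Nobel). Your additional remarks on the classification-to-least-squares transfer are consistent with the paper's footnote on this point.
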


Roughly speaking the impossibility of finding a universal sequence $(\lb_n)$ is related to the fact that 
there is no {\em uniform\/} convergence speed in the LLNs for 
general processes. More precisely, if $\ca Z:=((X_i,Y_i))_{i\geq 1}$ is a stochastic process which satisfies a law of large numbers then for all 
$\e>0$, $n\geq 1$, and all suitable functions $f:X\times Y\to \R$ there exists a $\d(\e,f,n)>0$ with 
\begin{equation}\label{lln-disco}
\mu\biggl( \Bigl\{   \om \in \Om: \Bigl|   \frac 1 n \sum_{i=1}^n f \circ (X_i,Y_i)(\om) - \E_P f\Bigr|>\e\Bigr\}\biggr) \Leq \d(\e,f,n)
\end{equation}
and $\lim_{n\to \infty}\d(\e,f,n)=0$. 
Now, the proofs of Theorem \ref{asymp-learn} and Theorem \ref{asymp-learn2} (essentially) show that 
we can determine a sequence $(\lb_n)$
whenever we know such $\d(\e,f,n)$ for all $\e>0$, $n\geq 1$, and a suitably large class of functions $f$.
However, since there exists no universal sequence $(\lb_n)$ by Theorem \ref{nobel} we consequently see that 
there exists no values $\d(\e,f,n)$ such that (\ref{lln-disco}) holds for {\em all\/} (stationary) processes satisfying a law of large numbers.

This discussion shows that in order to build consistent SVMs for interesting classes of processes
one has to find quantitative versions of laws of large numbers. For i.i.d.~processes such laws have been established in recent years 
by several authors. In the following section we will present a simple yet powerful method 
for establishing  quantitative versions of laws of large numbers
 for mixing processes.

\section{Consistency for Mixing Processes}\label{mixing}

In this section we derive consistency results for SVMs under the assumption that the data-generating process
satisfies certain mixing conditions. These mixing conditions generally quantify how much a process fails to 
be independent. In the first subsection we recall some commonly used mixing conditions. In the second
subsection we then present our consistency results and compare them with known consistency results for 
other learning algorithms.

\subsection{A Brief Introduction to Mixing Coefficients for Processes}\label{mixing-coeff}

In this subsection we recall 
some standard mixing coefficients and their basic properties
(see e.g.~\cite{Bradley05a} and \cite{Bosq98} for thorough treatment). To this end let $\Om$ be a set, 
 $\ca A$ and $\ca B$ be two $\s$-algebras on $\Om$, and $\mu$ be a probability measure on $\s(\ca A\cup \ca B)$. 
Furthermore, let $H$ be a Hilbert space and $\sL p{\ca A,\mu,H}$ be the space of all 
$\ca A$-measurable $H$-valued functions that are $p$-integrable with respect to $\mu$. Using the convention $\frac 0 0 := 0$ we define
the following mixing coefficients for the pair $(\ca A,\ca B)$:
\begin{eqnarray*}
\a(\ca A,\ca B,\mu) & :=&  \sup_{\substack{A\in \ca A\\ B\in \ca B}} \bigl|  \mu(A\cap B) - \mu(A)\mu(B)     \bigr|\\
\b(\ca A,\ca B,\mu) & := &  \frac 1 2 \sup\biggl\{ \sum_{i=1}^\infty\sum_{j=1}^\infty\bigl| \mu(A_i\cap B_j) - \mu(A_i)\mu(B_j) \bigr| :
(A_i)\subset \ca A \mbox{ and } (B_j)\subset \ca B \mbox{ partitions}\biggr\}\\
\p(\ca A,\ca B,\mu) & := & \sup_{\substack{A\in \ca A\\ B\in \ca B}} \biggl|  \frac{\mu(A\cap B) - \mu(A)\mu(B)}{\mu(A)}     \biggr|\\
\p_{\mathrm{sym}}(\ca A,\ca B,\mu) & := & \sqrt{\p(\ca A,\ca B,\mu) \cdot \p(\ca B,\ca A,\mu)}\\
R_p^H(\ca A,\ca B,\mu) & := & \sup_{\substack{f\in \sL p{\ca A,\mu,H}  \\ g\in \sL p{\ca B,\mu,H} }} \biggl|  \frac{\E_\mu\langle f,g\rangle  - \langle \E_\mu f,\E_\mu g\rangle}{\snorm f_p\, \snorm g_p}         \biggr|\, , \qquad \qquad \qquad p\in [2,\infty].
\end{eqnarray*}
It is obvious from the definitions that all mixing coefficients equal 0 if $\ca A$ and $\ca B$ are independent. 
Furthermore, besides $\p$ they are all symmetric in $\ca A$ and $\ca B$.
Moreover, we have   $\a(\ca A,\ca B,\mu)  \in[0,1/4]$ and 
$\b(\ca A,\ca B,\mu), \p(\ca A,\ca B,\mu), \p_{\mathrm{sym}}(\ca A,\ca B,\mu), R_p^H(\ca A,\ca B,\mu)\in [0,1]$.
In addition, they satisfy  the relations (see e.g.~\cite[Section 1]{Bradley05a} and the references therein):
\begin{displaymath}
\begin{array}{rcccl}
2 \a(\ca A,\ca B,\mu) & \leq & \b(\ca A,\ca B,\mu) & \leq & \p(\ca A,\ca B,\mu) \\
4 \a(\ca A,\ca B,\mu) & \leq & R_p^\R(\ca A,\ca B,\mu) & \leq & 2\p_{\mathrm{sym}}(\ca A,\ca B,\mu)\, , \qquad \qquad p\in [2,\infty].
\end{array}
\end{displaymath}
Moreover, the coefficients $R_p^H(\ca A,\ca B,\mu)$ are essentially equivalent to the coefficients $R_p^\R(\ca A,\ca B,\mu)$ for the scalar case
since 
\cite[Thm.~4.1]{BrBrJa87a} shows that for all $p\in [2,\infty]$ there exists a constant $c_p>0$ such that for all Hilbert spaces $H$ we have
\begin{equation}\label{grothendieck-cor}
R_p^\R(\ca A,\ca B,\mu) \Leq R_p^H(\ca A,\ca B,\mu) \Leq c_p \, R_p^\R(\ca A,\ca B,\mu)\, .
\end{equation}
Note that for $p=2$ we actually have $c_p=1$ and for $p=\infty$ we may choose the famous Grothendieck constant (see the proof of Lemma 2.2 in \cite{DePh82a}). Moreover, it is obvious from the definition that $R_p^H(\ca A,\ca B,\mu)$ is decreasing in $p$, i.e.
$$
R_p^H(\ca A,\ca B,\mu) \Leq R_q^H(\ca A,\ca B,\mu) \, , \qquad \qquad q\leq p.
$$
In particular this yields $R_\infty^H(\ca A,\ca B,\mu) \leq R_p^H(\ca A,\ca B,\mu) \leq R_2^H(\ca A,\ca B,\mu)$ for all $p\in [2,\infty]$.
Finally, Theorem 4.13 in \cite{Bradley05v1} gives the highly non-trivial relation
\begin{equation}\label{mix-rio}
R_p^\R(\ca A,\ca B,\mu) \Leq 2\pi \,  \a^{1-\frac 2 p} (\ca A,\ca B,\mu)\, \p_{\mathrm{sym}}^{\frac 2 p}(\ca A,\ca B,\mu)\, , \qquad \qquad p\in [2,\infty].
\end{equation}
In view of our consistency results we are mainly interested in the coefficients $R_p^H$. Note that 
with the help of the above inequalities these coefficients
can be
estimated by the typically more accessible coefficients $\a$ and $\p$. The coefficient $\b$, which can often 
(see \cite[Prop.~3.22]{Bradley05v1} for an exact statement)
be
computed by 
$$
\b(\ca A,\ca B,\mu) = \E_\mu \sup_{B\in \ca B}\bigl| \mu(B)-\E_\mu(B|\ca A)  \bigr|\, ,
$$ 
is mainly mentioned because it was used in earlier 
works (see e.g.~\cite{Vidyasagar02,LoKuSc06a}) on learning from dependent observations.

Let us now consider mixing coefficients and corresponding mixing notion for stochastic processes:

\begin{definition}
Let $(\Om,\ca A,\mu)$ be a probability space,
$Z$ be a measurable space, and $\ca Z:=(Z_i)_{i\geq 1}$ be a $Z$-valued stochastic process on $\Om$.
Furthermore, let $\xi$ be one of the above mixing coefficients.
For $i,j\geq 1$ we define the {\em $\xi$-bi-mixing coefficient of $\ca Z$} by
$$
\xi(\ca Z,\mu,i,j) :=  \xi\bigl(\s(Z_i),\s(Z_j),\mu\bigr)\, ,
$$
where $\s(Z_i)$ denotes the $\s$-algebra generated by $Z_i$.
Furthermore,  for $n\geq 1$ the $\xi$-mixing and $\bar\xi$-mixing coefficients of $\ca Z$ are  de\-fined by 
\begin{eqnarray*}
\xi(\ca Z,\mu,n) &  := & \sup_{i\geq 1}\xi( \ca Z,\mu,i,i+n)\\
\bar \xi(\ca Z,\mu,n) & := & \sup_{i\geq 1} \xi  \bigl(\s(Z_1,\dots,Z_i), \s(Z_{i+n},Z_{i+1+n},\dots),\mu  \bigr)\, ,
\end{eqnarray*}
respectively.
In addition, we say that the process $\ca Z$ is:
\begin{enumerate}
\item {\em $\xi$-mixing} with respect to $\mu$ if the $\xi$-mixing coefficients tend to 0, i.e.
	$$
	\lim_{n\to \infty}\xi(\ca Z,\mu,n)= 0\, .
	$$
\item {\em weakly $\xi$-mixing} with respect to $\mu$ if the $\xi$-mixing coefficients tend to 0 on average, i.e. 
	$$
	\lim_{n\to \infty}\frac  1{  n}  \sum_{k=1}^n  \xi(\ca Z,\mu,k)=0\, .
	$$
\item {\em weakly $\xi$-bi-mixing} with respect to $\mu$ if the $\xi$-bi-mixing coefficients tend to 0 on average, i.e. 
	\begin{equation}\label{amix-markov-h1}
	\lim_{n\to \infty}\frac  1{  n^{2}}  \sum_{i=1}^n\sum_{j=1}^{i-1} \xi(\ca Z,\mu,i,j)= 0\, .
	\end{equation}
\end{enumerate}
Finally, we define mixing notions analogous to {i)} and {ii)} for $\bar \xi$.
\end{definition}

It is immediately clear that $\xi(\ca Z,\mu,n) \leq \bar \xi(\ca Z,\mu,n)$, and consequently, every upper bound on $\bar \xi(\ca Z,\mu,n)$
translates into an upper bound on $\xi(\ca Z,\mu,n)$. This trivial observation is interesting since the literature typically deals with 
$\bar \xi(\ca Z,\mu,n)$, whereas the consistency results
which we will present in the following subsection 
 only require bounds on $\xi(\ca Z,\mu,n)$ or $\xi(\ca Z,\mu,i,j)$.
Finally, it is interesting to note that for stationary, homogeneous Markov chains $\ca Z$ we   actually have 
$\xi(\ca Z,\mu,n) = \bar \xi(\ca Z,\mu,n)$ for all $n\geq 1$ and $\xi\neq \p_{\mathrm{sym}}$.

Obviously, every  $\xi$-mixing process is weakly $\xi$-mixing, and since a simple induction over $n\in \N$ shows
$$
\sum_{i=1}^n\sum_{j=1}^{i-1} \xi(\ca Z,\mu,i,j) = \sum_{k=1}^{n-1}\sum_{m=1}^{n-k} \xi(\ca Z,\mu,m+k,m)\, , \qquad \qquad n\geq 1,
$$
we also see that every weakly $\xi$-mixing process is weakly $\xi$-bi-mixing. Moreover, 
if the process $\ca Z$ is  $\mu$-stationary in the wide sense then an elementary proof (see e.g.~\cite[Prop.~3.6]{Bradley05v1}) shows 
$\xi(\ca Z,\mu,i,j) = \xi(\ca Z,\mu,i+k,j+k)$ for all $i,j,k\geq 1$. Since  this implies
$\xi(\ca Z,\mu,i,j) = \xi(\ca Z,\mu,i-j+1)$ for $i\geq j\geq 1$ we then find
\begin{equation}\label{bi-mix-mix}
\sum_{i=1}^n\sum_{j=1}^{i-1} \xi(\ca Z,\mu,i,j) 
= 
\sum_{k=1}^{n-1}\sum_{m=1}^{n-k} \xi(\ca Z,\mu,m+k,m)
= 
\sum_{k=1}^{n-1} (n-k) \, \xi(\ca Z,\mu,k+1)
\end{equation}
for all $n\geq 1$. Consequently, every  stationary weakly $\xi$-bi-mixing process is actually weakly $\xi$-mixing.
%
%From the  relations presented  earlier between the different mixing coefficients we immediately obtain the following 
%implications
%\begin{displaymath}
%\begin{array}{rcccll}
%\p\mbox{-mixing } & \implies & \b \mbox{-mixing } & \implies & \a\mbox{-mixing }\\
%\p\mbox{-mixing } & \implies & R_p \mbox{-mixing } & \implies & \a\mbox{-mixing } \qquad \qquad& \mbox{for all } p\in[2,\infty] \\
%\a\mbox{-mixing } & \Longleftrightarrow & R_p \mbox{-mixing } &  & & \mbox{for all } p\in(2,\infty] ,
%\end{array}
%\end{displaymath}
%and the same implications obviously hold true for weak mixing, weak bi-mixing, and the $\bar\xi$-versions of all these mixing conditions.
Moreover, if the process $\ca Z$ is stationary and mixing in the sense of (\ref{mixing-general}), then \cite[Theorem 4.1]{Bradley05a} shows that
$\bar \b(\ca Z,\mu,n_0)<1$ or $\bar \p(\ca Z,\mu,n_0)<1$ for some $n_0\geq 1$ implies $\bar \b$-mixing or $\bar \p$-mixing, respectively.
Finally, it is discussed on \cite[p.~124]{Bradley05a} that  stationary processes $\ca Z$
with $\bar \p(\ca Z,\mu,n_0)<1/2$ for some $n_0\geq 1$
are $\bar \p$-mixing.

Examples of $\bar \xi$-mixing, and in particular $\bar \a$-mixing 
processes including certain Markov,  ARMA, MA($\infty$), and GARCH processes can be found in 
\cite[Sect.~2.6.1]{FaYa03} and \cite[p.~405ff]{Bradley05v1}.
Moreover, mixing properties of Gaussian processes are considered in \cite[Chapter 9]{Bradley05v1}. In particular, \cite[Theorem 9.5]{Bradley05v1}
shows $\bar \a(\ca Z,\mu,n) \leq \bar R_2^\R(\ca Z,\mu,n)\leq 2\pi \bar\a(\ca Z,\mu,n)$, $n\geq 1$, for stationary Gaussian processes.
Finally, \cite[Theorem 26.5]{Bradley05v3} together with \cite[Proposition 3.18]{Bradley05v1} 
shows that for all continuous, strictly decreasing functions $g:[0,\infty)\to (0,1/24)$
for which $x\mapsto \log g(x)$ is convex there exists a stationary process $\ca Z$
with $g(n)/4 \leq \bar \a(\ca Z,\mu,n) \leq \bar \p(\ca Z,\mu,n)\leq 4g(n)$ for all $n\geq 1$.
Note that this result in particular shows that in general the $\bar \xi$-mixing rates can be arbitrarily slow.
A brief survey of these and other results together with various references is given in \cite{Bradley05a}.

For Markov chains there are quite a few results on mixing coefficients 
(see e.g.~\cite{Bradley05a}, \cite[Chapter 7]{Bradley05v1}, and \cite[Chapter 21]{Bradley05v2}). Here we only 
recall the most important ones:
\cite[Theorem 7.5]{Bradley05v1} (see also \cite[Theorem 3.3]{Bradley05a}) shows that if a homogeneous Markov chain $\ca Z$ satisfies 
$R_2^\R(\ca Z,\mu,n_0)<1$ or $\p(\ca Z,\mu,n_0)<1/2$ for some $n_0\geq 1$ then 
$R_2^\R(\ca Z,\mu,n)$ or $\p(\ca Z,\mu,n)$ 
tend at least exponentially fast to 0, and by considering the proof it is also possible to derive explicit bounds for this convergence.
Moreover, if the Markov chain is also stationary, ergodic and aperiodic then $\p(\ca Z,\mu,n_0)<1$ suffices to obtain exponential $\p$-mixing rates.
In contrast,  for stationary Markov chains there are {\em no\/} similar results possible for $\b$-mixing coefficients (see e.g.~\cite[Theorem 21.3]{Bradley05v2}) or $\a$-mixing coefficients 
(see e.g.~\cite[Ex.~7.11]{Bradley05v1}).
Because of this lack previous learning results based on $\b$-mixing required 
rather strong additional assumptions on (stationary) Markov chains such as certain variants of geometric mixing conditions
(see e.g.~\cite[p.~100ff]{Vidyasagar02} and compare with \cite[Theorem 3.7]{Bradley05a} which shows that such geometric mixing conditions are
equivalent to exponentially fast $\b$-mixing).
Moreover, \cite[Theorem 3.4]{Bradley05a} shows that stationary, ergodic, and aperiodic Markov chains $\ca Z$ with 
$\a(\ca Z,\mu,n_0)<1/4$ for some $n_0\geq 1$
are automatically $\a$-mixing. Similarly, \cite[Corollary 3.6]{Bradley05a} shows that stationary, aperiodic Markov chains are
$\b$-mixing if and only if they are irreducible or Harris recurrent. 
Finally, stationary  Markov processes $\ca Z$
satisfying Doeblin's condition (\ref{doeblin}) satisfy $\p(\ca Z,\mu,n_0)<1$ for some $n_0\geq 1$
(see e.g.~\cite[p.~121]{Bradley05a}).
Further information on mixing properties of Markov chains can be found in \cite[Chapter 21]{Bradley05v2}.

Now let $(Z,\ca B,\mu)$ be a probability  space
and $\ca Z:= (T^{i-1})_{i\geq 1}$ be an invariant dynamical system on $Z$.  For $i\geq j\geq 1$ we then have $\s(Z_i)= \s(T^{i-1}) \subset \s(T^{j-1}) = \s(Z_j)$ and hence we obtain 
\begin{eqnarray*}
\a(\ca Z,\mu,i,j) 
%= 
%\sup_{\substack{A\in  \s(Z_i)\\ B\in \s(Z_j)}} \bigl|  \mu(A\cap B) - \mu(A)\mu(B)     \bigr|
\geq 
\sup_{A\in  \s(Z_i)} \bigl|  \mu(A\cap A) - \mu(A)\mu(A)     \bigr| 
%&=&
%\sup_{B\in \ca B} \bigl|  \mu(T^{i-1}B) - \mu^2(T^{i-1}B)     \bigr|\\
 = 
\sup_{B\in \ca B} \mu(B)\bigl(1-\mu(B)\bigr)\, .
\end{eqnarray*}
Consequently, $\ca Z$ is not weakly $\a$-bi-mixing if $\ca B$ is not $\mu$-trivial. However, note that {\em images} of dynamical systems
can even be strongly $\a$-mixing. Indeed, every i.i.d.~sequence is the image of an invariant dynamical system and the independence 
implies that all $\a$-coefficients are equal to 0.
For more information on ergodic mixing and its relation to $\xi$-mixing we refer to \cite[Chapter 22]{Bradley05v3} and \cite{Bradley05a}.

Let us finally discuss some laws of large numbers for mixing processes.
We begin with the 
following simple result which shows that asymptotically mean stationary, 
weakly bi-mixing processes satisfy the WLLNE:

\begin{proposition}\label{mix-implies-wlln}
Let $(\Om,\ca A,\mu)$ be a probability space,
$Z$ be a measurable space, and $\ca Z:=(Z_i)_{i\geq 1}$ be a $Z$-valued stochastic process on $\Om$
which is weakly $\a$-bi-mixing with respect to $\mu$. 
Then 
the following statements are equivalent:
\begin{enumerate}
\item $\ca Z$ is AMS. 
\item $\ca Z$ satisfies the WLLNE.
\end{enumerate}
\end{proposition}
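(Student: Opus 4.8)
The plan is to treat the two implications asymmetrically. The implication (ii) $\Rightarrow$ (i) needs no mixing hypothesis at all: it is exactly the first assertion of Theorem~\ref{exists-limit-distrib-th}, which shows that any process satisfying the WLLNE is automatically AMS (and that the constants $c_B$ coincide with the stationary mean $P(B)$). So the entire content is the implication (i) $\Rightarrow$ (ii), and for this I would show that, under the AMS assumption, weak $\a$-bi-mixing forces an $L_2(\mu)$ law of large numbers for indicators, whence the WLLNE follows by Chebyshev.

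Fix a measurable $B\subset Z$ and abbreviate $\xi_i:=\eins_B\circ Z_i=\eins_{A_i}$, where $A_i:=Z_i^{-1}(B)\in\s(Z_i)$. Set $c_B:=P(B)$, using that the AMS hypothesis provides the stationary mean $P$, which is a genuine probability measure by Lemma~\ref{limit-distrib}. The goal is $\E_\mu\bigl(\tfrac1n\sum_{i=1}^n\xi_i-c_B\bigr)^2\to0$. Splitting into bias and variance,
\[
\E_\mu\Bigl(\tfrac1n\sum_{i=1}^n\xi_i-c_B\Bigr)^2=\Bigl(\tfrac1n\sum_{i=1}^n\E_\mu\xi_i-c_B\Bigr)^2+\E_\mu\Bigl(\tfrac1n\sum_{i=1}^n(\xi_i-\E_\mu\xi_i)\Bigr)^2,
\]
the bias term equals $\bigl(\tfrac1n\sum_{i=1}^n\E_\mu\eins_B\circ Z_i-P(B)\bigr)^2$ and vanishes in the limit directly by the defining relation (\ref{lln-expectations}) of the AMS property. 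Everything therefore reduces to the variance.

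For the variance I would expand
\[
\E_\mu\Bigl(\tfrac1n\sum_{i=1}^n(\xi_i-\E_\mu\xi_i)\Bigr)^2=\tfrac1{n^2}\sum_{i,j=1}^n\bigl(\mu(A_i\cap A_j)-\mu(A_i)\mu(A_j)\bigr).
\]
The diagonal terms contribute $\tfrac1{n^2}\sum_{i=1}^n\mu(A_i)\bigl(1-\mu(A_i)\bigr)\le\tfrac1{4n}\to0$. For the off-diagonal terms the one mixing-specific step is the observation that, since $A_i\in\s(Z_i)$ and $A_j\in\s(Z_j)$, the definition of the $\a$-coefficient gives
\[
\bigl|\mu(A_i\cap A_j)-\mu(A_i)\mu(A_j)\bigr|\le\a\bigl(\s(Z_i),\s(Z_j),\mu\bigr)=\a(\ca Z,\mu,i,j);
\]
for indicator functions this requires nothing beyond the definition, with no extra constant. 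Using the symmetry $\a(\ca Z,\mu,i,j)=\a(\ca Z,\mu,j,i)$ to fold the sum onto $j<i$,
\[
\Bigl|\tfrac1{n^2}\sum_{i\neq j}\bigl(\mu(A_i\cap A_j)-\mu(A_i)\mu(A_j)\bigr)\Bigr|\le\tfrac2{n^2}\sum_{i=1}^n\sum_{j=1}^{i-1}\a(\ca Z,\mu,i,j),
\]
and the right-hand side tends to $0$ precisely because $\ca Z$ is weakly $\a$-bi-mixing, i.e.\ by (\ref{amix-markov-h1}).

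Combining the bias and variance estimates yields $\E_\mu\bigl(\tfrac1n\sum_{i=1}^n\xi_i-c_B\bigr)^2\to0$, and Chebyshev's inequality then delivers, for every $\e>0$, the convergence in (\ref{wlln-events}) with $c_B=P(B)$; this is the WLLNE. This is exactly the mechanism behind the remark following Proposition~\ref{uncor-wllne}: the bi-mixing hypothesis is used only to verify the variance condition (\ref{uncor-limit-h1}), after which the argument of that proposition applies. I expect no genuine technical obstacle here, since every quantity in sight is bounded by $1$; the only thing to get right is the bookkeeping that identifies the double sum produced by the variance with the Ces\`aro-type sum in the definition (\ref{amix-markov-h1}) of weak $\a$-bi-mixing, so that only the decay of that average --- not any quantitative mixing rate --- is needed.
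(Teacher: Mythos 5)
Your proposal is correct and follows essentially the same route as the paper: both implications are handled identically ((ii) $\Rightarrow$ (i) via Theorem \ref{exists-limit-distrib-th}, and (i) $\Rightarrow$ (ii) via a second-moment bound whose diagonal is $O(1/n)$ and whose off-diagonal is controlled by the averaged bi-mixing coefficients, followed by Chebyshev/Markov). The only difference is cosmetic and in your favor: the paper bounds the covariances by routing through $R_\infty^\R(\ca A,\ca B,\mu)\leq 2\pi\,\a(\ca A,\ca B,\mu)$ from (\ref{mix-rio}), whereas you observe that for indicators of $A_i\in\s(Z_i)$ and $A_j\in\s(Z_j)$ the bound $|\mu(A_i\cap A_j)-\mu(A_i)\mu(A_j)|\leq\a(\ca Z,\mu,i,j)$ is literally the definition of the $\a$-coefficient, which avoids the constant $2\pi$ entirely.
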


For the quite simple proof of this proposition we refer to Section \ref{proofs}. Moreover, using \cite[Thm.~8.2.1]{Revesz68}
it is easy to see that for $\bar \a$-mixing processes AMS is actually equivalent to SLLNE.
Finally, \cite[Cor.~8.2.2]{ZhCh96} shows that identically distributed processes $\ca Z$ with 
\begin{equation}\label{phi-slln}
\sum_{n=1}^\infty \sqrt{\bar \p(\ca Z,\mu,2^n)}<\infty
\end{equation}
satisfy the SLLN. Note that in the above summability condition only a ``few'' $\bar \p$-coefficients are considered.
In particular, (\ref{phi-slln}) is satisfied whenever there are constants $c>0$ and $\a>2$ with   
$\bar \p(\ca Z,\mu,n)\leq c\, (\ln n )^{-\a}$ for all  $n\geq 2$.

\subsection{Consistency of SVMs for Mixing Processes}\label{mixing-consist}

In this subsection we establish consistency results for data-generating processes
with known upper bounds on the weakly $\a$-bi-mixing rate.
Unlike in the case of general processes satisfying a law of large numbers these new consistency results
give explicit conditions on the regularization sequences guaranteeing consistency.

In order to formulate these results we have to introduce a new quantity. To this end let $k$ be a bounded kernel
over some set $X$. Then the supremum norm of $k$ is defined by 
$$
\inorm k := \sup_{x\in X} \sqrt{k(x,x)}\, . 
$$
Note that the boundedness of $k$ implies $\inorm k<\infty$. Moreover, for the Gaussian kernels $k_\s$ we have 
$\inorm{k_\s}= 1$.

Now we can present our first consistency result which deals with locally Lipschitz-continuous loss functions:

\begin{theorem}\label{rate-learn}
Let $X$ be a separable metric space, $Y\subset \R$ be a closed subset
and  $L:X\times Y\times \R\to [0,\infty)$
be a convex, locally Lipschitz continuous loss function with $\inorm{L(.,.,0)}\leq c$.
Moreover, let 
$(\Om,\ca A,\mu)$ be a probability space, 
$\ca Z:=((X_i,Y_i))_{i\geq 1}$ be an $X\times Y$-valued, AMS stochastic process on $\Om$, and  $P$ be the asymptotic mean of $(\ca Z,\mu)$.
In addition,  let $H$ be an $(L,P)$-rich RKHS over $X$ with  bounded continuous kernel $k$.
We write 
$$
B_{\lb}:= \snorm k_\infty \Bigl({\frac{c}\lb}\Bigr)^{1/2}\, , \qquad \qquad \lb>0.
$$
Finally, assume that there are constants $C\in (0,\infty)$ and $\a\in (0,1]$ with
\begin{eqnarray}\label{rate-learn-h1}
\biggl| \frac  1 n \sum_{i=1}^n \E_\mu f\circ Z_i - \E_P f\biggr| & \leq & C \inorm fn^{-\a}\\ \label{rate-learn-h2}
\frac  1{  n^{2}}  \sum_{i=1}^n\sum_{j=1}^{i-1} \a(\ca Z,\mu,i,j) & \leq & C n^{-\a}
\end{eqnarray}
for all $f\in \sL \infty Z$ and all $n\geq 1$. Then for all 
null-sequence $(\lb_n)$ of strictly positive real numbers with
\begin{equation}\label{rate-learn-h3}
\frac {|L|_{B_{\lb_n},1}^4}{\lb_n^2n^\a}\to 0\
\end{equation}
the corresponding $(\lb_n)$-SVM based on $H$ and $L$ is  $L$-consistent for $\ca Z$.
\end{theorem}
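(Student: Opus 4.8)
The plan is to reduce $L$-consistency to a quantitative uniform law of large numbers over a ball of $H$ whose radius grows like $\lb_n^{-1/2}$. First I would record the two deterministic norm bounds obtained by inserting $f=0$ into the regularized objectives: since $\RP L 0 \le \inorm{L(.,.,0)} \le c$ and the analogous inequality holds for the empirical objective, both the population solution $\fP$ and the empirical solution $\fT$ (taken at $T=T_n$, $\lb=\lb_n$) satisfy $\snorm f_H \le (c/\lb_n)^{1/2}$. By the reproducing property $|f(x)| \le \inorm k\,\snorm f_H$, so both functions lie in the sup-norm ball of radius $B_{\lb_n}$, which is precisely why $B_{\lb_n}$ and the local Lipschitz constant $|L|_{B_{\lb_n},1}$ are the governing quantities.

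Next I would set up the standard risk decomposition. Writing $\ca R_{L,P,\lb}$ and $\ca R_{L,T,\lb}$ for the regularized population and empirical risks, the minimizing property of $\fT$ gives
$$\RP L {\fT} - \RPB L \;\le\; \bigl(\ca R_{L,P,\lb}(\fP) - \RPB L\bigr) + 2\sup_{\snorm f_H \le (c/\lb)^{1/2}} \bigl| \ca R_{L,T}(f) - \RP L f\bigr|\,,$$
since the regularization terms cancel in each empirical-minus-population comparison and $\fP$ lies in the same ball. The first bracket is the approximation error; because $H$ is $(L,P)$-rich we have $\RPxB L H = \RPB L$, and it is standard that $\ca R_{L,P,\lb}(\fP) - \RPB L \to 0$ as $\lb \to 0$. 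Hence it suffices to show that the supremum term, at $\lb=\lb_n$, tends to $0$ in probability $\mu$.

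For the supremum I would first bound the deviation of a single $h_f := L(\cdot,\cdot,f(\cdot)) \in \sL\infty Z$ and then uniformize. For fixed $f$ in the ball, Chebyshev's inequality reduces matters to $\E_\mu(\frac1n\sum_{i=1}^n h_f\circ Z_i - \E_P h_f)^2$, which I split into a bias part, controlled by (\ref{rate-learn-h1}) as $C\inorm{h_f}n^{-\a}$, and a variance part $\frac1{n^2}\sum_{i,j}\mathrm{Cov}_\mu(h_f\circ Z_i, h_f\circ Z_j)$. The diagonal contributes $O(\inorm{h_f}^2/n)$; for the off-diagonal I would use the covariance inequality for bounded functions, $|\mathrm{Cov}_\mu(U,V)| \le R_\infty^\R(\s(Z_i),\s(Z_j),\mu)\inorm U\inorm V \le 2\pi\,\a(\ca Z,\mu,i,j)\inorm U\inorm V$, where the last step is (\ref{mix-rio}) at $p=\infty$. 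Summing and invoking (\ref{rate-learn-h2}) gives a variance bound of order $\inorm{h_f}^2 n^{-\a}$, and since $\inorm{h_f} \le c + |L|_{B_{\lb},1}B_{\lb}$ on the ball, the per-function deviation is of order $(|L|_{B_\lb,1}B_\lb)^2 n^{-\a}$.

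The main obstacle is the uniformization over the (infinite-dimensional, hence $H$-noncompact) ball; this is the content of the skeleton argument behind Lemma~\ref{lln-Hilbert}. What rescues a finite skeleton is that $\inorm k<\infty$ forces $\int k(x,x)\,dP_X<\infty$, making the inclusion $H \hookrightarrow \Lx 2 {P_X}$ compact, so the ball is totally bounded in $\Lx 2 {P_X}$; moreover both risks are $L_2$-Lipschitz, $|\RP L f - \RP L {f'}| \le |L|_{B_\lb,1}\snorm{f-f'}_{\Lx 2 {P_X}}$, with the analogous empirical statement. I would thus take a finite $\eta$-net of the ball in $\Lx 2 {P_X}$, apply the single-function bound on the net by a union bound, and control the discretization error by reconciling the empirical and population $L_2$ metrics through the convergence of the averaged marginals $\frac1n\sum\mu_{X_i}$ to $P_X$ (again of type (\ref{rate-learn-h1})). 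Balancing the net cardinality, the per-point fluctuation $\sim(|L|_{B_\lb,1}B_\lb)n^{-\a/2}$, and the discretization scale yields an expected-supremum bound of order $|L|_{B_\lb,1}^2 \inorm k^2 c\,\lb^{-1}n^{-\a/2}$, whose square is exactly the combination in (\ref{rate-learn-h3}); so $|L|_{B_{\lb_n},1}^4/(\lb_n^2 n^\a)\to 0$ forces the sample term to vanish in probability, and with the approximation error this gives $L$-consistency. I expect the delicate point to be precisely this uniformization for dependent observations, where the random empirical metric must be matched to the fixed $\Lx 2 {P_X}$ metric without the independence that symmetrization arguments usually exploit.
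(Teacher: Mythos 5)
Your overall decomposition into an approximation error plus a uniform deviation over the ball $\{f\in H:\snorm f_H\le (c/\lb_n)^{1/2}\}$ is sound, and your treatment of a single fixed $f$ (Chebyshev, bias via (\ref{rate-learn-h1}), off-diagonal covariances via $R_\infty^\R\le 2\pi\,\a$ and (\ref{rate-learn-h2})) coincides with the second-moment computation that actually appears in the paper. But the route you take to uniformize is not the paper's, and as sketched it does not deliver condition (\ref{rate-learn-h3}). The paper avoids any uniform law of large numbers over the ball: it invokes the stability bound of Theorem \ref{infinite:general-repres-co1}, namely $\mnorm{\fPnn-\fTno}_H\le\lb_n^{-1}\mnorm{\E_{P_n}h_n\Phi-\E_{T_n(\om)}h_n\Phi}_H$ with $\inorm{h_n}\le|L|_{B_{\lb_n},1}$, so that the entire stochastic part collapses to the deviation of a \emph{single} $H$-valued average, to which Markov's inequality and the covariance bound apply directly; the comparison with $P$ is routed through the averaged marginal $P_n$ and (\ref{rate-learn-h1}). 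The authors explicitly record that the skeleton/covering-number route you propose was considered and leads to substantially stronger conditions on $(\lb_n)$.

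The concrete gap is in your balancing step. With only polynomial (Chebyshev) tails available at each net point, the union bound multiplies the per-point failure probability by the cardinality $N(\eta)$ of the $\eta$-net, so to keep the total failure probability small you need a deviation threshold of order $\sqrt{N(\eta)}\,|L|_{B_{\lb_n},1}B_{\lb_n}n^{-\a/2}$; for a general bounded continuous kernel on a non-compact separable metric space there is no kernel-independent control of $N(\eta)$ --- the covering numbers of the ball in $\Lx 2{P_X}$ are governed by the spectrum of the kernel integral operator and blow up as $\eta\to 0$. Your claimed expected-supremum bound of order $|L|_{B_{\lb_n},1}^2\inorm k^2 c\,\lb_n^{-1}n^{-\a/2}$ therefore does not follow from the balancing you describe: the covering numbers cannot be made to vanish from the final estimate. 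In addition, the step you yourself flag as delicate --- matching the random empirical $L_2$ metric to $\Lx 2{P_X}$ without independence --- is itself a uniform LLN over products of net functions and is not settled by (\ref{rate-learn-h1}), which controls only averaged expectations, not the realized empirical measure. The way to salvage your correct fixed-$f$ computation is to replace the supremum over the ball by the single function $h_n\Phi$ supplied by the stability theorem, which is precisely what the paper does.
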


The above result is of particular interest for binary classification problems. Indeed, recall that the standard SVM for classification
uses the hinge loss defined by 
$$
L(y,t):= \max\{0,1-yt\}\, , \qquad \qquad y\in Y:= \{-1,1\}, \, t\in \R.
$$
Obviously, this loss function is convex and 
Lipschitz continuous with $|L|_1 = 1$ and $L(y,0) = 1$ for $y\in Y$. 
For $X:= \Rn$ and $H_\s$ being the RKHS of a Gaussian RBF kernel with
fixed width $\s$ we consequently obtain $L$-consistency for the corresponding $(\lb_n)$-SVM whenever $\lb_n\to 0$ and $\lb_n^2 n^\a \to \infty$, 
where $\a$ is the exponent satisfying (\ref{rate-learn-h1}) and (\ref{rate-learn-h2}).
Since $L$-consistency implies binary classification consistency (see e.g.~\cite{Ste03d,BaJoMc03a}) we hence see that the above SVM 
is classification consistent. In particular, this consistency generalizes earlier  consistency results of \cite{Ste02a,Zhang04a,Ste03d}
with respect to both the compactness assumption on $X$ and the i.i.d.~assumption on the data-generating process.

In the case of $\a=1$ the SVMs using the hinge loss $L$ and 
an $(L,P)$-rich RKHS
is consistent if $\lb_n\to 0$ and $n\lb_n^2  \to \infty$. Since this is exactly the condition ensuring consistency in the i.i.d.~case 
we see that such an SVM is quite robust against violations of the i.i.d.~assumption.

If quantitative approximation properties of $H$ in terms of 
convergence rates for $\RP L \fP \to \RPB L$ are known, the proof Theorem \ref{rate-learn} also provides learning rates.
However, we conjecture that
these rates are usually overly conservative in terms of the confidence since 
we only employ 
Markov's inequality. Therefore  we do not discuss these convergence rates in further detail. Instead we would like to compare 
our consistency result with the consistency result for regularized boosting algorithms derived in \cite{LoKuSc06a}.
To this end we first observe that for
 (in the wide sense)
stationary processes (\ref{rate-learn-h1}) is automatically satisfied and 
(\ref{rate-learn-h2}) is equivalent to 
$$
\frac 1 n  \sum_{i=1}^n \a(\ca Z,\mu,i)  \Leq  C n^{-\a}\, , \qquad \qquad n\geq 1,
$$
by (\ref{bi-mix-mix}). Obviously, the latter is satisfied if $\ca Z$ 
is {\em algebraically $\bar \a$-mixing with exponent $\a$\/}, 
i.e.~if it satisfies $\bar \a(\ca Z,\mu,n)  \leq  C n^{-\a}$ for all $n\geq 1$. 
Consequently, Theorem \ref{rate-learn} implies consistency results for stationary, algebraically $\bar \a$-mixing
processes with known lower bound on the mixing rate. Compared to this  \cite{LoKuSc06a} only establishes a consistency result
for stationary, algebraically $\bar \b$-mixing
processes with known lower bound on the mixing rate. Since in general $\bar \a$-mixing is strictly weaker assumption 
than $\bar \b$-mixing we see that Theorem \ref{rate-learn} substantially weakens the assumptions of \cite{LoKuSc06a}.
Finally, note that our restriction to polynomial rates in (\ref{rate-learn-h1}) and (\ref{rate-learn-h2}) is by no means 
necessary. For example, if we replace $n^{-\a}$ by $(\log n)^{-\a}$ in (\ref{rate-learn-h1}) and (\ref{rate-learn-h2})
then the corresponding condition on $(\lb_n)$ for the SVM using the hinge loss becomes $\lb_n^2  (\log n)^{\a}\to \infty$.
In particular, note that such an SVM is consistent for {\em all\/} stationary, algebraically $\a$-mixing processes!\footnote{
However, for such
$(\lb_n)$ the SVM  typically deals too conservatively  with the stochastic part of the learning process,
so that the approximation behaviour is poor. As a consequence this result does not seem to have any practical relevance.}
In this direction it is interesting to recall that in \cite{Irle97a} consistency was established for kernel estimators
and algebraically $\a$-mixing, not necessarily stationary processes. To our best knowledge this is the 
consistency result that is closest in its assumptions on $\ca Z$ to Theorem \ref{rate-learn}.

The proof of Theorem \ref{rate-learn} is based on a stability argument together with a simple Markov-type 
concentration inequality for Hilbert space valued random variables.
In principle, one could also employ exponential type inequalities for sums of $\R$-valued random variables
in the sense of e.g.~\cite[Chapter 1.4]{Bosq98} together with a skeleton argument based on e.g.~covering numbers.
However, our preliminary considerations showed that the resulting conditions on $(\lb_n)$ were 
substantially stronger, and hence we do not discuss this approach in further detail.

%\fix{Hier noch uniform Bound \& Bosq Hoeffding diskutieren: polynomial and Gauss mit Zhou}
%\fix{Markov chains Bsp: $R_2(n_0)<1$ implies exponential convergence. However, $\a$-mixing is not sufficient, \cite[Ex 7.11]{Bradley05v1}
%for the latter. Frage: Ist $\a$-mixing sufficient for exponential rate of $R_p(n)$ for $p>2$? This obviously reduces to 
%question whether \cite[Thm 7.5]{Bradley05v1} holds for $R_p(n)$ for $p>2$. }

The next theorem establishes a result similar to Theorem \ref{rate-learn} for distance-based loss functions of some growth type $p$:

\begin{theorem}\label{rate-learn2}
Let $L:\R\times \R\to [0,\infty)$ be a convex distance-based loss function of upper growth type $p\in [1,2]$.
Furthermore, let $X$ be a separable metric space and
$H$ be an $(L,P)$-rich RKHS over $X$ with  bounded continuous kernel $k$.  
Moreover, let 
$(\Om,\ca A,\mu)$ be a probability space, 
$\ca Z:=((X_i,Y_i))_{i\geq 1}$ be an $X\times \R$-valued, AMS stochastic process on $\Om$, and  $P$ be the asymptotic mean of $(\ca Z,\mu)$.
Assume that we have 
\begin{equation}\label{rate-learn2-h10}
\sup_{i\geq 1} |\mu_{(X_i,Y_i)}|_q \, < \, \infty
\end{equation}
for some $q\in [p,\infty]$, where $|.|_q$ is defined by (\ref{moment-def}).
Furthermore assume that there are constants $C>0$ and $\a,\b\in (0,1]$ with
\begin{eqnarray}\label{rate-learn2-h1}
\biggl| \frac  1 n \sum_{i=1}^n \E_\mu f\circ Z_i - \E_P f\biggr| & \leq & C \snorm f_{\Lx 1 P}\, n^{-\a}\\ \label{rate-learn2-h2}
\frac  1{  n^{2}}  \sum_{i=1}^n\sum_{j=1}^{i-1} \a^{1-\frac{2p-2}q}(\ca Z,\mu,i,j) \p^{\frac{2p-2}q}_{\mathrm{sym}}(\ca Z,\mu,i,j) & \leq & C n^{-\b}
\end{eqnarray}
for all $f\in \Lx 1P \cap \bigcap_{i=1}^\infty \Lx 1 {\mu_{(X_i,Y_i)}}$. Then for all 
null-sequences $(\lb_n)$ of strictly positive real numbers with
\begin{eqnarray}\label{rate-learn2-h3}
\lb_n^p n^{2\a} & \to &\infty \\ \label{rate-learn2-h3xx}
\lb_n^{2p} n^{\b} & \to &\infty
\end{eqnarray}
the corresponding $(\lb_n)$-SVM based on $H$ and $L$ is  $L$-consistent for $\ca Z$.
\end{theorem}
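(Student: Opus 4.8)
The plan is to follow the stability-plus-concentration strategy of Theorem \ref{rate-learn}, but to cope with the unboundedness permitted by a growth-type-$p$ loss by measuring the relevant Hilbert-space-valued empirical process in an $L_r$-sense with the tailored exponent $r:=q/(p-1)$ (and $r=\infty$ when $p=1$). First I would reduce consistency to a comparison between the empirical decision function $\fT$ and the infinite-sample regularised solution $\fP$. Writing $\Phi(x):=k(x,\cdot)$ for the feature map and $P_n:=\frac1n\sum_{i=1}^n\mu_{(X_i,Y_i)}$ for the averaged marginal, I would record the a priori bounds $\snorm{\fP}_H\le\lb^{-1/2}\RP L0^{1/2}$ and, on the event that $\fT$ is close to $\fP$, $\snorm{\fT}_H=O(\lb^{-1/2})$; since $\RP L0=\E_P\psi(y)<\infty$ by $|P|_p<\infty$, both functions have sup-norm at most $B_\lb\sim\lb^{-1/2}$. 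The approximation error $\RP L\fP-\RPB L$ (together with $\lb\snorm\fP_H^2$) tends to $0$ as $\lb\to0$ by $(L,P)$-richness, so everything reduces to bounding the excess risk $\RP L\fT-\RP L\fP$ in probability.

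For the sample error I would exploit the convexity of $L$ in its last argument. Selecting for each fixed function a measurable subgradient $h_\bullet(x,y)\in\partial_t L(y,\cdot)$, the optimality conditions for (\ref{svm-def}) read $2\lb\fT=-\frac1n\sum_i h_T(X_i,Y_i)\Phi(X_i)$ and $2\lb\fP=-\E_P[h_P\Phi]$. Subtracting and pairing with $\fT-\fP$, the monotonicity of the subgradient kills the cross term and yields the stability estimate $\snorm{\fT-\fP}_H\le\frac1{2\lb}\snorm{\E_P[h_P\Phi]-\frac1n\sum_i h_P(X_i,Y_i)\Phi(X_i)}_H$, in which only the \emph{fixed} selection $h_P$ at $\fP$ appears; this is precisely what lets me avoid any uniform control over random functions. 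I would then split this $H$-valued discrepancy into a deterministic bias $\E_P[h_P\Phi]-\frac1n\sum_i\E_\mu[h_P(Z_i)\Phi(X_i)]$ and a random fluctuation $\frac1n\sum_i(\E_\mu V_i-V_i)$ with $V_i:=h_P(X_i,Y_i)\Phi(X_i)$.

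The deterministic part can be treated more sharply than the random one, and this asymmetry is the source of the gap between (\ref{rate-learn2-h3}) and (\ref{rate-learn2-h3xx}). Since both $f_{P_n,\lb}$ and $\fP$ are fixed minimisers, I would run the self-bounding comparison directly: the optimality of $f_{P_n,\lb}$ for the $P_n$-regularised problem bounds the excess regularised risk by $D(f_{P_n,\lb})-D(\fP)$, where $D(f):=\RP L f-\E_{P_n}L(\cdot,f)$. Applying (\ref{rate-learn2-h1}) to the difference of losses produces one factor $n^{-\a}$ times $\snorm{L(\cdot,f_{P_n,\lb})-L(\cdot,\fP)}_{\Lx1P}$, while local Lipschitzness together with $\snorm{f_{P_n,\lb}-\fP}_H=O(\lb^{-(p+1)/2}n^{-\a})$ contributes a \emph{second} factor $n^{-\a}$. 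Here the crucial arithmetic is $\snorm{h_P}_{\Lx1P}\lesssim\E_P|y|^{p-1}+B_\lb^{p-1}\lesssim\lb^{-(p-1)/2}$, using $q\ge p$ and $\snorm\fP_\infty\le B_\lb$. Collecting the $\lb$-powers gives a deterministic excess risk of order $\lb^{-p}n^{-2\a}$, which vanishes exactly under (\ref{rate-learn2-h3}); the doubling of the exponent $\a$ comes from this genuinely quadratic self-bounding, unavailable for the random part.

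For the fluctuation only a second-moment (Markov-type) argument is at hand, so I would pass through the linear stability bound and pay an extra $\lb^{-1}$. The key estimate is $\E_\mu\snorm{\frac1n\sum_i(V_i-\E_\mu V_i)}_H^2\le\frac1{n^2}\sum_i\E_\mu\snorm{V_i}_H^2+\frac2{n^2}\sum_{i>j}\bigl(\E_\mu\langle V_i,V_j\rangle-\langle\E_\mu V_i,\E_\mu V_j\rangle\bigr)$, where the off-diagonal covariances are controlled by $R_r^H(\s(Z_i),\s(Z_j),\mu)\snorm{V_i}_{L_r(\mu,H)}\snorm{V_j}_{L_r(\mu,H)}$; using (\ref{grothendieck-cor}) to reduce $R_r^H$ to $R_r^\R$ and then (\ref{mix-rio}) with $2/r=(2p-2)/q$ dominates this by $\a^{1-\frac{2p-2}q}(\ca Z,\mu,i,j)\,\p_{\mathrm{sym}}^{\frac{2p-2}q}(\ca Z,\mu,i,j)$ up to a constant, so that (\ref{rate-learn2-h2}) yields the rate $n^{-\b}$. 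The exponent $r=q/(p-1)$ is chosen precisely so that $(p-1)r=q$: then $\snorm{V_i}_{L_r(\mu,H)}\le\inorm k\,(\E_\mu|h_P(X_i,Y_i)|^r)^{1/r}\lesssim\inorm k\,\bigl(\sup_i|\mu_{(X_i,Y_i)}|_q^{\,q}+B_\lb^{(p-1)r}\bigr)^{1/r}\lesssim\lb^{-(p-1)/2}$ is finite exactly because of the moment hypothesis (\ref{rate-learn2-h10}). Hence the fluctuation has second moment $\lesssim\lb^{-(p-1)}n^{-\b}$; feeding its contribution to $\fT-\fP$ through $\RP L\fT-\RP L\fP\lesssim\lb^{-(p-1)/2}\snorm{\fT-\fP}_H\le\lb^{-(p+1)/2}\snorm{\mathrm{fluct}}_H$ and Markov's inequality shows this part exceeds $\e$ with probability $\lesssim\e^{-2}\lb^{-2p}n^{-\b}\to0$ under (\ref{rate-learn2-h3xx}); replacing the second-moment bound by the SLLN of Lemma \ref{risk-confirm} upgrades the conclusion to the almost-sure statement. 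I expect the main obstacle to be the bookkeeping of the $\lb$-powers under unbounded $Y$: every occurrence of $h_P$ must be dominated simultaneously in $\Lx1P$ and in $L_r(\mu,H)$, and the whole argument only closes because the growth $B_\lb\sim\lb^{-1/2}$, the exponent $p-1$, and the moment exponent $q$ interlock through the single identity $(p-1)r=q$; the subsidiary measurability of the selections $h_\bullet$, the first-order conditions for a non-differentiable $L$, and the control of $\snorm\fT_H$ off the good event are routine but must be discharged to make the two Markov estimates rigorous.
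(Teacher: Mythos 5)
Your proposal is correct and follows essentially the same route as the paper's proof: a stability bound expressing $\fT-f_{\cdot\,,\lb}$ through a fixed subgradient function $h$, a Hilbert-space-valued second-moment estimate whose covariances are controlled by $R^H_{q/(p-1)}$ and reduced to the $\a$- and $\p_{\mathrm{sym}}$-coefficients via (\ref{grothendieck-cor}) and (\ref{mix-rio}), Markov's inequality for the fluctuation, and assumption (\ref{rate-learn2-h1}) applied to loss functions for the deterministic bias. The only real deviation is bookkeeping: the paper centres the stochastic comparison at $\fPnn$, so the $H$-valued sum is exactly centred and no bias term arises there, and it bounds the deterministic excess linearly as $O(n^{-\a}\lb_n^{-p/2})$, whereas you centre at $\fP$ and recover the bias quadratically as $O(n^{-2\a}\lb_n^{-p})$ --- both vanish under precisely condition (\ref{rate-learn2-h3}), so the two variants are interchangeable.
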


Since distance based loss functions are typically used for regression problems we see that the above theorem 
is mainly interesting for these learning scenarios. For Lipschitz continuous losses such as 
the absolute distance loss $L(y,t):= |y-t|$, the $\epsilon$-insensitive loss $L(y,t):= \max\{0,|y-t|-\epsilon\}$, the 
logistic loss or Huber's robust loss we obviously have $p=1$ and hence (\ref{rate-learn2-h2}) reduces to (\ref{rate-learn-h2}).
Moreover, for Lipschitz continuous losses we can choose $q=1$ in (\ref{rate-learn2-h10}). Consequently, it is easy to see that all remarks 
made for the classification SVM using the hinge loss, remain true for regression SVMs using one of the above losses.

In contrast to this the least squares SVM which uses the standard least squares loss $L(y,t):= (y-t)$
requires $p=2$ in the above theorem. For processes with uniformly bounded noise, i.e.~$q=\infty$,
we  again see that (\ref{rate-learn2-h2}) reduces to (\ref{rate-learn-h2}). Moreover, for $q\in (2,\infty)$ we have 
$$
\frac  1{  n^{2}}  \sum_{i=1}^n\sum_{j=1}^{i-1} \a^{1-\frac{2}q}(\ca Z,\mu,i,j) \p_{\mathrm{sym}}^{\frac{2}q}(\ca Z,\mu,i,j)  
\Leq 
\biggl(\frac  1{  n^{2}}  \sum_{i=1}^n\sum_{j=1}^{i-1} \a(\ca Z,\mu,i,j)\biggr)^{1-\frac{2}q}
$$
so that (\ref{rate-learn-h2}) implies (\ref{rate-learn2-h2}) for $\b:= \a(1-2/q)$. However, for $q=2$ we have ${1-\frac{2p-2}q}= 0$, and consequently
we only obtain consistency results for weakly $\p_{\mathrm{sym}}$-bi-mixing processes.

Theorem \ref{rate-learn2} generalizes the only known consistency result (see \cite{ChSt04b}) for regression SVMs dealing with unbounded noise 
with respect to both the compactness assumption on $X$ and the i.i.d.~assumption on the data-generating process.
In particular,  Theorem \ref{rate-learn2}  shows that such SVMs are rather robust against violations of these assumptions, and consequently
it gives a strong justification of using such SVMs in rather general situations.

Finally, we like to mention that condition (\ref{rate-learn2-h10}) can be replaced by a weaker assumption 
describing the average behaviour of the sequence $(|\mu_{(X_i,Y_i)}|_q)_{i\geq 1}$. However, the resulting conditions 
on $(\lb_n)$ are more complicated and hence we omit the details.

\section{Proofs}\label{proofs}

\subsection{Proofs from Subsection \ref{s-process}}

\begin{proofof}{Lemma \ref{limit-distrib}}
Let $\ca B$ be the $\s$-algebra of $Z$. 
We write $P_n (B):= \frac 1 n \sum_{i=1}^n \mu(Z_i\in B)$ for  $B\in \B$ and $n\geq 1$.
Then $P_n$ is obviously a probability measure on $\ca B$ for all $n\geq 1$. Now 
the theorem of Vitali-Hahn-Saks (see e.g.~\cite[p.~158-160]{DuSc88}) ensures that $P(B):= \lim_{n\to \infty} P_n(B)$, $B\in \ca B$, defines a probability measure on $\ca B$.
\end{proofof}

\begin{proofof}{Theorem \ref{exists-limit-distrib-th}}
Recall that the convergence in probability $\mu$ can  be described by the metric
$$
d(f,g) := \int_\Om \min\bigl\{ 1, |f-g|\bigr\}\, d\mu\, , \qquad \qquad f,g\in \sL 0\Om.
$$
Moreover, for measurable $B\subset Z$ let $c_B$ be the constant satisfying (\ref{wlln-events}). 
The  WLLNE and the above metric then shows 
$$
\lim_{n\to \infty}\int_\Om \,\,\,\Bigl|\frac 1 n  \sum_{i=1}^n \eins_B\circ Z_i - c_B  \Bigr|\, d\mu = 0\, .
$$
Since $\snorm ._{\Lxx 1 \mu}$ is continuous on $\Lxx 1 \mu$ we hence find 
%By \cite[Satz 21.7]{Bauer90} we then find 
$$
\lim_{n\to \infty}\frac 1 n \sum_{i=1}^n \E_\mu\eins_B\circ Z_i 
=
\lim_{n\to \infty} \int_\Om \frac 1 n \sum_{i=1}^n \eins_B\circ Z_i d\mu
= 
\lim_{n\to \infty} \int_\Om \,\,\,  \Bigl| \frac 1 n \sum_{i=1}^n \eins_B\circ Z_i \Bigr| \, d\mu
= 
\E_\mu |c_B| 
= 
c_B\, ,
$$
where the existence of the right limit implies the existence of the left limit.
Consequently,  $\ca Z$ is AMS and we have  $P(B) = c_B$. Obviously, the latter together with (\ref{wlln-events})
immediately gives (\ref{slln-events-ams}). Finally, if $\ca Z$ satisfies the SLLNE then we obtain
the almost sure convergence in (\ref{slln-events-ams}) from  (\ref{slln-events}).
\end{proofof}

\begin{proofof}{Lemma \ref{lln-bounded}}
Let us begin by showing the assertion for the strong law.
To this end 
we fix an $\e>0$.
By the approximation lemma for bounded measurable functions %see e.g. \cite[p.126, Floret81}
there  exists a step function $g:X\to \R$ with $\inorm{f-g}\leq \e$.
Now, the linearity of the limit shows 
$$
\E_P g= \lim_{n\to \infty}\frac 1 n \sum_{i=1}^n g \circ Z_i(\om)
$$
for $\mu$-almost all $\om \in \Om$, and consequently, \cite[Lemma 20.6]{Bauer01} gives an $n_0\geq 1$ with 
\begin{equation}\label{slln-bounded-h1}
\mu\biggl(\sup_{n\geq n_0}\Bigl| \frac 1 n \sum_{i=1}^n g\circ Z_i - \E_P g  \Bigr|\leq \e   \biggr)\geq 1-\e\, .
\end{equation}
Moreover, for $\om\in \Om$ we have
\begin{eqnarray*}
&&
\sup_{n\geq n_0}\Bigl| \frac 1 n \sum_{i=1}^n f\circ Z_i(\om) - \E_P f  \Bigr|\\
& \leq &
\sup_{n\geq n_0}\Bigl| \frac 1 n \sum_{i=1}^n f\circ Z_i(\om) - \frac 1 n \sum_{i=1}^n g\circ Z_i (\om)  \Bigr| 
+ \Bigl| \frac 1 n \sum_{i=1}^n g\circ Z_i(\om) - \E_P g  \Bigr| 
+ \bigl| \E_P g-\E_P f \bigr|\\
& \leq & 2\e + \sup_{n\geq n_0} \Bigl| \frac 1 n \sum_{i=1}^n g\circ Z_i(\om) - \E_P g  \Bigr| \, ,
\end{eqnarray*}
and hence we obtain
$$
\mu\biggl(\sup_{n\geq n_0}\Bigl| \frac 1 n \sum_{i=1}^n f\circ Z_i - \E_P f  \Bigr|\leq 3 \e   \biggr)\geq 1-\e\, .
$$
This shows the $\mu$-almost sure convergence in (\ref{lln-bounded-hx}).
Using that the functions $\frac 1n \sum_{i=1}^n f\circ Z_i$, $n\geq 1$, are uniformly bounded Lebesgue's theorem then yields
$$
\E_P f = \int_\Om \E_P f\, d\mu 
= 
\int_\Om \lim_{n\to \infty}\frac 1 n \sum_{i=1}^n  f\circ Z_i d\mu
= 
 \lim_{n\to \infty}\int_\Om\frac 1 n \sum_{i=1}^n f \circ Z_i d\mu
=
 \lim_{n\to \infty}\frac 1 n \sum_{i=1}^n \E_\mu f\circ Z_i\, ,
$$
and hence we have found (\ref{lln-bounded-h0}). 
Finally, if $\ca Z$ satisfies the WLLNE then pulling the supremum out of $\mu$ in (\ref{slln-bounded-h1}) and adjusting the rest of the 
proof accordingly shows (\ref{lln-bounded-hx}) with convergence in probability $\mu$. Moreover, in this case (\ref{lln-bounded-h0})
can be shown analogously to the argument used in the proof Theorem \ref{exists-limit-distrib-th}.
\end{proofof}

%%%%%%%%%%%%%%%%%%%%%%%%%%%%%%%%%%%%%%%%%%%%%%%%%%%%%%%%%%%%%%%%5555

% example sec

%%%%%%%%%%%%%%%%%%%%%%%%%%%%%%%%%%%%%%%%%%%%%%%%%%%%%%%%%%%%%%%%%%%%

\subsection{Proofs from Subsection \ref{s-process-ex}}

\begin{proofof}{Proposition \ref{uncor-wllne}}
$ii) \Rightarrow i)$. Follows from Theorem \ref{exists-limit-distrib-th}.\\
$i) \Rightarrow ii)$. 
Let $P$ be the stationary mean of $(\ca Z,\mu)$. Then there exists an $n_0\geq 1$ such that 
$$
\Bigl| \frac 1 n\sum_{i=1}^n \E_\mu \eins_B\circ Z_i - P(B)\Bigr| < \frac \e 2\, , \qquad \qquad n\geq n_0.
$$
For $n\geq n_0$ Markov's inequality then yields
\begin{eqnarray*}
&&
\mu\biggl( \Bigl\{   \om \in \Om: \Bigl|   \frac 1 n \sum_{i=1}^n \eins_B\circ Z_i(\om) - P(B)\Bigr|\geq \e\Bigr\}\biggr) \\
& \leq &
\mu\biggl( \Bigl\{   \om \in \Om: \Bigl|   \frac 1 n \sum_{i=1}^n \eins_B\circ Z_i(\om) - \frac 1 n\sum_{i=1}^n \E_\mu \eins_B\circ Z_i\Bigr|\geq \frac\e 2\Bigr\}\biggr)\\
%+
%\mu\biggl( \Bigl\{   \om \in \Om: \Bigl|   \sum_{i=1}^n \E_\mu \eins_B\circ Z_i - P(B)\Bigr|\geq \frac\e 2\Bigr\}\biggr)
&\leq &
4 \e^{-2} n^{-2} \E_\mu \Bigl(  \sum_{i=1}^n \bigl(\eins_B\circ Z_i - \E_\mu \eins_B\circ Z_i  \bigr)  \Bigr)^2\, .
\end{eqnarray*}
Let us write $h_i :=  \eins_B\circ Z_i - \E_\mu \eins_B\circ Z_i$, $i\geq 1$. 
Then we have $\E_\mu h_i = 0$ and $h_i(\om)\in [-1,1]$ for all $i\geq 1$ and all $\om \in \Om$. Moreover, for $i\neq j$ we have $\E h_i h_j = 0$ since we assume that 
$\eins_B \circ Z_i$ and $\eins_B\circ Z_j$ are uncorrelated. Consequently, we obtain
$$
\E_\mu \Bigl(  \sum_{i=1}^n \bigl(\eins_B\circ Z_i - \E_\mu \eins_B\circ Z_i  \bigr)  \Bigr)^2 \leq n\, ,
$$
from which we easily obtain the assertion.
\end{proofof}

\begin{proofof}{Proposition \ref{marting-slln}}
Let us define $Y:= f\circ Z_1$ and $X_n:= \frac 1 n \sum_{i=1}^n f\circ Z_i$, $n\geq 1$.
Then (\ref{marting-slln-h1}) states $\E(X_{n-1}\, |\, \ca F_n) = X_n$ for all $n\geq 2$, and hence we obtain
$$
X_n = \E(X_{n-1}\, | \, \ca F_n) = \E\bigl(\E(X_{n-2}\, |\, \ca F_{n-1}) \, |\, \ca F_n\bigr) = \E(X_{n-2}\, |\, \ca F_n) = \ldots = \E(X_1\, | \ca F_n) 
= \E(Y|\ca F_n)
$$
for all $n\geq 2$. Moreover, $X_1$ is $\ca F_1$-measurable and hence we also have $X_1 = \E(X_1\, | \ca F_1) 
= \E(Y|\ca F_1)$. Now, \cite[Theorem 6.6.3]{AsDD00} shows that $\lim_{n\to \infty}X_n = E Y$ almost surely.
Furthermore, from $X_n =  \E(Y|\ca F_n)$, $n\geq 1$, we also conclude $\E_\mu X_n = \E_\mu Y= \E_\mu f\circ Z_1$, and hence 
$\mu_{Z_1}$ is the asymptotic mean of $(\ca Z,\mu)$. Combining these results then gives the assertion.
\end{proofof}

\begin{proofof}{Proposition \ref{noisy-DS}}
Without loss of generality we may assume that $\ca E$ is of canonical form, i.e.~$\e_i = \pi_1 \circ S^{i-1}$, $i\geq 1$,
where $\pi_1: (\Rn)^\N\to \Rn$ is the first coordinate projection, $S$ is the shift operator on $(\Rn)^\N$, and $\nu$ is a product measure,
i.e.~$\nu= (\mu')^\N$ for a suitable measure $\mu'$ on $\Rn$.
Then 
$\ca S:= (S^{i-1})_{i\geq 1}$ is weakly mixing, and consequently \cite[p.~65]{Petersen89}
shows that $\ca Z\times \ca S$ is $\mu\otimes \nu$-ergodic.
By Theorem \ref{ergodic-char} we can then conclude that $\ca Z\times \ca S$ satisfies the  $\mu\otimes \nu$-SLLN.
Moreover, we have $T^{n-1}+\e_n = T^{n-1}+\pi_1 \circ S^{n-1}$ and hence  $\ca Z+\ca E$ 
is an image of the process $\ca Z\times \ca S$. From this we easily conclude that $\ca Z+ \ca E$ satisfies the  $\mu\otimes \nu$-SLLN.
\end{proofof}

\subsection{Proofs from Subsection \ref{loss}}

Before we prove Lemma \ref{risk-confirm} we first have to recall the following  elementary lemma whose proof is omitted:

\begin{lemma}\label{average-trick}
Let $(a_i)$ be a sequence of real numbers and $a\in \R$ such that 
$$
\lim_{n\to \infty} \frac 1 n\sum_{i=1}^n a_i \Eq a\, .
$$
Then for all $n_0\geq 0$ we have 
$$
\lim_{n\to \infty}\frac 1 {n-n_0} \sum_{i=n_0+1}^n a_i \Eq a\, .
$$
\end{lemma}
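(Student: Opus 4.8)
The plan is to reduce everything to the partial sums $S_n := \sum_{i=1}^n a_i$, for which the hypothesis reads simply $S_n/n \to a$. The tail sum appearing in the conclusion is then $\sum_{i=n_0+1}^n a_i = S_n - S_{n_0}$, so the quantity to be controlled is $(S_n - S_{n_0})/(n-n_0)$. The whole point is that $n_0$ is a \emph{fixed} constant: hence $S_{n_0}$ is a fixed real number, and shifting the index range and the normalization by $n_0$ only perturbs things by factors that tend to $1$ or terms that tend to $0$.

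Concretely, I would write, for $n > n_0$,
$$
\frac{1}{n-n_0}\sum_{i=n_0+1}^n a_i = \frac{S_n - S_{n_0}}{n-n_0} = \frac{n}{n-n_0}\cdot\frac{S_n}{n} - \frac{S_{n_0}}{n-n_0}\, .
$$
Then I take $n \to \infty$ term by term. Since $n_0$ is fixed, $\frac{n}{n-n_0} \to 1$, and by hypothesis $\frac{S_n}{n} \to a$, so the first product converges to $a$ by the algebra of limits. The second term has a fixed numerator $S_{n_0}$ and a denominator $n - n_0 \to \infty$, hence tends to $0$. Combining these gives the asserted limit $a$.

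There is no genuine obstacle here; the only point deserving a word is that no summability, positivity, or boundedness of $(a_i)$ is used, only that a fixed finite initial segment becomes negligible under a normalization growing without bound. The edge case $n_0 = 0$ requires no separate treatment: there $S_0 = 0$ and $\frac{n}{n-0} = 1$, so the displayed identity reduces the conclusion to the hypothesis itself.
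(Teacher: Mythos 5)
Your proof is correct and takes essentially the same route as the paper's (the paper's argument, relegated to a remark, isolates the fixed initial segment via the triangle inequality and then adjusts the normalization by the factor $\frac{n_0}{n-n_0}\to 0$, which is just a two-step rewriting of your one-line identity $\frac{S_n-S_{n_0}}{n-n_0}=\frac{n}{n-n_0}\cdot\frac{S_n}{n}-\frac{S_{n_0}}{n-n_0}$). Both arguments rest on the same two observations: the fixed partial sum $S_{n_0}$ is killed by the growing denominator, and the reweighting factor $\frac{n}{n-n_0}$ tends to $1$.
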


\begin{proofof}{Lemma \ref{risk-confirm}}
Let us first assume that $L$ is locally bounded. 
By Lemma \ref{average-trick} it then suffices to consider the case $n_0=0$. Now observe that the function $g(x,y):= L(x,y,f(x))$, $(x,y)\in X\times Y$,
is a bounded, measurable function since $f$ is assumed to be bounded, and $L$ is locally bounded. 
Applying  Lemma \ref{lln-bounded} to the function $g$ then gives the assertion.\\
Let us now assume that $L$ is a $P$-integrable Nemitski loss.
Then there exists an $b\in \Lxx 1 P$ and an increasing function $h:[0,\infty)\to [0,\infty)$
with 
$$
g(x,y) \Leq b(x,y) + h\bigl( \inorm f\bigr)\, , \qquad \qquad (x,y)\in X\times Y.
$$
This shows $g\in \Lxx 1 P$, and hence the assertion follows from Definition \ref{lln-l1-def}.
\end{proofof}

%%%%%%%%%%%%%%%%%%%%%%%%%%%%%%%%%%%%%%%%%%%%%%%%%%%%%%%%%%%%%%%%5555

% Loss Sec

 %%%%%%%%%%%%%%%%%%%%%%%%%%%%%%%%%%%%%%%%%%%%%%%%%%%%%%%%%%%%%%%%%%%%

%%%%%%%%%%%%%%%%%%%%%%%%%%%%%%%%%%%%%%%%%%%%%%%%%%%%%%%%%%%%%%%%5555

% svm sec

%%%%%%%%%%%%%%%%%%%%%%%%%%%%%%%%%%%%%%%%%%%%%%%%%%%%%%%%%%%%%%%%%%%%

\subsection{Proofs from Section \ref{consist-svm}}

For the proof of Theorem \ref{asymp-learn} we need some preparations. Let us begin with the following 
result  on the existence and uniqueness of infinite sample SVMs which is a slight extension of similar results 
established in  \cite{DeRoCaPiVe04a,ChSt04b}:

\begin{theorem}\label{infinite:exist-and-unique}
Let $L:X\times Y\times \R\to [0,\infty)$ be a convex  loss function
and $P$ be a distribution on $X\times Y$ such that $L$ is a $P$-integrable Nemitski loss.
Furthermore, let $H$ be a RKHS of a bounded measurable kernel over $X$. Then 
for all $\lb>0$ there exists
exactly one element $\fP\in H$
such that 
\begin{equation}\label{infinite:exist-and-unique-h1}
\lb \snorm\fP_{H}^{2} + \RP L \fP = \inf_{f\in H} \lb \snorm f_{H}^{2} + \RP L f\, .
\end{equation}
Furthermore, we have $\snorm \fP_{H}\leq \sqrt{\frac{\RP L 0}\lb}$.
\end{theorem}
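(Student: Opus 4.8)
The plan is to study the regularized risk functional $F:H\to[0,\infty)$ given by $F(f):=\lb\snorm f_H^2+\RP L f$ and to show it possesses exactly one minimizer, which will then be $\fP$. First I would verify that $F$ is real-valued on all of $H$. Since the kernel is bounded, the reproducing property gives $|f(x)|=|\langle f,k(x,\cdot)\rangle_H|\le \inorm k\,\snorm f_H$ for every $x\in X$, so each $f\in H$ is a bounded measurable function with $\inorm f\le \inorm k\,\snorm f_H$. Because $L$ is a $P$-integrable Nemitski loss, the remark following Definition \ref{loss:nemitski-def} yields $\RP L f<\infty$ for all $f\in L_\infty(P_X)$, hence in particular for all $f\in H$; thus $F$ takes values in $[0,\infty)$ and $F(0)=\RP L 0<\infty$.

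Uniqueness and the norm bound are then quick. The map $f\mapsto\lb\snorm f_H^2$ is strictly convex on the Hilbert space $H$, and since every slice $L(x,y,\cdot)$ is convex, $f\mapsto\RP L f$ is convex; hence $F$ is strictly convex, which forces at most one minimizer. For the bound I would observe that any minimizer $f^*$ satisfies $\lb\snorm{f^*}_H^2\le F(f^*)\le F(0)=\RP L 0$, so that $\snorm{f^*}_H\le\sqrt{\RP L 0/\lb}$. In particular it suffices to search for minimizers within the closed ball $B_R$ of radius $R:=\sqrt{\RP L 0/\lb}$.

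It remains to establish existence. I would take a minimizing sequence $(f_n)\subset B_R$ and use that closed balls in a Hilbert space are weakly compact to pass to a subsequence with $f_n\rightharpoonup f^*$ and $\snorm{f^*}_H\le R$. By the reproducing property, weak convergence transfers to pointwise convergence, $f_n(x)=\langle f_n,k(x,\cdot)\rangle_H\to\langle f^*,k(x,\cdot)\rangle_H=f^*(x)$ for every $x\in X$. As a $P$-integrable Nemitski loss $L$ is finite everywhere and convex in its last argument, each $L(x,y,\cdot):\R\to[0,\infty)$ is continuous, whence $L(x,y,f_n(x))\to L(x,y,f^*(x))$ pointwise. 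Nonnegativity of $L$ together with Fatou's lemma then gives $\RP L{f^*}\le\liminf_n\RP L{f_n}$, while weak lower semicontinuity of the Hilbert norm gives $\snorm{f^*}_H^2\le\liminf_n\snorm{f_n}_H^2$. Adding these yields $F(f^*)\le\liminf_n F(f_n)=\inf_{f\in H}F(f)$, so $f^*$ minimizes $F$; by the uniqueness already shown, $f^*$ is the claimed element $\fP$.

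The hard part is this existence step, and specifically the lower semicontinuity of the risk along a weakly convergent minimizing sequence. The two ingredients that make it work are that the bounded kernel lets weak convergence in $H$ imply pointwise convergence of the integrands, and that the $P$-integrable Nemitski structure simultaneously guarantees finiteness of $F$, continuity of $L$ in its last slot, and the a-priori ball $B_R$ in which a weakly convergent subsequence can be extracted. The strict convexity coming from the $\lb\snorm\cdot_H^2$ term then converts this existence into existence and uniqueness.
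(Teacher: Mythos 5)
Your argument is correct and complete: the finiteness of $F(f):=\lb\snorm f_H^2+\RP L f$ via $\inorm f\le\inorm k\,\snorm f_H$ and the $P$-integrable Nemitski property, uniqueness from strict convexity of the squared Hilbert norm, the bound from comparison with $f=0$, and existence by the direct method (weak sequential compactness of the ball $B_R$, weak-to-pointwise convergence through the reproducing property, continuity of the finite convex slices $L(x,y,\cdot)$, Fatou, and weak lower semicontinuity of the norm combined by superadditivity of $\liminf$) all go through. The paper itself gives no proof of this theorem, deferring to the cited references, but your route is precisely the standard one used there, so there is nothing to flag.
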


The following two results describe the stability of the empirical SVM solutions. 
The first result was (essentially) shown in \cite{DeRoCaPiVe04a,ChSt04b}:

\begin{theorem}\label{infinite:general-repres-co1}
Let $X$ be a separable metric space,
$L:X\times Y\times \R\to [0,\infty)$ be a convex, locally Lipschitz continuous loss function,
and $P$ be a distribution on $X\times Y$ with $\RP L 0<\infty$.
Furthermore,
let $H$ be the RKHS of a bounded, continuous kernel $k$ over $X$ with canonical feature map $\P:X\to H$.
We define 
$$
B_{\lb}:= \snorm k_\infty \biggl({\frac{\RP L 0}\lb}\biggr)^{1/2}\, , \qquad \qquad \lb>0.
$$
Then for all $\lb>0$
there exists a bounded, measurable function $h_\lb:X\times Y\to \R$ with 
\begin{equation}\label{infinite:lipschitz-h1}
\inorm{h_\lb} \leq |L|_{B_{\lb},1}
\end{equation}
and 
\begin{equation}\label{infinite:lipschitz-h0}
\mnorm{\fP-\fT}_{H} \Leq \frac 1 \lb \mnorm{\E_P h_\lb\Phi- \E_{T}h_\lb\Phi}_{H}
\end{equation}
for all training sets $T=((x_1,y_1),\dots,(x_n,y_n))\in (X\times Y)^n$, where $\E_T$ denotes the expectation operator with respect to the 
empirical measure associated to $T$, i.e.~$\E_T g := \frac 1 n \sum_{i=1}^n g(x_i,y_i)$.
\end{theorem}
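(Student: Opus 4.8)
The plan is to reduce the stability estimate \eqref{infinite:lipschitz-h0} to the first-order optimality condition for the \emph{infinite-sample} solution $\fP$ and then to run an elementary convexity argument that linearises the empirical excess risk along a subgradient taken \emph{at $\fP$}. The key point is that, because the linearisation point $\fP$ is the population minimiser, the subgradient selection $h_\lb$ will depend only on $\lb,P,L,H$ and not on the training set $T$; this is exactly why a single $h_\lb$ can serve all $T$ simultaneously.

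First I would invoke Theorem \ref{infinite:exist-and-unique} to obtain $\fP$ together with the norm bound $\snorm{\fP}_H\le(\RP L 0/\lb)^{1/2}$. Combined with the reproducing property $\fP(x)=\langle\fP,\Phi(x)\rangle$ and $\snorm{\Phi(x)}_H=\sqrt{k(x,x)}\le\inorm k$, this gives $\inorm{\fP}\le\inorm k\,(\RP L 0/\lb)^{1/2}=B_\lb$. Following the representer-type analysis of \cite{DeRoCaPiVe04a,ChSt04b}, I would then produce a measurable selection $h_\lb(x,y)$ of the subdifferential of $t\mapsto L(x,y,t)$ at the point $\fP(x)$ that realises the optimality condition $2\lb\fP+\E_P[h_\lb\Phi]=0$, i.e.\ $\E_P[h_\lb\Phi]=-2\lb\fP$. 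Since any subgradient of a convex function at an interior point of $[-B_\lb,B_\lb]$ is dominated in absolute value by the local Lipschitz constant $|L|_{B_{\lb},1}$ from \eqref{loss:lipschitz-loss-def}, and $|\fP(x)|\le B_\lb$, the selection automatically satisfies $\inorm{h_\lb}\le|L|_{B_{\lb},1}$, which is \eqref{infinite:lipschitz-h1}.

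Next I would derive \eqref{infinite:lipschitz-h0}. The subgradient inequality gives $L(x,y,\fT(x))\ge L(x,y,\fP(x))+h_\lb(x,y)\bigl(\fT(x)-\fP(x)\bigr)$ for every $(x,y)$; integrating against the empirical measure of $T$ and using $\E_T[h_\lb\langle\fT-\fP,\Phi\rangle]=\langle\fT-\fP,\E_T[h_\lb\Phi]\rangle$ yields $\RT L \fT-\RT L \fP\ge\langle\fT-\fP,\E_T[h_\lb\Phi]\rangle$. On the other hand, the definition of $\fT$ as a minimiser of $\lb\snorm{\cdot}_H^2+\RT L \cdot$ gives $\RT L \fT-\RT L \fP\le\lb\snorm{\fP}_H^2-\lb\snorm{\fT}_H^2$. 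Chaining these two inequalities, substituting $\E_P[h_\lb\Phi]=-2\lb\fP$, and expanding $\langle\fT-\fP,\,\E_T[h_\lb\Phi]-\E_P[h_\lb\Phi]\rangle$, the quadratic terms collapse exactly to $-\lb\snorm{\fP-\fT}_H^2$, so that $\langle\fT-\fP,\,\E_T[h_\lb\Phi]-\E_P[h_\lb\Phi]\rangle\le-\lb\snorm{\fP-\fT}_H^2$. One application of Cauchy--Schwarz followed by division by $\snorm{\fP-\fT}_H$ (the case $\fP=\fT$ being trivial) then gives \eqref{infinite:lipschitz-h0}. Note that this step never requires a sup-bound on $\fT$: the subgradient inequality is valid at the arbitrary value $t=\fT(x)$, and the only property of $\fT$ used is its minimality.

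The convex algebra of the last paragraph is routine; the genuine obstacle is the first step, namely establishing the infinite-sample first-order condition with a \emph{measurable} subgradient selection $h_\lb$. This requires justifying the interchange of the subdifferential with the Bochner integral defining $\RP L \cdot$, so that $\E_P[h_\lb\Phi]$ is indeed a subgradient of $f\mapsto\RP L f$ at $\fP$, together with a measurable-selection theorem for the set-valued map $(x,y)\mapsto\partial L(x,y,\fP(x))$. Since $L$ is convex and locally Lipschitz and $\fP\in\Lx\infty{P_X}$, these are precisely the ingredients supplied by the cited works, so I would import the optimality condition from there rather than reprove it, and treat everything downstream as self-contained.
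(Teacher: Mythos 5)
Your proposal is correct and follows essentially the route the paper relies on: the paper gives no proof of this theorem but defers it to \cite{DeRoCaPiVe04a,ChSt04b}, and your argument --- the first-order condition $\E_P[h_\lb\Phi]=-2\lb\fP$ via a measurable subgradient selection at $\fP$, the pointwise subgradient inequality integrated against $T$, the minimality of $\fT$, and the collapse of the quadratic terms to $-\lb\snorm{\fP-\fT}_H^2$ followed by Cauchy--Schwarz --- is precisely the standard stability argument of those works. The one ingredient you leave non-self-contained (interchange of subdifferential and integral together with a measurable selection for $(x,y)\mapsto\partial L(x,y,\fP(x))$) is exactly what the paper itself imports, so nothing is missing relative to the paper's own treatment.
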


Recall that convex distance-based loss functions are in general {\em not\/} locally Lipschitz continuous.
Nevertheless SVM using these losses still enjoy stability as the following result shows:

\begin{theorem}\label{infinite:general-repres-co2}
Let $X$ be a separable metric space,
$L:\R\times \R\to [0,\infty)$ be a convex, distance-based loss function of upper growth type $p\geq  1$ and 
$P$ a distribution on $X\times \R$ with $|P|_{q}<\infty$
for some $q\in[p,\infty]$. Furthermore, 
let
$H$ be a RKHS of a bounded, continuous kernel over $X$ with canonical feature map $\P:X\to H$.
Then there exists a constant $c_{L}>0$  depending only on $L$ such that 
for all $\lb>0$
there exists a measurable function $h_\lb:X\times Y\to \R$ with 
\begin{eqnarray}\label{infinite:general-repres-co2-h2xx}
\snorm {h_\lb}_{\Lxx s{\bar P}} & \leq  & 8^{p}c_{L} \,\bigl( 1+ |\bar P|_{q}^{p-1} + \snorm\fP_\infty^{p-1} \bigr)\\
\label{infinite:lipschitz-h3}
\mnorm{\fP-\fT}_{H} & \leq & \frac 1 \lb \mnorm{\E_P h_\lb\Phi- \E_{T}h_\lb\Phi}_{H}
\end{eqnarray}
for $s:= \frac q{p-1}$, all distributions $\bar P$ on $X\times \R$ with $|\bar P|_q<\infty$ and all training sets $T\in (X\times Y)^n$.
Finally, if $L$ is also of lower growth type $p$ then we additionally have 
\begin{equation}
\label{infinite:general-repres-co2-h2}
\snorm {h_\lb}_{\Lxx s{P}}  \Leq   16^{p}c_{L} \,\bigl( 1+ |P|_{q}^{p-1}  \bigr)\Bigl(1 + \snorm\fP_\infty^{\frac{q-p}s}  \Bigr)\, .
\end{equation}
\end{theorem}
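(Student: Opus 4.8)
The plan is to manufacture $h_\lambda$ from the subdifferential of $L$ at the infinite-sample solution $\fP$, to obtain the stability bound (\ref{infinite:lipschitz-h3}) by the convexity argument already underlying Theorem \ref{infinite:general-repres-co1}, and to spend the real effort on the $\Lxx s{\bar P}$-estimates (\ref{infinite:general-repres-co2-h2xx}) and (\ref{infinite:general-repres-co2-h2}). First I would fix $\lambda>0$, write $L(y,t)=\psi(y-t)$, and note that since $L$ is a $P$-integrable Nemitski loss of order $p$ (Example \ref{loss:ab-dist-def-ex}), Theorem \ref{infinite:exist-and-unique} provides the minimizer $\fP$ with $\snorm\fP_H\leq(\RP L 0/\lambda)^{1/2}$ and hence $\inorm{\fP}\leq\inorm k\,\snorm\fP_H<\infty$. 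For each $(x,y)$ I would select $h_\lambda(x,y)\in\partial_3 L(x,y,\fP(x))=-\partial\psi(y-\fP(x))$, say the right derivative so that $h_\lambda$ is measurable, which gives the first-order optimality condition $2\lambda\fP+\E_P h_\lambda\Phi=0$. Then (\ref{infinite:lipschitz-h3}) follows exactly as in the locally Lipschitz case: the pointwise subgradient inequality for $L(x,y,\cdot)$ at $\fP(x)$, combined with $g(x)-\fP(x)=\langle g-\fP,\Phi(x)\rangle_H$, yields $\RT L g\geq\RT L\fP+\langle\E_T h_\lambda\Phi,\,g-\fP\rangle_H$ for all $g\in H$; taking $g=\fT$, using that $\fT$ minimizes $f\mapsto\lambda\snorm f_H^2+\RT L f$, expanding $\snorm\fP_H^2-\snorm\fT_H^2$, and substituting $\E_P h_\lambda\Phi=-2\lambda\fP$ collapses everything to $\lambda\snorm{\fP-\fT}_H^2\leq\langle\E_P h_\lambda\Phi-\E_T h_\lambda\Phi,\,\fP-\fT\rangle_H$, so Cauchy--Schwarz finishes.

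The crux is controlling $h_\lambda$. The elementary lemma I would establish first is that a nonnegative convex $\psi$ of upper growth type $p$ has subgradients of order $p-1$: bounding a subgradient at $r$ by the difference quotients of $\psi$ against the points $\pm(2|r|+1)$ and using $\psi(\pm(2|r|+1))\leq c(2^p(|r|+1)^p+1)$ gives $|\psi'(r)|\leq 8^p c_L(|r|^{p-1}+1)$ once the powers of two are absorbed into $8^p$ and $c_L$ is taken to be the growth constant. Hence $|h_\lambda(x,y)|\leq 8^p c_L(|y-\fP(x)|^{p-1}+1)$. For (\ref{infinite:general-repres-co2-h2xx}) I would estimate $|y-\fP(x)|\leq|y|+\inorm{\fP}$, so that $|h_\lambda(x,y)|\leq 8^p c_L(|y|^{p-1}+\inorm{\fP}^{p-1}+1)$; taking $\Lxx s{\bar P}$-norms with $s=q/(p-1)$, using the triangle inequality together with $\bigl(\E_{\bar P}|y|^{(p-1)s}\bigr)^{1/s}=|\bar P|_q^{p-1}$, then gives the claimed bound for every $\bar P$ with $|\bar P|_q<\infty$. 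The case $p=1$ (where $s=\infty$) recovers the uniformly bounded situation of Theorem \ref{infinite:general-repres-co1}.

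For the sharper estimate (\ref{infinite:general-repres-co2-h2}) under the extra lower growth assumption, the new ingredient is a uniform control of the residual moment: comparing $\fP$ with $0$ gives $\lambda\snorm\fP_H^2+\RP L\fP\leq\RP L 0$, and the lower growth bound $\psi(r)\geq c(|r|^p-1)$ then forces $\E_P|Y-\fP(X)|^p\leq D$ with $D$ independent of $\lambda$. I would bootstrap this to the $q$-th moment: writing $R:=|Y-\fP(X)|$ and splitting $R^q=R^p R^{q-p}$ with $R^{q-p}\leq 2^{q-p}(|Y|^{q-p}+\inorm{\fP}^{q-p})$, one application of H\"older with exponents $q/p$ and $q/(q-p)$ turns $\E_P[R^p|Y|^{q-p}]$ into $(\E_P R^q)^{p/q}|P|_q^{q-p}$, producing a self-referential inequality for $M:=\E_P R^q$. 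Solving it (legitimate since $p<q$ makes $p/q<1$) gives $M\leq C(|P|_q^q+D\,\inorm{\fP}^{q-p})$, and raising to the power $1/s$, using subadditivity of $t\mapsto t^{1/s}$ and the elementary estimate $1+A+B\leq(1+A)(1+B)$, produces (\ref{infinite:general-repres-co2-h2}).

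The main obstacle I anticipate is the bookkeeping in the last paragraph: justifying the self-referential moment inequality together with the alternative where the $\inorm{\fP}^{q-p}$ term dominates, and tracking the various powers of two so that they consolidate into the stated $16^p$. By comparison, the measurable selection of the subgradient and the rigorous infinite-dimensional optimality condition $2\lambda\fP+\E_P h_\lambda\Phi=0$ are routine once Theorem \ref{infinite:exist-and-unique} and the constructions in \cite{DeRoCaPiVe04a,ChSt04b} are invoked.
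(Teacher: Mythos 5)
Your construction of $h_\lb$ as a measurable subgradient selection at $\fP$, the optimality condition $2\lb\fP+\E_P h_\lb\P=0$, the resulting stability bound (\ref{infinite:lipschitz-h3}), the pointwise growth estimate $|h_\lb(x,y)|\leq 4^p c_L\max\{1,|y-\fP(x)|^{p-1}\}$, and the derivation of (\ref{infinite:general-repres-co2-h2xx}) from it all coincide with the paper's argument (which delegates the first two items to the proof of Theorem 13 in \cite{ChSt04b} and then integrates the pointwise bound exactly as you do). The one place where you genuinely diverge is the proof of (\ref{infinite:general-repres-co2-h2}), and there your route does not deliver the stated inequality. Your self-referential bound for $M:=\E_P|Y-\fP(X)|^q$, namely $M\leq 2^{q-p}\bigl(M^{p/q}|P|_q^{q-p}+D\inorm\fP^{q-p}\bigr)$, solves to $M\leq 2^{(q-p+1)q/(q-p)}|P|_q^q+2^{q-p+1}D\inorm\fP^{q-p}$ in the two alternatives, and the factor $2^{(q-p+1)q/(q-p)}$ (and hence its $(p-1)/q$-th power appearing in $M^{1/s}$) diverges as $q\downarrow p$ whenever $p>1$. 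The constant in (\ref{infinite:general-repres-co2-h2}) is claimed to be $16^p c_L$ with $c_L$ depending only on $L$, uniformly over $q\in[p,\infty]$, so a bound whose constant blows up as $q$ approaches $p$ does not prove the statement; at best it proves a variant with a $q$-dependent constant.

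The fix is to avoid bootstrapping the moment of the unbounded residual altogether. The paper first splits under the integral,
\begin{equation*}
|h_\lb(x,y)|^{s}\ \leq\ 4^{ps}c_L^{s}\max\bigl\{1,|y-\fP(x)|^{q}\bigr\}\ \leq\ 4^{ps}\,2^{q-1}c_L^{s}\bigl(1+|y|^{q}+|\fP(x)|^{q}\bigr)\,,
\end{equation*}
so that only $\E_P|\fP|^q$ remains to be controlled, and $\fP$ — unlike $Y-\fP(X)$ — is \emph{bounded}. The trivial interpolation $\E_P|\fP|^q\leq\inorm\fP^{q-p}\,\E_P|\fP|^p$ then costs no constant at all, and the lower growth type enters only to bound $\E_P|\fP|^p\leq 2^{p}c_L^{(3)}(1+|P|_p^p)$ via $\E_P|\fP|^p\leq 2^{p-1}(\E_P|Y-\fP(X)|^p+|P|_p^p)$ and $\RP L{\fP}\leq\RP L 0$ — the same comparison with the zero function that you use to get your $D$. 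Since $2^{(q-1)/s}\leq 2^{p-1}$ and $|P|_p^{p/s}\leq 1+|P|_q^{p-1}$, all constants consolidate into $16^p c_L$ independently of $q$. With this substitution for the last step (and noting that $q=\infty$ and $q=p$ should be treated directly, as the exponent $s$ degenerates there), your argument is complete.
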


\begin{proof}
By taking care in the constants in the proof of \cite[Theorem 13]{ChSt04b} we obtain a measurable function $h_\lb:X\times Y\to \R$
satisfying (\ref{infinite:lipschitz-h3}) and $$
|h_\lb(x,y)|
\leq
4^{p }\,c_{L} \max\bigl\{ 1, |y-f_{P,\lambda}(x)|^{p-1}\bigr\}\, , \qquad \qquad (x,y)\in X\times Y,
$$
where $c_{L}$ is a suitable constant depending  only on the loss function $L$.
For $q=\infty$ we then easily find the assertion, and hence let us assume $q\in[p,\infty)$.
In this case, the above inequality yields
\begin{equation}\label{infinite:general-repres-co2-h2xxx}
\bigl|  h_\lb(x,y) \bigr|^{s} 
\leq 
4^{p s}c_{L}^{s} \max\bigl\{ 1, |y-f_{P,\lambda}(x)|^{q}\bigr\}
\leq  
4^{p s} 2^{q-1} c_{L}^{s} \Bigl(  1 + |y|^{q} + |\fP(x)|^{q}  \Bigr)\, .
%& \leq & 
%2^{q-1}\bigl(c_{L}')^{s} \Bigl(  1 + |y|^{q} + \inorm\fP^{q-p}  |\fP(x)|^{p}  \Bigr)\, ,
\end{equation}
Since $\frac{q-1}s\leq p$ and $s\geq 1$ we then obtain (\ref{infinite:general-repres-co2-h2xx}).
Moreover, if $\psi$ is the function satisfying $L(y,t) = \psi(y-t)$, $y,t\in \R$, we have 
\begin{eqnarray*}
\E_{P} |\fP|^{p}
& \leq &
2^{p-1}\int_{X\times Y}  \bigl|y-\fP(x)\bigr|^{p} + |y|^{p} \, dP(x,y)\\
& \leq &
2^{p-1} \int_{X\times Y}  c_{L}^{(1)}\,\psi\bigl(y-\fP(x)\bigr)+1 + |y|^{p} \, dP(x,y)\\
& = &
2^{p-1}  \Bigl( c_{L}^{(1)} \Rx L {P}\fP + 1 + | P|_{p}^{p}  \Bigr)\\
& \leq &
2^{p-1}  \Bigl( c_{L}^{(1)} \Rx L {P} 0 + 1 + | P|_{p}^{p}  \Bigr)\\
& \leq &
2^{p-1}  \Bigl( c_{L}^{(2)} \bigl( 1 + | P|_{p}^{p} \bigr) + 1 + | P|_{p}^{p}  \Bigr)\\
&\leq&
2^{p}c_{L}^{(3)} \bigl( 1 + | P|_{p}^{p} \bigr)\, ,
\end{eqnarray*}
where $c_{L}^{(1)}$, $c_{L}^{(2)}\geq 1$, and $c_{L}^{(3)}\geq 1$ are
suitable constants depending  only on the loss function $L$.
Combining the estimate on $\E_{P} |\fP|^{p}$ with (\ref{infinite:general-repres-co2-h2xxx}) then gives
\begin{eqnarray*}
\snorm {h_\lb}_{\Lx s{P}} 
& \leq &
4^p \,2^{\frac{q-1}s} c_{L} \Bigl(  1 +  |P|_{q}^{p-1} + \inorm\fP^{\frac{q-p}s} \bigl(\E_{P} |\fP|^{p} \bigr)^{\frac  1s} \Bigr)\\
& \leq &
4^p\, 2^{\frac{q-1}s} c_{L} \Bigl(  1 +  |P|_{q}^{p-1} + \inorm\fP^{\frac{q-p}s} \bigl(2^{p}c_{L}^{(3)} ( 1 + | P|_{p}^{p} ) \bigr)^{\frac  1s} \Bigr)\\
& \leq &
4^p\, 2^{\frac{p+q}s} \bigl(c_{L}^{(4)}\bigr)^{1+\frac 1 s}
\,\bigl( 1+|P|_{p}^{\frac{p}s}+ |P|_{q}^{p-1}  \bigr)\bigl(1 + \inorm\fP^{\frac{q-p}s}  \bigr)\, ,
\end{eqnarray*}
where $c_{L}^{(4)}\geq 1$ is another 
suitable constant depending  only on the loss function $L$.
Now note that we have $\frac{p+q}s = (\frac  pq +1)(p-1)\leq 2(p-1)$ and  $1+\frac 1s\leq 2 $. These estimates together with 
$$
|P|_{p}^{\frac p s} \Leq |P|_{q}^{\frac p s} \Eq |P|_{q}^{\frac {p(p-1)}q} \Leq 1 + |P|_{q}^{p-1}
$$
then yield (\ref{infinite:general-repres-co2-h2}). %The proof of (\ref{infinite:general-repres-co2-h2xx}) is similiar though easier.
\end{proof}

The next lemma establishes  Hilbert space valued laws of large numbers which are later
used to bound the term $\mnorm{\E_P h_\lb\Phi- \E_{T}h_\lb\Phi}_{H}$.

\begin{lemma}\label{lln-Hilbert}
Let $(\Om,\ca A,\mu)$ be a probability space,
$Z$ be a Polish space, and $\ca Z:=(Z_i)_{i\geq 1}$ be a   $Z$-valued stochastic process on $\Om$.
Assume that $\ca Z$ satisfies the WLLNE and  let $P$ be the asymptotic mean of $(\ca Z,\mu)$.
Furthermore, let  $H$ be a Hilbert space, and $\P:Z\to H$ be a continuous and bounded map. 
Then for all $h\in \Lxx \infty P$ we have 
\begin{equation}\label{slln-hilbert}
	\lim_{n\to \infty}\frac 1 n \sum_{i=1}^n (h\P) \circ Z_i = \E_P h\P\, ,
\end{equation}
where the convergence is in probability $\mu$. Moreover, if $\ca Z$ actually satisfies the WLLN then 
(\ref{slln-hilbert}) holds for all $f\in \Lx 1 P$. Finally, the convergence holds $\mu$-almost surely 
for all $f\in \Lx \infty P$ or $f\in \Lx 1 P$ 
if $\ca Z$
satisfies the SLLNE or SLLN, respectively.
\end{lemma}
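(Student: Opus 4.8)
The plan is to bootstrap the $H$-valued law of large numbers from the scalar statements already in hand, namely Lemma \ref{lln-bounded} for bounded functions and the $\Lx 1 P$-version encoded in Definition \ref{lln-l1-def}, by means of a finite-dimensional skeleton approximation of the feature map. First I would settle the bounded case $h\in\Lxx \infty P$. Put $M:=\sup_{z\in Z}\snorm{\P(z)}_H<\infty$ and fix a bounded everywhere-defined measurable representative of $h$ with $|h|\le C$ (any two representatives differ on a $P$-null set, which by Lemma \ref{lln-bounded} contributes nothing to the averages, so this is harmless). Since $Z$ is Polish and $\P$ is continuous, the range $\P(Z)$ is a separable subset of $H$; picking a countable dense set $\{w_k\}_{k\ge 1}\subset\P(Z)$ and the disjoint Borel cells $A_k:=\{z\in Z:\snorm{\P(z)-w_k}_H<\delta\}\setminus\bigcup_{j<k}A_j$ gives a countable measurable partition of $Z$ on which $\snorm{\P-w_k}_H<\delta$. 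Truncating to the first $N$ cells, with $N$ so large that $P\bigl(\bigcup_{k>N}A_k\bigr)$ is small, I would set $g:=\sum_{k=1}^N (h\eins_{A_k})\,w_k$, a bounded finite-rank approximant whose scalar coefficients $h\eins_{A_k}$ are bounded and measurable.

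The core of the argument is then the three-term decomposition
\[
\Norm{\frac1n\sum_{i=1}^n (h\P)\circ Z_i - \E_P h\P}_H
\;\le\; \frac1n\sum_{i=1}^n \psi\circ Z_i
\,+\, \Norm{\frac1n\sum_{i=1}^n g\circ Z_i - \E_P g}_H
\,+\, \E_P\psi ,
\]
where $\psi:=\snorm{h\P-g}_H$ is a bounded nonnegative scalar function with $\E_P\psi\le C\delta + CM\,P\bigl(\bigcup_{k>N}A_k\bigr)$, which the choice of $\delta$ and $N$ makes arbitrarily small. The middle term is a \emph{finite} linear combination $\sum_{k=1}^N w_k\,\bigl(\frac1n\sum_i (h\eins_{A_k})\circ Z_i\bigr)$ of scalar averages, each converging in probability to $\E_P(h\eins_{A_k})$ by Lemma \ref{lln-bounded}; applying the continuous map $(\lambda_k)\mapsto\sum_k\lambda_k w_k$ shows this term tends to $0$ in probability. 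Applying Lemma \ref{lln-bounded} to the bounded $\psi$, the first term converges in probability to $\E_P\psi$. Hence for every $\e>0$ the left-hand side is below a fixed multiple of $\E_P\psi$ with probability tending to $1$, and letting the approximation error shrink yields exactly the convergence in probability in (\ref{slln-hilbert}).

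For the $\Lx 1 P$ statement under the WLLN I would split an unbounded $h=h_R+(h-h_R)$ with $h_R:=h\eins_{\{|h|\le R\}}$: the bounded part $h_R$ is covered by the case just proved (which uses only the standing WLLNE), while the tail obeys $\snorm{(h-h_R)\P}_H\le M\,|h-h_R|$, so by the scalar WLLN of Definition \ref{lln-l1-def} applied to $|h-h_R|\in\Lx 1 P$ we get $\frac1n\sum_i\snorm{((h-h_R)\P)\circ Z_i}_H\le M\,\frac1n\sum_i |h-h_R|\circ Z_i\to M\,\E_P|h-h_R|$ in probability, which is small for large $R$. For the almost-sure conclusions under the SLLNE or SLLN the same decomposition applies verbatim once ``in probability'' is replaced by ``$\mu$-almost surely''; the only extra bookkeeping is to fix a sequence $\e_m\downarrow 0$ of accuracies with approximants $g_m,\psi_m$ ($\E_P\psi_m<\e_m$), intersect the corresponding $\mu$-full-measure convergence sets (a countable intersection, hence still full measure), and note that on the intersection $\limsup_n$ of the left-hand side is $\le 2\E_P\psi_m<2\e_m$ for every $m$, forcing it to be $0$.

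I expect the main obstacle to be the skeleton construction itself: producing a \emph{finite}, bounded approximant $g$ of the Hilbert-space integrand $h\P$ while keeping the residual $\psi=\snorm{h\P-g}_H$ bounded, so that the \emph{scalar} Lemma \ref{lln-bounded} (and not merely its $\Lx 1 P$ form) governs the first error term, and simultaneously controlling all three terms uniformly. Separability of $\P(Z)$, which follows from $Z$ being Polish and $\P$ continuous, is precisely what permits the countable partition and the truncation to finitely many cells; without it the reduction to finitely many scalar laws of large numbers would collapse.
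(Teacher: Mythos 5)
Your argument is correct, but it implements the skeleton idea differently from the paper. The paper first uses inner regularity of $P$ and $|h|P$ on the Polish space $Z$ to reduce to an $h$ vanishing outside a compact set $K$; it then exploits compactness of $\P(K)$ together with the approximation property of $H$ to replace $\P$ by $S\P$ for a finite-rank operator $S$ satisfying $\snorm{S\P(z)-\P(z)}_H\le\e$ on $K$, and handles the finite-dimensional part coordinatewise in an orthonormal basis of $SH$ via the scalar laws of large numbers. You instead discretize the domain: separability of $\P(Z)$ yields a countable measurable partition on whose cells $\P$ is nearly constant, truncation to finitely many cells produces a simple $H$-valued approximant $g=\sum_{k\le N}(h\eins_{A_k})w_k$ whose convergence again reduces to finitely many scalar laws, the residual $\psi=\snorm{h\P-g}_H$ is a bounded scalar function handled by Lemma \ref{lln-bounded}, and the $\Lx 1 P$ case follows from the truncation $h_R=h\eins_{\{|h|\le R\}}$ rather than from restriction to a compact set. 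Your route is more elementary --- it avoids the approximation property and inner regularity altogether, so it really only uses that $\P(Z)$ is a separable bounded subset of $H$ (a separable metric $Z$ would suffice) --- whereas the paper's operator-theoretic reduction is the one that carries over verbatim to Banach spaces with the approximation property. The only imprecision is the parenthetical claim that changing $h$ on a $P$-null set $N$ is harmless ``by Lemma \ref{lln-bounded}'': since the marginals $\mu_{Z_i}$ need not be absolutely continuous with respect to the asymptotic mean $P$, this is valid only because you restrict attention to representatives bounded by $C$, so that the discrepancy is dominated by $2CM\,\eins_N$ and Lemma \ref{lln-bounded} applied to $\eins_N$ makes it vanish in the limit; for an unbounded representative of the same $L_\infty(P)$-class the averages could genuinely differ. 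Everything else, including the almost-sure bookkeeping via countable intersections of full-measure sets, is sound.
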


\begin{proof}
Let us first show (\ref{slln-hilbert}) for $f\in \Lx 1 P$ when $\ca Z$ satisfies the SLLN.
To this end we first make the additional assumption that 
there exists a compact subset $K\subset Z$ with $h(z)=0$ for all $z\not\in K$. 
Now recall that $\P$ is continuous and hence $\P(K)\subset H$ is compact. 
Moreover, recall that $H$ as a Hilbert space has the approximation property 
(see e.g.~\cite[p.~30ff]{LiTz77} for details
on this concept). 
For a fixed $\e>0$ there consequently 
exists a bounded linear operator $S:H\to H$ with $m:=\rank S<\infty$ and 
$$
\snorm{S\P(z)-\P(z)}_H\leq \e\, , \qquad \qquad z\in K.
$$
Let $e_1,\dots,e_m$ be an ONB of the image  $SH$ of $H$ under $S$. Since $\langle e_j, S\P\rangle:Z\to \R$, $j=1,\dots,m$, are  
bounded measurable functions we then find that 
$$
\langle e_j,hS\P\rangle = h\langle e_j, S\P\rangle\, , \qquad \qquad j=1,\dots,m,
$$
are $P$-integrable.
Consequently, they satisfy the limit relation (\ref{slln-l1}),
and by a well-known reformulation of almost sure convergence (see e.g.~\cite[Lem.~20.6]{Bauer01})
there hence exists an $n_\e$ such that with probability not less than $1-\e$ we have both
$$
\sup_{n\geq n_\e} \sup_{j=1,\dots,m} 
\biggl| \frac 1 n \sum_{i=1}^n \bigl\langle e_j,hS\P\bigr\rangle \circ Z_i (\om) - \E_P \langle e_j,hS\P\rangle \biggr| \leq \e m^{-1/2}
$$
and 
$$
\sup_{n\geq n_\e} \biggl| \frac 1 n \sum_{i=1}^n |h|\circ Z_i(\om) - \E_P |h|\biggr| \leq \e\, .
$$
Let us fix an  $n\geq n_\e$ and an $\om\in \Om$ which satisfies these two inequalities. 
Using $h(z)=0$ for all $z\in Z\mysetminus K$ we then have 
\begin{eqnarray*}
\norm{\frac 1 n \sum_{i=1}^n (h\P) \circ Z_i(\om) - \frac 1 n \sum_{i=1}^n (hS\P) \circ Z_i(\om)}_H
& \leq &
\frac 1 n\sum_{i=1}^n |h| \circ Z_i(\om) \cdot \snorm{\P\circ Z_i(\om) - S\P\circ Z_i(\om)}_H \\
& \leq &
\frac \e n \sum_{i=1}^n |h|\circ Z_i(\om)\\
& \leq &
\e \bigl( \e + \E_P |h|\bigr) \\
& \leq &
\e + \e\, \E_P|h|\, .
\end{eqnarray*}
Moreover, $n$ and $\om$ also satisfy
\begin{eqnarray*}
\norm{\frac 1 n \sum_{i=1}^n (hS\P) \circ Z_i(\om) - \E_P hS\P}_H
& = &
\biggl( \sum_{j=1}^m \Bigl|  \Bigl\langle e_j,\frac 1 n \sum_{i=1}^n (hS\P) \circ Z_i(\om) - \E_P hS\P\Bigr\rangle            \Bigr|^2          \biggr)^{1/2}\\
& \leq &
\sqrt m \sup_{j=1,\dots,m}   \biggl| \frac 1 n \sum_{i=1}^n \bigl\langle e_j,hS\P\bigr\rangle \circ Z_i (\om) - \E_P \langle e_j,hS\P\rangle \biggr|\\
& \leq & 
\e\, .
\end{eqnarray*}
In addition, $h(z)=0$ for all $z\in Z\mysetminus K$ implies
$$
\mnorm{ \E_P hS\P - \E_P h\P }_H \leq \int_K |h(z)| \cdot\snorm{S\P(z)-\P(z)}_H\, dP(z) \leq \e\, \E_P|h|\, ,
$$ 
and consequently we can conclude
\begin{eqnarray*}
\norm{\frac 1 n \sum_{i=1}^n (h\P) \circ Z_i(\om) - \E_P h\P}_H
& \leq &
\norm{\frac 1 n \sum_{i=1}^n (h\P) \circ Z_i(\om) - \frac 1 n \sum_{i=1}^n (hS\P) \circ Z_i(\om)}_H\\
&&\quad
+ 
\norm{\frac 1 n \sum_{i=1}^n (hS\P) \circ Z_i(\om) - \E_P hS\P}_H
+ 
\mnorm{ \E_P hS\P \!-\! \E_P h\P }_H\\
& \leq &
2\e\bigl(1+\E_P|h|\bigr)\, .
\end{eqnarray*}
This shows 
$$
\mu\biggl(\biggl\{\om\in \Om:  \sup_{n\geq n_\e} \norm{\frac 1 n \sum_{i=1}^n (h\P) \circ Z_i(\om) - \E_P h\P}_H \leq 2\e\bigl(1+\E_P|h|\bigr)\biggr\}\biggr) \geq 1-\e\, ,
$$
and hence \cite[Lemma 20.6]{Bauer01} yields the  assertion for our special case.\\
Let us now prove the assertion for general $h\in \Lx 1 P$. To this end
we may assume without loss of generality that $\snorm{\P(z)}\leq 1$ for all $z\in Z$.
Let us fix an $\e>0$. Since $Z$ is Polish the measures 
$P$ and $|h|P$ are regular and hence there then exists a compact subset $K\subset Z$ with 
$$
P(Z\mysetminus K)\leq \e
\qquad \qquad \mbox{ and } \qquad \qquad 
\int_{Z\setminus K} |h| \, dP \leq \e\, . 
$$
Now $g:= \eins_K h$ is a $P$-integrable function that vanishes outside the compact set $K$.
Our preliminary considerations and the SLLN consequently show that 
there exists an $n_\e\geq 1$ such that with probability not less than $1-\e$ we have both 
$$
\sup_{n\geq n_\e} \norm{\frac 1 n \sum_{i=1}^n (g\P) \circ Z_i(\om) - \E_P g\P}_H \leq \e
$$
and
$$
\sup_{n\geq n_\e} \biggl|\frac 1 n \sum_{i=1}^n \bigl(\eins_{Z\setminus K}|h|\bigr) \circ Z_i(\om) - \E_P \eins_{Z\setminus K}|h| \biggr| \leq \e
$$
Let us fix an  $n\geq n_\e$ and an $\om\in \Om$ which satisfies these two inequalities. 
Using $h-g= \eins_{Z\setminus K} h$ and  $\snorm{\P(z)}\leq 1$ for all $z\in Z$
we then obtain
\begin{eqnarray*}
\norm{\frac 1 n \sum_{i=1}^n (h\P) \circ Z_i(\om) - \E_P h\P}_H
&\leq &
\norm{\frac 1 n \sum_{i=1}^n (h\P) \circ Z_i(\om) - \frac 1 n \sum_{i=1}^n (g\P) \circ Z_i(\om)}_H\\
&&
\quad +\,
\norm{\frac 1 n \sum_{i=1}^n (g\P) \circ Z_i(\om) - \E_P g\P}_H
+
\mnorm{\E_P g\P - \E_P h\P}_H\\
& \leq &
\frac 1 n \sum_{i=1}^n \bigl(\eins_{Z\setminus K}|h|\bigr) \circ Z_i(\om) + \e + \E_P \eins_{Z\setminus K}|h|\\
& \leq &
\e + \E_P \eins_{Z\setminus K}|h| + \e + \E_P \eins_{Z\setminus K}|h|\\
& \leq &
4\e\, .
\end{eqnarray*}
Therefore we obtain 
$$
\mu\biggl( \biggl\{\om\in \Om: \sup_{n\geq n_\e} \norm{\frac 1 n \sum_{i=1}^n (h\P) \circ Z_i(\om) - \E_P h\P}_H \leq 4\e\biggr\}\biggr) \geq 1-\e\, ,
$$
and hence we obtain the assertion by another application of  \cite[Lemma 20.6]{Bauer01}.\\
Finally, if $\ca Z$  only satisfies the WLLN then we obtain the assertion by omitting the terms $\sup_{n\geq n_\e}$
in the above proof.
Moreover, for processes satisfying only a law of large numbers for events we have to use Lemma \ref{lln-bounded}
instead of Definition \ref{lln-l1-def}. 
\end{proof}

In order to prove Theorem \ref{asymp-learn} we finally  need the following technical lemma:

\begin{lemma}\label{auswahl}
Let $F:(0,\infty)\times \N\to [0,\infty)$ be a function with $\lim_{n\to \infty}F(\lb,n)=0$ for all $\lb>0$.
Then there exists a sequence $(\lb_n)\subset (0,1]$ with 
$$
\lim_{n\to \infty} \lb_n = 0
$$
and 
$$
\lim_{n\to \infty} F(\lb_n,n) = 0\, .
$$
\end{lemma}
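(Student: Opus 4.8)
The plan is to use a diagonalization over the discrete grid of scales $\lb = 1/k$, $k\in\N$. First I would fix $k\geq 1$ and apply the hypothesis $\lim_{n\to\infty}F(1/k,n)=0$ to obtain an integer $N_k$ with $F(1/k,n)\leq 1/k$ for all $n\geq N_k$. Replacing $N_k$ by $\max\{N_k,N_{k-1}+1\}$ inductively, I may assume these thresholds are strictly increasing, $N_1<N_2<\cdots$, so that $N_k\to\infty$. Passing to the countable set $\{1/k:k\geq 1\}$ is what keeps the bookkeeping clean, since the hypothesis only supplies convergence for each fixed $\lb$ separately.

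Next I would assemble $(\lb_n)$ as a step sequence that is constant on the blocks cut out by these thresholds. Concretely, set $\lb_n:=1$ for $n<N_1$, and for $n\geq N_1$ let $k(n):=\max\{k\geq 1:N_k\leq n\}$, which is a finite and well-defined index because the $N_k$ increase to infinity; then put $\lb_n:=1/k(n)$. By construction $\lb_n\in(0,1]$ for every $n$.

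Finally I would verify the two limits. For $\lb_n\to 0$, note that for every fixed $K$ one has $k(n)\geq K$ whenever $n\geq N_K$, hence $k(n)\to\infty$ and therefore $\lb_n=1/k(n)\to 0$. For $F(\lb_n,n)\to 0$, observe that for $n\geq N_1$ the index $k:=k(n)$ satisfies $n\geq N_k$, so the defining property of $N_k$ yields $F(\lb_n,n)=F(1/k(n),n)\leq 1/k(n)$, which tends to $0$ as $n\to\infty$ by the first part.

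I do not expect a serious obstacle here; the only point requiring care is coupling the two convergences, that is, letting the scale decrease slowly enough, through the strictly increasing thresholds $N_k$, that $F(\lb_n,n)$ still vanishes while $\lb_n$ itself vanishes. Arranging the thresholds to be strictly increasing and reading off the block index $k(n)$ is precisely what enforces this balance.
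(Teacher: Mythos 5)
Your argument is correct and is essentially identical to the paper's own proof: the paper also picks thresholds $n_k$ with $F(k^{-1},n)<k^{-1}$ for $n\geq n_k$, arranges them to be strictly increasing, and sets $\lb_n=k^{-1}$ on the block $n_k\leq n<n_{k+1}$, which is exactly your $\lb_n=1/k(n)$. Nothing further is needed.
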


\begin{proof}
For $k\geq 1$ there exists an $n_k\geq 1$ such that for all $n\geq n_k$ we have 
\begin{equation}\label{lemma-h1}
F(k^{-1},n) < k^{-1}\, .
\end{equation}
Obviously, we may assume without loss of generality that $n_k< n_{k+1}$ for all $k\geq 1$. For $n\geq 1$ we write
$$
\lb_n
:=
\begin{cases}
1 & \mbox{ if } 1 \leq n < n_1\\
k^{-1} & \mbox{ if } n_k\leq n < n_{k+1}\, .
\end{cases}
$$
Now let $\e>0$. Then there exists an integer $k\geq 1$ with $k^{-1}\leq \e$. Let us fix an $n\geq n_k$. Then there exists an $i\geq k$ 
with $n_i\leq n < n_{i+1}$, and consequently we have $\lb_n = i^{-1}$.
This gives 
$$
\lb_n = i^{-1} \leq k^{-1} \leq \e\, ,
$$
and since (\ref{lemma-h1}) together with $n_i\leq n$ yields $F(i^{-1},n) \leq i^{-1}$ we also find 
$$
F(\lb_n,n) = F(i^{-1},n) \leq i^{-1} \leq \e\, .
$$
These estimates show the assertion.
\end{proof}

\begin{proofof}{Theorem \ref{asymp-learn}}
We only show the assertion in the case of $\ca Z$ satisfying the SLLNE. 
Since $L$ is locally bounded, the function $L(.,.,0)$ is bounded and hence we may assume without loss of generality that $\Rx L Q 0\leq 1$
for all distributions $Q$ on $X\times Y$. By a standard argument this assumption leads to
$$
\snorm \fQ_H \Leq    \lb^{-1/2}
$$
for {\em all\/} distributions $Q$ on $X\times Y$ and all $\lb>0$.
Moreover, we may assume without loss of generality that $\inorm k\leq 1$, so that we have $\inorm f\leq \snorm f_H$ for all $f\in H$.
Now, let us fix an $\e>0$. 
Since a  simple argument shows 
that $\lim_{\lb\to 0}\RP L \fP = \RPxB LH = \RPB L$ we then find 
\begin{eqnarray*}
\Bigl| \RP L \fTnon  - \RPB L \Bigr| 
&\leq  &
\Bigl| \RP L \fTnon  - \RP L \fP \Bigr| + \Bigl| \RP L \fP - \RPB L\Bigr|\\
& \leq &
|L|_ {\lb^{-1/2},1}\,\, \, \inorm {\fTnon - \fP} + \e \\
& \leq &
\frac {|L|_ {\lb^{-1/2},1}} {\lb} \,\,\,\mnorm{ \E_{T_n(\om)}h_\lb\Phi-\E_P h_\lb\Phi}_{H} + \e 
\end{eqnarray*}
for all $n\geq 1$, $\om \in \Om$, and
all sufficiently small $\lb>0$, where $h_\lb:X\times Y\to \R$ is the function according to Theorem \ref{infinite:general-repres-co1}, and 
$\E_{T_n(\om)}$ denotes the expectation operator with respect to the empirical distribution associated to the training set
$T_n(\om)= ((X_1(\om),Y_1(\om)),\dots,(X_n(\om),Y_n(\om)))$, 
i.e.~$\E_{T_n(\om)} g = \frac 1 n \sum_{i=1}^n g(X_i(\om),Y_i(\om))$.
Furthermore, for all $\lb\in(0,\e]$ and $n\geq 1$ we have
\begin{eqnarray*}
&&
\mu \biggl(\Bigl\{  \om\in \Om:  \sup_{m\geq n} \frac {|L|_ {\lb^{-1/2},1}} {\lb} \mnorm{ \E_{T_m(\om)}h_\lb\Phi-\E_P h_\lb\Phi}_{H} \geq  \e \Bigr\}   \biggr)\\
& \leq &
\mu \biggl(\Bigl\{  \om\in \Om:  \sup_{m\geq n} \mnorm{ \E_{T_m(\om)}h_\lb\Phi-\E_P h_\lb\Phi}_{H} \geq  \frac{\lb^2}   {|L|_ {\lb^{-1/2},1}}  \Bigr\} \biggr)\\
& =: & F(\lb,n)\, .
\end{eqnarray*}
Moreover, by Theorem \ref{infinite:general-repres-co1} we know that $h_\lb$ is a bounded function for  all $\lb>0$ and consequently, 
Lemma \ref{lln-Hilbert} yields $\lim_{n\to \infty}F(\lb,n)= 0$ for all $\lb\in (0,\e]$. Now Lemma \ref{auswahl} shows that there exists a sequence $(\lb_n)$
with $\lb_n\to 0$ and $F(\lb_n,n)\to 0$. For fixed $\d>0$ there consequently exists an $n_0\geq 1$ such that for all $n\geq n_0$ we have
$|\RP L \fPn - \RPB L|\leq \e$, $\lb_n\leq \e$, and $F(\lb_n,n)\leq \d$. For such $n$ our previous considerations then show
\begin{eqnarray*}
&&
\mu\biggl( \Bigl\{ \om\in \Om:  \sup_{m\geq n}\Bigl| \RP L \fTnso  - \RPB L \Bigr| \geq 2\e  \Bigr\}\biggr) \\
& \leq &
\mu\biggl(\Bigl\{ \om\in \Om: \sup_{m\geq n} \frac {|L|_ {\lb_{m}^{-1/2},1}} {\lb_m}\,\,\, \mnorm{ \E_{T_m(\om)}h_{\lb_{m}}\Phi-\E_P h_{\lb_{m}}\Phi}_{H} \geq \e  \Bigr\}\biggr) \\
& \leq &
F(\lb_n,n)\\
& \leq &
\d\, .
\end{eqnarray*}
This shows the assertion.
\end{proofof}

\begin{proofof}{Theorem \ref{asymp-learn2}}
Again, we only show the assertion in the case of $\ca Z$ satisfying the SLLN. 
Obviously, we may assume without loss of generality that $\inorm k\leq 1$, so that we have $\inorm f\leq \snorm f_H$ for all $f\in H$.
Moreover, since $|P|_p<\infty$ we may additionally assume without loss of generality that  both  $|P|_p\leq 1$ and $\RP L0\leq 1$.
Note that the latter assumption  immediately yields
$$
\snorm\fP_H \leq \lb^{-1/2}
$$
for all $\lb>0$. Let $\psi:\R\to [0,\infty)$ be the function satisfying $L(y,t)=\psi(y-t)$, $y,t\in \R$.
The assumption $|P|_p<\infty$ then guarantees $\psi \in \Lx 1 P$ and hence 
the SLLN shows 
\begin{equation}\label{asym-h1}
\lim_{n\to \infty} \Rx L {T_n(\om)}0 = \lim_{n\to \infty}\E_{T_n(\om)}\psi = \E_P \psi = \RP L 0
\end{equation}
for $\mu$-almost all $\om\in \Om$. Moreover, we have $\lb \snorm \fTnon_H^2 \leq \Rx L {T_n(\om)}0$ for all $n\geq 1$, $\lb>0$, 
and $\om \in \Om$, and 
consequently the ``local Lipschitz continuity'' of the $L$-risk established in 
\cite[Lemma 25]{ChSt04b} together with Theorem \ref{infinite:general-repres-co2} yields
\begin{eqnarray*}
&&
\bigl| \RP L \fTnon - \RP L \fP\bigr| \\
&\leq &
c_p \Bigl( |P|_{p-1} + \snorm\fTnon_\infty^{p-1} + \snorm\fP_\infty^{p-1}+1     \Bigr) \inorm{\fTnon-\fP}\\
& \leq &
\frac{c_p}\lb \biggl( 2 + \Bigl( \frac{\Rx L {T_n(\om)}0}{\lb}    \Bigr)^{\frac{p-1}2} + \lb^{-\frac{p-1}2}  \biggr) \snorm{\E_{T_n(\om)}h_\lb\Phi-\E_P h_\lb\Phi}_H
\end{eqnarray*}
for all $n\geq 1$, $\lb>0$, 
and $\om \in \Om$. Let us fix an $\e>0$. For $\lb\in(0,\e]$ and $n\geq 1$ we then obtain
\begin{eqnarray*}
&&
\mu \biggl(\Bigl\{  \om\in \Om:  \sup_{m\geq n} \bigl| \RP L \fTnom - \RP L \fP\bigr|  \geq  \e \Bigr\}   \biggr)\\
& \leq &
\mu \biggl(\!\biggl\{  \om\in \Om:  \sup_{m\geq n} \biggl( 2 + \Bigl( \frac{\Rx L {T_m(\om)}0}{\lb}    \Bigr)^{\frac{p-1}2} + \lb^{-\frac{p-1}2}  \biggr)\snorm{\E_{T_m(\om)}h_\lb\Phi-\E_P h_\lb\Phi}_H  \geq  \frac{\lb^2}{c_p} \biggr\} \!  \biggr)\\
& =: \!\!& F(\lb,n)\, .
\end{eqnarray*}
Moreover, Theorem \ref{infinite:general-repres-co2} ensures $h_\lb \in \Lx 1 P$ for all $\lb>0$ and hence Lemma \ref{lln-Hilbert} together with 
(\ref{asym-h1}) shows $\lim_{n\to \infty}F(\lb,n)= 0$ for all $\lb\in (0,\e]$.
Now the rest of the proof is analogous to the proof of Theorem \ref{asymp-learn}.
\end{proofof}

%%%%%%%%%%%%%%%%%%%%%%%%%%%%%%%%%%%%%%%%%%%%%%%%%%%%%%%%%%%%%%%%5555

% svm sec

%%%%%%%%%%%%%%%%%%%%%%%%%%%%%%%%%%%%%%%%%%%%%%%%%%%%%%%%%%%%%%%%%%%%

\subsection{Proofs from Subsection \ref{mixing-coeff}}

\begin{proofof}{Proposition \ref{mix-implies-wlln}}
$ii) \Rightarrow i)$. Follows from Theorem \ref{exists-limit-distrib-th}.\\
$i) \Rightarrow ii)$. Let $P$ be the stationary mean of $(\ca Z,\mu)$. 
As in the proof of Proposition \ref{uncor-wllne} we then find an 
$n_0\geq 1$ such that 
for all $n\geq n_0$ we have 
$$
\mu\biggl( \Bigl\{   \om \in \Om: \Bigl|   \frac 1 n \sum_{i=1}^n \eins_B\circ Z_i(\om) - P(B)\Bigr|\geq \e\Bigr\}\biggr) 
\leq 
4 \e^{-2} n^{-2} \E_\mu \Bigl(  \sum_{i=1}^n \bigl(\eins_B\circ Z_i - \E_\mu \eins_B\circ Z_i  \bigr)  \Bigr)^2\, .
$$
Let us write $h_i :=  \eins_B\circ Z_i - \E_\mu \eins_B\circ Z_i$, $i\geq 1$. 
Then we have $\E_\mu h_i = 0$ and $h_i(\om)\in [-1,1]$ for all $i\geq 1$ and all $\om \in \Om$. 
Consequently, (\ref{mix-rio}) gives
$R_\infty^\R(\ca Z,\mu,i,j) \leq 2\pi \a(\ca Z,\mu,i,j)$, $i,j\geq 1$, and hence we obtain
$$
\E_\mu \Bigl(  \sum_{i=1}^n \bigl(\eins_B\circ Z_i- \E_\mu \eins_B\circ Z_i  \bigr)  \Bigr)^2
 = 
\E_\mu \sum_{i=1}^n h_i^2 
+ 
2 \E_\mu \sum_{i=1}^n\sum_{j=1}^{i-1} h_ih_j
 \leq 
n + 4\pi  \sum_{i=1}^n\sum_{j=1}^{i-1} \a(\ca Z,\mu,i,j)\, .
$$
Combining the estimates then yields the assertion.
\end{proofof}

\subsection{Proofs from Subsection \ref{mixing-consist}}

\begin{proofof}{of Theorem \ref{rate-learn}}
Let $\ca B$ be the $\s$-algebra of $Z$. 
We write $P_n (B):= \frac 1 n \sum_{i=1}^n \mu(Z_i\in B)$ for  $B\in \B$ and $n\geq 1$.
Then $P_n$ is obviously a probability measure on $\ca B$ for all $n\geq 1$.
Let us first show that 
\begin{equation}\label{rate-learn-h4}
\lim_{n\to \infty} \RP L \fPnn = \RPB L\, .
\end{equation}
To this end we first observe that the assumption (\ref{rate-learn-h1}) yields
\begin{eqnarray} \nonumber
\RP L \fPnn 
& \leq & 
\lb_n \snorm \fPnn_H^2+\Rx L {P_n}\fPnn + C \inorm{L\circ \fPnn} n^{-\a}\\ \nonumber
& \leq &
\lb_n \snorm \fPn_H^2+\Rx L {P_n}\fPn + C \inorm{L\circ \fPnn} n^{-\a}\\ \label{rate-learn-h5}
& \leq &
\lb_n \snorm \fPn_H^2+\RP L \fPn + C n^{-\a} \bigl( \inorm{L\circ \fPn} + \inorm{L\circ \fPnn}  \bigr)
\end{eqnarray}
for all $n\geq 1$. Now  $\RPxB LH = \RPB L$ together with $\lb_n\to 0$ yields $\lb_n \snorm \fPn_H^2+\RP L \fPn\to \RPB L$.
Moreover, for every distribution $Q$ on $Z$ we have
$$
\inorm{L\circ \fQ} \Leq c + |L|_{\inorm\fQ,1}\inorm\fQ \Leq c + |L|_{B_{\lb},1} B_{\lb}
$$
by (\ref{loc-lip-imply-nemits}) and Theorem \ref{infinite:exist-and-unique}. In addition, 
$(|L|_{B_{\lb_n},1})$ is a non-decreasing sequence and the sequence $(B_{\lb_n})$ is 
dominated by the sequence $(\lb_n^{-1/2})$. Consequently, 
(\ref{rate-learn-h3}) implies 
$n^{-\a}|L|_{B_{\lb_n},1} B_{\lb_n}\to 0$ and hence we find (\ref{rate-learn-h4}).
Let us now fix an $\e>0$. Then Theorem \ref{infinite:general-repres-co1} and Markov's inequality yield
\begin{eqnarray*}
&&
\mu \biggl(\Bigl\{  \om\in \Om:  \bigl| \RP L \fTno - \RP L \fPnn\bigr|  \geq  \e \Bigr\}   \biggr)\\
& \leq &
\mu \biggl(\Bigl\{  \om\in \Om:  |L|_{B_{\lb_n},1}\,\, \inorm{\fTno-\fPnn}   \geq  \e \Bigr\}   \biggr)\\
& \leq &
\mu \biggl(\Bigl\{  \om\in \Om:  \inorm k |L|_{B_{\lb_n},1}\,\, \mnorm{ \E_{T_n(\om)}h_n\Phi-\E_{P_n} h_n\Phi}_{H}  \geq  \e \lb_n \Bigr\}   \biggr)\\
& \leq &
\frac{\snorm k_\infty^2  |L|_{B_{\lb_n},1}^2}{\e^2 \lb_n^2} \E_{\om\sim \mu}\mnorm{ \E_{T_n(\om)}h_n\Phi-\E_{P_n} h_n\Phi}_{H}^2
\end{eqnarray*}
where $h_n$ is the function according to Theorem \ref{infinite:general-repres-co1} for the distribution $P_n$ and the regularization parameter $\lb_n$.
Let us define 
$$
g_{n,i} := (h_n\P) \circ (X_i,Y_i) - \E_\mu  (h_n\P) \circ (X_i,Y_i)
$$  
for $n\geq 1$ and $i=1,\dots,n$.
Then we have $\E_\mu g_{n,i} = 0$ and Theorem \ref{infinite:general-repres-co1} yields
$$
\inorm{g_{n,i}} 
\Leq 
2 \sup_{\om \in \Om} \snorm{(h_n \P)\circ (X_i,Y_i)(\om)}_H
\Leq 2\, \inorm{h_n}\, \inorm k
\Leq 2\, \inorm k |L|_{B_{\lb_n},1}\, .
$$
Consequently, (\ref{mix-rio}) and (\ref{grothendieck-cor}) show that there exists a universal constant $c\geq 1$ such that
\begin{eqnarray*}
&&
\E_{\om\sim \mu}\mnorm{ \E_{T_n(\om)}h_n\Phi-\E_{P_n} h_n\Phi}_{H}^2\\
& = & 
{n^{-2}} \,\E_{\om\sim \mu} \norm{\sum_{i=1}^n (h_n \P)\circ (X_i,Y_i)(\om) - \E_\mu (h_n \P)\circ (X_i,Y_i)}_H^2\\
& = &
n^{-2} \sum_{i=1}^n \E_\mu \langle g_{n,i},g_{n,i}\rangle + 2 n^{-2} \sum_{i=1}^n \sum_{j=1}^{i-1}\E_\mu \langle g_{n,i},g_{n,j}\rangle\\
& \leq &
n^{-2} \sum_{i=1}^n \snorm{g_{n,i}}_\infty^2  
+ 2 n^{-2} \sum_{i=1}^n \sum_{j=1}^{i-1}R_\infty^H(\ca Z,\mu,i,j)\inorm{g_{n,i}}\inorm{g_{n,j}} \\
& \leq & 
 4 n^{-1} \snorm k_\infty^2 |L|_{B_{\lb_n},1}^2 + c\, \snorm k_\infty^2 |L|_{B_{\lb_n},1}^2 n^{-2} \sum_{i=1}^n \sum_{j=1}^{i-1}\a(\ca Z,\mu,i,j)
\end{eqnarray*}
for all $n\geq 1$. By combining all estimates and using (\ref{rate-learn-h3}) we then obtain the assertion.
\end{proofof}

\begin{proofof}{Theorem \ref{rate-learn2}}
Without loss of generality we assume $\inorm k\leq 1$ and $ |\mu_{(X_i,Y_i)}|_q \leq 1$ for all $i\geq 1$.
In addition, we can obviously, also assume $\lb_n\in (0,1]$ for all $n\geq 1$.
Now, we define $P_n (B):= \frac 1 n \sum_{i=1}^n \mu(Z_i\in B)$ for  measurable $B\subset  X\times \R$ and $n\geq 1$.
For $r\in [1,q]$ a simple calculation then shows 
\begin{equation}\label{rate-lern-hxxx}
|P_n|_r^r 
= 
\int_{X\times \R} |y|^r dP_n(x,y) 
= 
\frac 1 n \sum_{i=1}^n \int_{X\times \R} |y|^r d\mu_{(X_i,Y_i)}(x,y) 
= 
\frac 1 n \sum_{i=1}^n |\mu_{(X_i,Y_i)}|_r^r \leq 1\, .
\end{equation}
Moreover, \cite[Thm.~23.8]{Bauer01} together with Fatou's lemma yields
\begin{eqnarray*}
|P|_r^r 
\Eq
\int_0^\infty P\bigl(\{ (x,y)\in X\times \R: |y|^r\geq t        \}\bigr)dt
&=&
\int_0^\infty \lim_{n\to \infty} \frac 1 n\sum_{i=1}^n \mu \bigl(\{ \om\in\Om: |Y_i(\om)|^r\geq t        \}\bigr)dt\\
&\leq &
\liminf_{n\to \infty} \int_0^\infty  \frac 1 n\sum_{i=1}^n \mu \bigl(\{ \om\in\Om: |Y_i(\om)|^r\geq t        \}\bigr)dt\\
&\leq&
\liminf_{n\to \infty} \frac 1 n \sum_{i=1}^n |\mu_{(X_i,Y_i)}|_r^r \\
&\leq&  1\, .
\end{eqnarray*}
Having finished these preparations we can now begin with the actual proof. To this end  first observe that we obtain
$$
\RP L \fPnn 
\Leq
\lb_n \snorm \fPn_H^2+\RP L \fPn + C n^{-\a} \bigl( \snorm{L\circ \fPn}_{\Lx 1 P} + \snorm{L\circ \fPnn}_{\Lx 1 P}  \bigr)
$$
as in (\ref{rate-learn-h5}). Moreover, we obviously have $\snorm{L\circ \fPn}_{\Lx 1 P} = \RP L \fPn \leq \RP L 0 \leq c$ for some constant $c$ independent
of $n$. In addition, (\ref{rate-lern-hxxx}) yields
\begin{eqnarray*}
\snorm{L\circ \fPnn}_{\Lx 1 P}
& = &
 \int_{X\times Y} \psi\bigl( y-\fPnn(x)\bigr) dP(x,y)\\
& \leq & 
\tilde c_p  \int_{X\times Y} 1 +|y|^p + |\fPnn(x)|^p  dP(x,y)\\
& \leq & 
2\tilde c_p + \tilde c_p \snorm{\fPnn}_\infty^p\\
& \leq &
2\tilde c_p + \tilde c_p \snorm k_\infty^p \biggl(\frac {\Rx L{P_n} 0}{\lb_n}  \biggr)^{\frac p 2}\\
& \leq & 
2 c_p + c_p   \lb_n^{-\frac p2}\, ,
\end{eqnarray*}
where $\tilde c_p$ and $c_p$ are  constants only depending on $L$ and $p$. Combining these estimates with
 $\lim_{\lb\to 0}\RP L \fP = \RPxB LH = \RPB L$ and
 (\ref{rate-learn2-h3}) we then obtain 
$\lim_{n\to \infty} \RP L \fPnn = \RPB L$.\\
Now let us assume that we have an $\om \in \Om$ and an $n\geq 1$ with $\snorm{\fTno-\fPnn}_H\leq 1$.
For $p>1$
a simple calculation using \cite[Lemma 25]{ChSt04b} and $\lb_n\leq 1$ then shows 
\begin{eqnarray*}
&&
\bigl| \RP L \fPnn - \RP L \fTno  \bigr|\\
& \leq &
C_p \,\Bigl( |P|_{p-1}^{p-1} + \snorm\fPnn_\infty^{p-1} + \snorm\fTno_\infty^{p-1}+1 \Bigr)\, \inorm{\fPnn-\fTno}\\
&\leq &
C_p \,\Bigl( 2 + 2\snorm\fPnn_\infty^{p-1} + \snorm{\fTno-\fPnn}_\infty^{p-1} \Bigr)\, \snorm{\fPnn-\fTno}_H \\
& \leq &
C_p \,\Biggl( 3 + 2\biggl( \frac{\Rx L{P_n} 0}{\lb_n} \biggr)^{\frac {p-1}2}   \Biggr) \,\snorm{\fPnn-\fTno}_H \\
& \leq &
\bar C_p \, \lb_n^{-\frac{p-1}2}  \,\snorm{\fPnn-\fTno}_H \\
& \leq &
\bar C_p \, \lb_n^{-\frac{p+1}2} \mnorm{ \E_{T_n(\om)}h_n\Phi-\E_{P_n} h_n\Phi}_{H}   \, ,
\end{eqnarray*}
where $C_p\geq 1$ and $\bar C_p\geq 1$ are constants only depending on $p$ and $L$, and 
$h_n$ is the function according to Theorem \ref{infinite:general-repres-co1} for the distribution $P_n$ and the regularization parameter $\lb_n$.
Moreover, for $p=1$ we see that $L$ is Lipschitz continuous by \cite[Lemma 4]{ChSt04b} and hence 
the above estimate is also true in this case.
Let us now define 
$$
g_{n,i} := (h_n\P) \circ (X_i,Y_i) - \E_\mu  (h_n\P) \circ (X_i,Y_i)
$$ 
for $n\geq 1$ and $i=1,\dots,n$.
Then we have $\E_\mu g_{n,i} = 0$ and for $s:= \frac q{p-1}$ we find
\begin{eqnarray*}
\snorm{g_{n,i}}_{\Lx s \mu} 
\Leq
2 \snorm{h_n}_{\Lx s {\mu_{(X_i,Y_i)}}} 
& \leq &
128c_{L} \,\bigl( 1+ |\mu_{(X_i,Y_i)}|_{q}^{p-1} + \snorm\fPnn_\infty^{p-1} \bigr) \\
& \leq & 
128c_{L} \,\Biggl( 2 + \biggl( \frac{\Rx L{P_n} 0}{\lb_n} \biggr)^{\frac {p-1}2}   \Biggr)\\
& \leq &
C_{L,p} \, \lb_n^{-\frac{p-1}2}\, ,
\end{eqnarray*}
where $C_{L,p}>0$ is  a constant only depending on $L$ and $p$. For $\d>0$  
Markov's inequality together with $s\geq 2$, (\ref{mix-rio}) and (\ref{grothendieck-cor})
thus yields
\begin{eqnarray*}
&&
\mu \biggl(\Bigl\{  \om\in \Om:   \mnorm{ \E_{T_n(\om)}h_n\Phi-\E_{P_n} h_n\Phi}_{H}  \geq  \d  \Bigr\}   \biggr)\\
& \leq & 
\frac 1{\d^2n^2} \biggl(\sum_{i=1}^n \E_\mu \langle g_{n,i},g_{n,i}\rangle + 2  \sum_{i=1}^n \sum_{j=1}^{i-1}\E_\mu \langle g_{n,i},g_{n,j}\rangle\biggr)\\
& \leq &
\frac 1{\d^2n^2} \biggl(\sum_{i=1}^n  \snorm{g_{n,i}}_{\Lx s \mu}^2+ 2  \sum_{i=1}^n \sum_{j=1}^{i-1}R_s^H(\ca Z,\mu,i,j) \snorm{g_{n,i}}_{\Lx s \mu}\snorm{g_{n,j}}_{\Lx s \mu}\biggr)\\
& \leq &
\frac {\bar C_{L,p} }{\d^2 \lb_n^{p-1}n} + \frac {\bar C_{L,p} }{\d^2 \lb_n^{p-1}n^2} \sum_{i=1}^n\sum_{j=1}^{i-1} \a^{1-\frac{2p-2}q}(\ca Z,\mu,i,j) \p_{\mathrm{sym}}^{\frac{2p-2}q}(\ca Z,\mu,i,j)  \\
& \leq &
\frac {(1+C)\bar C_{L,p} }{\d^2 \lb_n^{p-1}n^\b}\, ,
\end{eqnarray*}
where $\bar C_{L,p}>0$ is  another  constant only depending on $L$ and $p$. 
Let us now fix an $\e\in (0,1]$.
For 
$\om \in \Om$ and  $n\geq 1$  with 
$$
\mnorm{ \E_{T_n(\om)}h_n\Phi-\E_{P_n} h_n\Phi}_{H} < \frac {\e\lb_n^{(p+1)/2}}{\bar C_p}
$$
we then have $\snorm{\fTno-\fPnn}_H< \frac {\e\lb_n^{(p-1)/2}}{\bar C_p}\leq 1$, and consequently we can conclude
\begin{eqnarray*}
&&
\mu \biggl(\Bigl\{  \om\in \Om:   \bigl| \RP L \fPnn - \RP L \fTno  \bigr| <  \e  \Bigr\}   \biggr)\\
& \geq &
\mu \biggl(\Bigl\{  \om\in \Om:   \mnorm{ \E_{T_n(\om)}h_n\Phi-\E_{P_n} h_n\Phi}_{H} < \frac {\e\lb_n^{(p+1)/2}}{\bar C_p}  \Bigr\}   \biggr)\\
& \geq &
1 - \frac {(1+C)\bar C_{L,p} \bar C_p^2 }{\e^2 \lb_n^{2p}n^\b}\, .
\end{eqnarray*}
Using (\ref{rate-learn2-h3xx}) then yields the assertion.
\end{proofof}

%%%%%%%%%%%%%%%%%%%%%%%%%%%%%%%%%%%%%%%%%%%%%%%%%%%%%%%%%%%%%%%%5555

% old stuff
 %%%%%%%%%%%%%%%%%%%%%%%%%%%%%%%%%%%%%%%%%%%%%%%%%%%%%%%%%%%%%%%%%%%%

\rem{

\begin{lemma}\label{corr-lem}
Let $\Om$ and $Z$ be  measurable spaces, $\mu$ be  a probability measure on $\Om$, and
$(Z_i)_{i\geq 1}$ be a sequence of random variables $Z_i:\Om\to Z$. Furthermore, let $H$ be a Hilbert space and $f:Z\to H$ be a
bounded, measurable function. Then for all $\e>0$ and all $n\geq 1$ we have
\begin{equation}\label{corr-lem-h1}
\mu\Bigl(  \om\in\Om: \norm{\frac 1 n \sum_{i=1}^n f\circ Z_i(\om)}_H \geq \e\biggr) \Leq \frac{\snorm f_\infty^2}{\e^2 n} + \frac 2{\e^2 n^2}\sum_{i=1}^n\sum_{j=1}^{i-1} \E_\mu \langle f\circ Z_i,f\circ Z_j\rangle_H\, .
\end{equation}
\end{lemma}

\begin{proof}
By Markov's inequality we have 
$$
\mu\Bigl(  \om\in\Om: \norm{\frac 1 n \sum_{i=1}^n f\circ Z_i(\om)}_H \geq \e\biggr) \Leq {\e^{-2}n^{-2}} \,\E_\mu\norm{\sum_{i=1}^n f\circ Z_i}_H^2\, .
$$
In addition, we have 
\begin{eqnarray*}
\E_\mu\norm{\sum_{i=1}^n f\circ Z_i}_H^2
& = &
\E_\mu \sum_{i=1}^n \langle f\circ Z_i,f\circ Z_i\rangle + 2 \E_\mu \sum_{i=1}^n\sum_{j=1}^{i-1} \langle f\circ Z_i,f\circ Z_j\rangle\\
& \leq &
n \snorm f_\infty^2 + 2  \sum_{i=1}^n\sum_{j=1}^{i-1} \E_\mu \langle f\circ Z_i,f\circ Z_j\rangle\, ,
\end{eqnarray*}
and hence we obtain the assertion.
\end{proof}

}

\rem{

\begin{proofof}{Theorem \ref{ergodic-char}}
Since $\ca Z$ is $\mu$-stationary the assertion on $P$ is trivial. Now let us fix an $f\in \Lxx 1 P$.
Then Birkhoff's ergodic theorem (see e.g.~\cite[Thm.~4.4]{Krengel85}) shows that
$$
\lim_{n\to \infty}\frac 1 n \sum_{i=1}^n f\circ Z_i(\om) = \E_\mu\bigl(f\circ Z_j|  \ca Z^{-1}(\ca I) \bigr)(\om)
$$
for $\mu$-almost all $\om\in \Om$ and all $j\geq 1$. \\
$ii)\Rightarrow i).$ trivial.\\
$i)\Rightarrow iii).$ If $\ca Z$ satisfies the $\mu$-SLLNE then the above equation shows 
$\E_\mu(\eins_B\circ Z_j|  \ca Z^{-1}(\ca I)) = c_B$ for all measurable $B\in \ca B$ and all $j\geq 1$. Consequently, the conditional expectation
$\E_\mu(A|  \ca Z^{-1}(\ca I))$ is constant for all $A\in \ca Z^{-1}(\ca B^\N)$, and hence $\ca Z^{-1}(\ca I)$ is $\mu$-trivial. \fix{doublecheck}\\
$iii)\Rightarrow ii).$ If $\ca Z$ is $\mu$-ergodic then $\ca Z^{-1}(\ca I)$ is $\mu$-trivial and hence the above conditional 
expectation is constant.
\end{proofof}

\begin{proofof}{Lemma \ref{lln-lem}}
Using  \cite[Thm.~23.8]{Bauer01}, the definition of $P$ and Fatou's Lemma  we find 
$$
\E_P|f| 
= 
\int_0^\infty P \bigl( |f|\geq t\bigr) dt 
= 
\int_0^\infty \lim_{n\to \infty} \frac 1 n \sum_{i=1}^n \mu \bigl( |f|\circ Z_i \geq t\bigr) dt 
< 
\liminf_{n\to \infty} \frac 1 n \sum_{i=1}^n\E_\mu |f|\circ Z_i \, ,
$$
and hence we obtain the assertion.
\end{proofof}

\begin{proofof}{Theorem \ref{equiv-ams-llne}}
We have already seen that the $\mu$-SLLNE implies the $\mu$-AMS. The converse implication follows from a theorem by Gray and Kieffer
(see e.g.~\cite [p.~33]{Krengel85}) and Birkhoff's ergodic theorem (see e.g.~\cite[p.~9]{Krengel85}).
\end{proofof}

}

\rem{
\begin{proofof}{Lemma \ref{ergodic}}
Obviously, it suffices to show  (\ref{slln-l1}) for all $f\in \Lxx 1 P$ and $P:= \mu_{Z_1}$. To this end we define the process 
$\ca F:=(F_i)$ by $F_i:= f\circ Z_i$, $i\geq 1$. For measurable $B\subset \R$ and $i\geq 1$ we then have 
$F_i^{-1}(B) = Z_i^{-1}(f^{-1}(B))$, and hence we obtain
\begin{eqnarray*}
\mu_{(F_{i_1+i},\dots,F_{i_n+i})} (B_1\times \dots \times B_n) 
& = &
\mu_{(Z_{i_1+i},\dots,Z_{i_n+i})} \bigl(f^{-1}(B_1)\times \dots \times f^{-1}(B_n)\bigr)  \\
& = &
\mu_{(Z_{i_1},\dots,Z_{i_n})} \bigl(f^{-1}(B_1)\times \dots \times f^{-1}(B_n)\bigr)  \\
& = &
\mu_{(F_{i_1},\dots,F_{i_n})} (B_1\times \dots \times B_n) 
\end{eqnarray*}
by the $\mu$-stationarity of $\ca Z$. This shows that $\ca F$ is also $\mu$-stationary. Let us define $f_\infty:Z^\N\to Z^\N$ by 
$(z_i)\mapsto (f(z_{i}))$. In addition, let $S_Z:Z^\N\to Z^\N$ and $S_\R:\R^\N\to \R^\N$ be the shift operators and 
$\ca I_Z$ and $\ca I_\R$ be the $\s$-algebras of their invariant sets. Then we obviously have $f_\infty \circ S_Z = S_\R\circ f_\infty$. 
Let us show $f_\infty^{-1} (\ca I_\R)\subset \ca I_Z$. To this end let us fix an $A\in f_\infty^{-1} (\ca I_\R)$. By definition there then exists a 
$B\in \ca B^\N$ with $A= f_\infty^{-1}(B)$ and $S_\R^{-1}(B)=B$. This gives
$$
S_Z^{-1}(A) = S_Z^{-1}(f_\infty^{-1}(B)) = (f_\infty \circ S_Z)^{-1}(B) = (S_\R\circ f_\infty)^{-1}(B) = f_\infty^{-1}(S_\R^{-1}(B)) = A\, ,
$$
i.e.~we indeed have $A\in \ca I_Z$.  We with finding we then obtain
$$
\ca I(\ca F) = \bigl\{ \ca F^{-1}(B): B\in \ca I_\R\bigr\} = \bigl\{ \ca Z^{-1} (f_\infty^{-1}(B)): B\in \ca I_\R\bigr\} 
\subset 
\bigl\{ \ca Z^{-1} (A): A\in \ca I_Z\bigr\} = \ca I(\ca Z)\, ,
$$
and consequently, the $\mu$-ergodicity of $\ca Z$ gives the $\mu$-ergodicity of $\ca F$.
In addition we have $F_i \in \Lxx 1 \mu$, and therefore Birkhoff's ergodic theorem yields 
$$
\lim_{n\to \infty} \frac 1 n \sum_{i=1}^n F_i (\om) = \E_\mu F_1
$$ 
for $\mu$-almost all $\om \in \Om$. Clearly, the latter is the desired expression  (\ref{slln-l1}).
\end{proofof}
}

\rem{
\begin{proof}
Obviously, we have $\lim_{n\to \infty} \frac 1 n\sum_{i=1}^{n_0} a_i = 0$, and hence we find
$$
\Bigl| \frac 1 {n} \sum_{i=n_0+1}^n a_i  - a  \Bigr|
\Leq 
\Bigl| \frac 1 {n} \sum_{i=1}^{n_0} a_i   \Bigr| + \Bigl| \frac 1 {n} \sum_{i=1}^n a_i  - a  \Bigr| \, \to \, 0
$$
for $n\to \infty$. With this preliminary observation we then obtain
\begin{eqnarray*}
\Bigl| \frac 1 {n-n_0} \sum_{i=n_0+1}^n a_i  - a  \Bigr|
& \leq &
\Bigl| \frac 1 {n-n_0} \sum_{i=n_0+1}^n a_i  -  \frac 1 {n} \sum_{i=n_0+1}^n a_i   \Bigr| + \Bigl| \frac 1 {n} \sum_{i=n_0+1}^n a_i  - a  \Bigr|\\
& = & 
\frac {n_0}{n-n_0}\, \Bigl| \frac 1 {n} \sum_{i=n_0+1}^n a_i   \Bigr| + \Bigl| \frac 1 {n} \sum_{i=n_0+1}^n a_i  - a  \Bigr|\, \to \, 0
\end{eqnarray*}
for $n\to \infty$.
\end{proof}
}

\bibliographystyle{unsrt}

\small{
%\bibliography{svm}
\bibliography{../../../literatur-DB/svm}
}

\end{document}